\documentclass{article}
\usepackage{pifont}
\usepackage{microtype}
\usepackage{graphicx}
\usepackage{subcaption}
\usepackage{booktabs} % for professional tables
\usepackage{makecell}
\usepackage{hyperref}
\usepackage{tikz}
\usetikzlibrary{arrows.meta,positioning,shapes.geometric,calc,fit}

\usepackage[preprint]{icml2026}

\usepackage{amsmath}
\usepackage{amssymb}
\usepackage{mathtools}
\usepackage{amsthm}

\usepackage[capitalize,noabbrev]{cleveref}

\theoremstyle{plain}
\newtheorem{theorem}{Theorem}

\newtheorem{lemma}{Lemma}

\theoremstyle{definition}
\newtheorem{definition}{Definition}
\newtheorem{assumption}{Assumption}
\theoremstyle{remark}
\newtheorem{remark}{Remark}

\usepackage[textsize=tiny]{todonotes}

\icmltitlerunning{Submission and Formatting Instructions for ICML 2026}

\begin{document}

\twocolumn[
  \icmltitle{Finding Differentially Private Second Order Stationary Points \\ in Stochastic Minimax Optimization}

  % It is OKAY to include author information, even for blind submissions: the
  % style file will automatically remove it for you unless you've provided
  % the [accepted] option to the icml2026 package.

  % List of affiliations: The first argument should be a (short) identifier you
  % will use later to specify author affiliations Academic affiliations
  % should list Department, University, City, Region, Country Industry
  % affiliations should list Company, City, Region, Country

  % You can specify symbols, otherwise they are numbered in order. Ideally, you
  % should not use this facility. Affiliations will be numbered in order of
  % appearance and this is the preferred way.
  \icmlsetsymbol{equal}{*}

  \begin{icmlauthorlist}
    \icmlauthor{Difei Xu}{equal,kaust,comp}
    \icmlauthor{Youming Tao}{equal,comp,tuberlin}
    \icmlauthor{Meng Ding}{comp,buffalo}
    \icmlauthor{Chenglin Fan}{comp,snu}
    % \icmlauthor{Firstname5 Lastname5}{kaust}
    % \icmlauthor{Firstname6 Lastname6}{sch,kaust,comp}
    % \icmlauthor{Firstname7 Lastname7}{comp}
    %\icmlauthor{}{sch}
    % \icmlauthor{Firstname8 Lastname8}{sch}
    \icmlauthor{Di Wang}{kaust,comp}
    %\icmlauthor{}{sch}
    %\icmlauthor{}{sch}
  \end{icmlauthorlist}

  \icmlaffiliation{kaust}{King Abdullah University of Science and Technology}
  \icmlaffiliation{comp}{Provable Responsible AI and Data Analytics (PRADA) Lab}
  \icmlaffiliation{buffalo}{State University of New York at Buffalo}
  \icmlaffiliation{snu}{Seoul National University}
  \icmlaffiliation{tuberlin}{ Technische Universit\"at Berlin}

  % \icmlcorrespondingauthor{Firstname1 Lastname1}{first1.last1@xxx.edu}
  \icmlcorrespondingauthor{Di Wang}{di.wang@kaust.edu.sa}

  % You may provide any keywords that you find helpful for describing your
  % paper; these are used to populate the "keywords" metadata in the PDF but
  % will not be shown in the document
  \icmlkeywords{Machine Learning, ICML}

  \vskip 0.3in
]

% this must go after the closing bracket ] following \twocolumn[ ...

% This command actually creates the footnote in the first column listing the
% affiliations and the copyright notice. The command takes one argument, which
% is text to display at the start of the footnote. The \icmlEqualContribution
% command is standard text for equal contribution. Remove it (just {}) if you
% do not need this facility.

% Use ONE of the following lines. DO NOT remove the command.
% If you have no special notice, KEEP empty braces:
\printAffiliationsAndNotice{}  % no special notice (required even if empty)
% Or, if applicable, use the standard equal contribution text:
% \printAffiliationsAndNotice{\icmlEqualContribution}

% \begin{abstract}
%     We investigate the problem of finding second-order stationary points in differentially private(DP) stochastic non-convex minimax optimization. We are the first paper to give the detailed depiction for loss function both in empirical risk and population forms. Existing methods suffer from several key limitations, especially for the population loss: how to deal with the trade-off between the batch size allocation, i.e. $S_1$ and $S_2$ and the total iteration number $T$. To restrict our analysis on $q$ period iterations instead of $T$, we simplify the analysis of the total variance. Previous work mainly finds the SOSP through the second order method with Hessian matrix, which cannot be easily applied in private cases because of the noise accumulation. Our novel first order method reached the same convergence rate as \cite{liu2024adaptive}'s result with $\frac{1}{n^{1/3}}+(\frac{\sqrt{d}}{n \varepsilon})^{\frac{1}{2}}$.
% \end{abstract}

\begin{abstract}

We provide the first study of the problem of finding differentially private (DP) second-order stationary points (SOSP) in  stochastic (non-convex) minimax optimization. 
Existing literature either focuses only on first-order stationary points for minimax problems or on SOSP for classical stochastic minimization problems. This work provides, for the first time, a unified and detailed treatment of both empirical  and population risks. Specifically, we propose a purely first-order method that combines a nested gradient descent--ascent scheme with SPIDER-style variance reduction and Gaussian perturbations to ensure privacy. A key technical device is a block-wise ($q$-period) analysis that controls the accumulation of stochastic variance and privacy noise without summing over the full iteration horizon, yielding a unified treatment of both empirical-risk and population formulations. Under standard smoothness, Hessian-Lipschitzness, and strong concavity assumptions, we establish high-probability guarantees for reaching an $(\alpha,\sqrt{\rho_\Phi \alpha})$-approximate second-order stationary point with $\alpha = \mathcal{O}(  (\frac{\sqrt{d}}{n\varepsilon})^{2/3})$
for empirical risk objectives and $\mathcal{O}(\frac{1}{n^{1/3}} + (\frac{\sqrt{d}}{n\varepsilon})^{1/2})$ for population objectives, matching the best known rates for private first-order stationarity. 
\end{abstract}

\section{Introduction}

Stochastic optimization plays a central role in modern machine learning. Among its variants, minimax optimization—an important instance of hierarchical optimization—has found broad applications across diverse machine learning problems, including Generative Adversarial Networks \citep{goodfellow2014generative}, adversarial training \citep{madry2017towards}, multi-agent reinforcement learning \citep{wai2018multi}, as well as meta-learning and hyperparameter optimization. In recent years, extensive research efforts have been devoted to the theoretical and algorithmic study of minimax optimization in various machine learning settings. A wide range of deterministic and stochastic methods have been proposed, accompanied by both asymptotic and non-asymptotic convergence analyses, such as Gradient Descent Ascent (GDA) \citep{du2019linear,nemirovski2004prox} and Stochastic Gradient Descent Ascent (SGDA). Some approaches employ a single-loop update scheme \citep{heusel2017gans}, while others adopt a nested-loop structure that updates the inner variable $y$ more frequently in order to obtain a more accurate approximation of the maximizer $y^*(x)$ \citep{jin2020local}.

Despite these advances, the majority of existing work focuses on convergence to first-order stationary points. In nonconvex settings, however, such a notion of optimality is often insufficient, as a first-order stationary point may correspond to a local minimum, a saddle point, or even a local maximum. This limitation has motivated growing interest in second-order stationary points, which characterize local minima and thus provide a stronger notion of solution quality in nonconvex optimization. Since finding a global minimum in general nonconvex problems is typically NP-hard \citep{hillar2013most}, practical algorithms often aim to identify local minima instead. Moreover, in certain machine learning applications—such as tensor decomposition \citep{ge2015escaping} and matrix sensing \citep{bhojanapalli2016global}—all local minima are global minima, rendering the pursuit of SOSP particularly critical.

\begin{table*}[htbp]
    \centering
    \caption{Summary of prior work on DP both Minimax and Minimal problems compared with our proposed approach. Here, \( d \) denotes the dimensionality of the parameter variable and \( n \) the sample size. When applying minimax algorithms to SOSP  problems.}
    \renewcommand{\arraystretch}{1.5} 
        \begin{tabular}{c|c|c|c|c}
            \toprule
        \textbf{Methods} & \textbf{Problem} &   \textbf{Objective function}   & \textbf{Utility bound} & \textbf{SOSP} \\
        \midrule
       
        % \hline
        DP SPIDER \citep{arora2023faster} &  Minimal   & Empirical & $\tilde{\mathcal{O}}((\frac{\sqrt{d}}{n\varepsilon})^{2/3})$ & \ding{53} \\
        % \hline
        Stochastic SPIDER \cite{liu2023private}  & Minimal  & Empirical &  $\mathcal{O}((\frac{\sqrt{d }}{n \varepsilon})^{2/3})$  & \ding{51} \\
        % \hline
        DP RGDA (This work) & Minimax  & Empirical &  $\mathcal{O}((\frac{\sqrt{d }}{n \varepsilon})^{2/3})$  & \ding{51} \\
        
         DP SGDA \citep{yang2022differentially}& Minimax  & Population  & $\mathcal{O}(\frac{1}{\sqrt{n}}+\frac{\sqrt{d}}{n \varepsilon})$& \ding{53} \\
        
        % \hline
        % non dp non-convex & revisits & \\
        % non dp non convex & constrained dro & \\
         Ada-DP-SPIDER \citep{tao2025second}& Minimal & Population & $\mathcal{O}(\frac{1}{n^{1/3}}+(\frac{\sqrt{d}}{n \varepsilon })^{2/5})$ & \ding{51} \\
        % \hline
         \makecell{Stochastic SPIDER with \\ Escaping Procedure \citep{liu2024adaptive}}& Minimal  &   Population & $\mathcal{O}(\frac{1}{n^{1/3}}+(\frac{\sqrt{d}}{n \varepsilon })^{1/2})$ & \ding{51} \\
        % \hline
        DP  RGDA  (This work)     & Minimax  & Population & $\mathcal{O}( \frac{1}{n^{1/3}}+(\frac{\sqrt{d }}{n \varepsilon})^{1/2})$  & \ding{51} \\ \bottomrule
\end{tabular}
    \label{table1}
\end{table*}

Because training data are typically sensitive, differential privacy (DP)~\citep{dwork2006calibrating} has become a standard requirement for learning with principled privacy guarantees. 
While hundreds of studies on differential privacy (DP) have been developed for empirical risk minimization (ERM), i.e., single-level minimization of an average loss over the last decade, recent attention has shifted toward privately finding second-order stationary points (DP-SOSP). On the other hand, while several recent studies have explored DP minimax optimization, all of them focus on either convex-concave settings \cite{rafique2022weakly,zhou2024differentially} or first-order stationary points \cite{zhang2025improved} (we defer a detailed summary of these minimization-focused rates to Table~\ref{table1}). Thus, to the best of our knowledge, no existing work addresses finding DP-SOSP for minimax optimization, which motivates this study.
% \di{While there are hundereds of studies on DP are developed for empirical risk minimization (ERM), i.e., single-level minimization of an average loss in the last decade, recently, more focus are paid to privately finding second-order stationary points (DP-SOSP). On the other side, while recently there are several studies on DP minimax optimization, all of them focus on either on convex-concave settings or the first order stationary points (we defer a detailed summary of these minimization-focused rates to Table~\ref{table1}). Thus, to the best of our knowledge, there is no existing work on finding DP-SOSP for minimax optimization, which motivates the study in this paper. 
 % polish and add reference herer}  

Compared to classical DP-SOSP for stochastic minimization, 
establishing DP-SOSP guarantees for nonconvex--strongly-concave minimax objectives introduces new technical obstacles: the target is the value function $\Phi(x)=\max_{y\in\mathcal{Y}} f(x,y)$, whose gradient depends on an inner maximizer $y^\star(x)$ that must be tracked approximately, and the privacy noise injected into both ascent and descent updates propagates through the nested dynamics, making it nontrivial to preserve the accuracy required for second-order stationarity of $\Phi$.
In this work,  we provide the first algorithmic framework that targets DP-SOSP for stochastic minimax optimization: our method combines SPIDER variance reduction for the coupled $(x,y)$ updates with a perturb-and-monitor saddle-escape mechanism based on iterate displacement, avoiding explicit Hessian computations of $\Phi$.
As a byproduct, when the minimax structure degenerates to a pure minimization problem, our framework recovers the state-of-the-art DP-SOSP guarantees for ERM and matches the best-known population rate of \citet{liu2024adaptive}.

Our contribution can be summarized as follows:
\begin{enumerate}
    \item \textbf{A Generic Framework for Stochastic nonconvex Minimax Optimization} We are the first to propose a generic framework that uses the SPIDER variance reduction technique to improve the bound for the differentially private gradient descent ascent method for nonconvex-strongly convex minimax optimization problems. We add an inner loop for the update of variable $y$ so that we can depict the convergence more conveniently. By introducing the inner iteration number $K$, a constant, our algorithm can achieve state-of-the-art results without the Hessian matrix, purely using gradient methods.
    \item \textbf{The Best Results as Finding the First-order Stationary Points} We prove that we achieve $\alpha$ second order stationary points with $\alpha = \tilde{O}((\frac{\sqrt{d}}{ n \varepsilon})^{2/3})$ for empirical loss and $\alpha = \tilde{O}(\frac{1}{n^{1/3}}+ (\frac{\sqrt{d}}{ n \varepsilon})^{1/2})$ for population loss functions, which matches the best results of finding the first-order stationary point in the same problem.
    \item We implement our algorithms on synthetic matrix sensing problems, demonstrating performance comparable to previous methods that only achieve first-order stationary points.  %\di{need details and your results.}
\end{enumerate}

\section{Preliminaries}
\subsection{Differential Privacy}
\begin{definition}[Differential Privacy \citep{dwork2006calibrating}]\label{def:dp}
	Given a data universe $\mathcal{X}$, we say that two datasets $S,S'\subseteq \mathcal{X}$ are neighbors if they differ by only one entry, which is denoted as $S \sim S'$. A randomized algorithm $\mathcal{A}$ is $(\varepsilon,\delta)$-differentially private (DP) if for all neighboring datasets $S,S'$ and for all events $E$ in the output space of $\mathcal{A}$, the following holds
	$$\mathbb{P}(\mathcal{A}(S)\in E)\leqslant e^{\varepsilon} \mathbb{P}(\mathcal{A}(S')\in E)+\delta.$$ 
 If $\delta=0$, we call algorithm $\mathcal{A}$ $\varepsilon$-DP. 
\end{definition}

\begin{definition}
    For  a given function $q: \mathcal{Z}\to \mathbb{R}^d$, we say $q$ has $\Delta_2(q)$ $\ell_2$-sensitivity if for any neighboring datasets $D,D'$ we have $\|q(D)-q(D')\|\leqslant \Delta_2(q)$.
\end{definition}
\begin{definition}
    For a given function $q : \mathcal{Z} \to \mathbb{R}^d$, the Gaussian mechanism is defined as $q(D) + \xi$ where $\xi \sim \mathcal{N}\left(0, \frac{\Delta_2^2(q) \log(1.25/\delta)}{\varepsilon^2} \mathbf{I}_d\right)$. Gaussian mechanism preserves $(\varepsilon, \delta)$-DP for $0 < \varepsilon, \delta < 1$.
\end{definition}

\subsection{Second Order Stationary Points for Minimax Optimization}
% \textbf{Problem Statements}
We will present the notations that will be useful for the statement of the problem setting and clarify the statement of the results. We assume that the function $f$ is twice differentiable.   Notation $\tilde{\mathcal{O}}$  means the complexity after hiding logarithmic terms. %\di{move the notaion to the begining, I cannot see where you use F-norm?} 

In this paper, we study the following stochastic minimax optimization problem,
\begin{equation}\label{population_loss}
    \min_{x \in \mathbb{R}^{d_1}} \max_{y \in \mathcal{Y} \subseteq \mathbb{R}^{d_2}} f(x, y) = \mathbb{E}[F(x, y; \xi )],
\end{equation}
where the function $f$ is smooth and possibly non-convex in variable $x$ while being smooth and strongly-concave in variable $y$, which is essential to make SOSP well defined. In addition, $\xi$ and $\xi_i$, which will be used below, are samples drawn from the data distribution $\mathcal{D}$.

Similarly, given a training dataset $S=\{\xi_i\}_{i=1}^n$, for the empirical loss, we have
\begin{equation}\label{erm}
\min_{x \in \mathbb{R}^{d_1}} \max_{y \in \mathcal{Y} \subseteq \mathbb{R}^{d_2}} f_S(x, y) = \frac{1}{n}\sum_{i=1}^{n}[F(x, y; \xi_i )], 
\end{equation}
where $\mathcal{Y}$ is a convex domain (not required to be compact). The loss function $f(x, y)$ is smooth and possibly nonconvex w.r.t. $x$, and is smooth and strongly-concave w.r.t. $y$.

To simplify the discussion below of our analysis, we need the following definition: we define $\Phi (x)$ as:
\begin{equation}\nonumber
        \Phi(x):=\max_{y \in \mathcal{Y} \subseteq \mathbb{R}^{d_2}} f(x, y).
\end{equation}
and for the empirical loss $\Phi_S(x):=\max_{y \in \mathcal{Y} \subseteq \mathbb{R}^{d_2}} f_S(x, y).$

Since the empirical form can be taken as a form of the uniform distribution over the dataset, we use $f$ and $\Phi$ for the general discussion, which should not cause any ambiguity.

\begin{definition}\cite{ijcai2025p741}
    A point $x$ is called $\alpha$-SOSP  if it satisfies the following expression:
$ \left\| \nabla \Phi (x) \right\|  \leqslant \alpha $ and $\lambda_{\text{min}}(\nabla^2\Phi(x)) \geqslant -\alpha_H$, where $\lambda_{\text{min}}$ denotes the smallest eigenvalue and $\alpha_H = \sqrt{\rho_{\Phi}\alpha}$ and $\rho_{\Phi}$ is the Lipschitz constant of $\nabla^2\Phi(x)$.
\end{definition}

\begin{algorithm}[htbp]
\caption{Clipping $(x, C)$}
\begin{algorithmic}[1]
\REQUIRE $x$ and clipping threshold $C > 0$.
\STATE $\hat{x} = \min\left\{\frac{C}{\|x\|_2}, 1\right\}x$
\ENSURE $\hat{x}$.
\end{algorithmic}
\end{algorithm}

Having defined SOSP, we introduce the following assumptions on Lipschitz continuity of first and second order derivatives.

\begin{assumption}\label{lipschitz}
    The function $F(x,y)$ is $M$-Lipschitz over each coordinate. 
\end{assumption}

Similarly, we need the Lipschitzness of the gradient of the loss functions.
\begin{assumption}
    The gradients of component functions $F(x, y; \xi)$  are $L$-Lipschitz continuous, i.e., there exists a constant $L$ such that
\begin{equation}
\|\nabla F(z; \xi) - \nabla F(z'; \xi)\| \leqslant L\|z - z'\|, 
\end{equation}
for any $z = (x, y)$ and $z' = (x', y')$.
\end{assumption}
    The two assumptions above are standard for minimax optimization in both differentially private settings \cite{tao2025second,liu2024adaptive,arora2023faster} and non-private settings \cite{ijcai2025p741,luo2022finding}. A minor distinction is that we assume the Lipschitz property holds coordinatewise, rather than only with respect to the inner and outer variables.

\begin{assumption}\label{second_order}
 The second order derivatives $\nabla_x^2 F(x, y)$, $\nabla_{xy}^2 F(x, y)$, $\nabla_y^2 F(x, y)$ are $\rho$-Lipschitz continuous.
\end{assumption}

\begin{assumption}\label{Assumption:stronglyconvex}
    $g(x,y):=-F(x,y)$ is $\mu$-strongly convex with respect to $y$, i.e. there exists a constant $\mu $ such that 
    \begin{equation}\nonumber
        g(x,y)+ \langle	\nabla _y g(x,y),y'-y	\rangle+\frac{\mu }{2}\left\| y'-y \right\| ^2 \leqslant g(x,y')
    \end{equation}
    for any $x,y$  and $y'$.
\end{assumption}

With Assumption \ref{Assumption:stronglyconvex}, the objective function $\Phi(x)$ is also differentiable and the gradient is formulated as follows: 
\begin{equation}\nonumber
    \begin{aligned}
        &\nabla \Phi (x) = \nabla_x f(x,y^*(x))+\\
    &\nabla_{xy}^2f(x,y^*(x))[\nabla_y^2(-f)(x,y^*(x))]^{-1} \nabla_y f(x,y^*(x)).
    \end{aligned}
\end{equation}
In minimax optimization, since we always have $\nabla_y f(x,y^*(x))=0$, the expression of $\nabla \Phi(x)$ can be simplified by 
\begin{equation}
    \nabla \Phi(x) = \nabla_x f(x,y^*(x)).
\end{equation}

\begin{definition}
    We define the condition number $\kappa $ as $\kappa =\frac{L }{\mu}$, where $L $ is the Lipschitz constant.
\end{definition}

Additionally, we need one extra assumption to guarantee the convergence of one term in our proof:
\begin{assumption}\label{bounded_variance}
   We assume that each component function $F(x,y;\xi)$ satisfies bounded variance, 
\textit{i.e.},
\begin{equation}
\| \nabla F(x,y;\xi) - \nabla f(x,y) \| \leqslant \sigma.
\end{equation} 
\end{assumption}

\section{Challenges}
\label{sec:tech-summary}
Our objective in the nonconvex--(strongly) concave setting is the minimax value function
\(
\Phi(x):=\max_{y\in\mathcal{Y}} f(x,y),
\)
defined either for the population loss or for the empirical risk.
We seek an $\alpha$-second-order stationary point (SOSP) of $\Phi$, i.e., a point with small
\(\|\nabla\Phi(x)\|\) and nearly nonnegative curvature.
This is a fundamentally different target from stationarity of the saddle objective $f$:
even when $f(x,\cdot)$ is $\mu$-strongly concave, a stationary pair $(x,y)$ for simultaneous
descent--ascent dynamics does not certify that $x$ avoids strict saddles of $\Phi$ unless $y$ tracks
the maximizer $y^\star(x)$ accurately.

SGDA is the canonical baseline for minimax learning, and DP-SGDA \cite{rafique2022weakly}  is its natural private variant.
However, the standard SGDA theory mainly certifies first-order criteria, which do not rule out strict saddles of $\Phi$ and therefore do not imply convergence to local minimizers of $\Phi$. Moreover, SGDA has no built-in mechanism to enforce second-order behavior. Augmenting DP-SGDA with common saddle-escape or model-selection primitives is problematic here: function-decrease tests for $\Phi$ are not directly implementable since $\Phi$ is implicit, and privately estimating Hessian information of $\Phi$ (e.g., $\lambda_{\min}(\nabla^2\Phi(x))$) is high-sensitivity and expensive. Thus, DP-SGDA can serve as a heuristic baseline, but it does not directly yield a DP-SOSP guarantee for the value function.

Existing DP algorithms for SOSP in single-level minimization operate on an objective with direct gradient access and pay privacy only for those gradients. In minimax problems, the relevant direction is implicit: \(\nabla\Phi(x)=\nabla_x f(x,y^\star(x))\). Any implementable update must therefore control, simultaneously, (i) the bias from approximating $y^\star(x)$, (ii) sampling noise (for population objectives), and (iii) DP noise from privatizing every gradient access.
This interaction is amplified by the nested access pattern: one outer step typically consumes multiple inner gradient calls, so naive noise injection can accumulate rapidly. In addition, DP composition must be allocated across heterogeneous oracle types: infrequent large-batch refresh queries and frequent small-batch incremental updates have different sensitivity/composition behavior, forcing batch sizes and refresh frequencies to be chosen jointly. Finally, excluding strict saddles of $\Phi$ using standard DP tests (e.g., repeated private Hessian-eigenvalue checks, or private selection over iterates) is typically prohibitively costly.

We combine four ingredients that are tailored to these bottlenecks.
(1) The strong concavity of the inner problem allows us to convert an inner-loop stationarity surrogate into
control of \(\|y-y^\star(x)\|\), and hence into a bound on the bias
\(\|\nabla_x f(x,y)-\nabla\Phi(x)\|\).
(2) We use a SPIDER recursion \cite{arora2023faster}  to track the relevant gradients across iterations, reducing
the number of expensive refresh queries and limiting DP noise accumulation.
(3) The analysis is organized into short periods aligned with the refresh schedule, which is
essential in the population setting where sampling noise persists and full-horizon telescoping is
unavailable.
(4) We replace DP saddle point certification and private model selection with a perturb-and-monitor
criterion based on iterate displacement, which is computable from privatized gradients and therefore
incurs no additional privacy loss.
Together, these choices yield a first-order DP method that targets SOSP of the value function $\Phi$
without explicit Hessian computations.

\section{Main Theory}

\begin{algorithm}
\caption{Differentially Private Recursive Gradient Descent Ascent (DP RGDA)}
\begin{algorithmic}[1]\label{alg:escape}
\REQUIRE initial value $x_0, y_0$, stepsize $\eta$ and $\eta_H$, perturbation radius $r$, escaping phase threshold $t_{thres}$, average movement $\bar{D}$, maximum iteration $T$.
\STATE Set $escape = \mathbf{False}$, $s = 0$, $esc = 0$.
\FOR{$t = 0, 1, \ldots, T-1$}
    \STATE Update $y_{t+1}, v_t, u_t$ from Algorithm \ref{innerloop}.
    \IF{$escape = \mathbf{False}$}
        \IF{$\|v_t\| \geqslant \alpha$}
            \STATE Update $x_{t+1} = x_t - (\eta/\|v_t\|)v_t$.
        \ELSE
            \STATE Let $m_s = t$, $s = s + 1$, $escape = \mathbf{True}$, $esc = 0$.
            \STATE Draw perturbation $\xi \sim B_0(r)$ and update $x_{t+1} = x_t + \xi$.
        \ENDIF
    \ELSE % for this else, escape  =  True
        % \IF{Second Order (SO) = $\mathbf{False}$}
        \STATE Compute $D = \sum_{j=m_s+1}^t \eta_H^2 \|v_j\|^2$. %\ym{we don't need to recalculate the sum we already did. Instead we can set a variable to accumulate the deviation to save computation.}
        \IF{$D > (t - m_s)\bar{D}$}
            \STATE Set $\eta_t$ s.t. $\sum_{j=m_s+1}^t \eta_t^2\|v_j\|^2 = (t - m_s)\bar{D}$.
            \STATE Update $x_{t+1} = x_t - \eta_t v_t$. Set $escape = \mathbf{False}$.
        \ELSE
            \STATE Set $\eta_t = \eta_H$. 
            \STATE Update $x_{t+1} = x_t - \eta_t v_t$, $esc = esc + 1$.
            \STATE \textbf{Return} $x_{m_s}$ if $esc = t_{thres}$.
        % \ENDIF
        \ENDIF
    \ENDIF
\ENDFOR
\ENSURE $x_{m_s}$
\end{algorithmic}
\end{algorithm}

This section presents the main theoretical guarantees of Algorithm~ \ref{alg:escape} and \ref{innerloop} for privately computing approximate second-order stationary points of the minimax loss function.
Throughout, we focus on  functions  \eqref{population_loss} and \eqref{erm}, corresponding to the population loss and the empirical risk, respectively.

Algorithm \ref{alg:escape} is an outer-loop driver that calls Algorithm~\ref{innerloop} as a data-accessing subroutine.
At each outer iteration $t$, Algorithm~\ref{innerloop} performs $K$ projected ascent steps on $y$ at the fixed anchor $x_t$, and maintains two recursive sequences:
\[
u_{t,k} \approx \nabla_y f(x_t,y_{t,k}),
\qquad
v_{t,k} \approx \nabla_x f(x_t,y_{t,k}),
\]
with an analogous interpretation for $f$ in the population case.
The recursion is SPIDER: every $q$ outer iterations the estimators are refreshed using a batch of size $S_1$, and otherwise updated using a smaller batch $S_2$ and gradient differences.
Gaussian perturbations are injected at each refresh and each recursive update to ensure differential privacy.

The only operations that access the dataset are the noisy gradient computations inside Algorithm~\ref{innerloop}.
Consequently, once the sequences $\{u_{t,k},v_{t,k}\}$ are made differentially private, all subsequent operations in Algorithm~\ref{alg:escape} (including the updates of $x_t$, the selection of $y_{t+1}$, and the perturbations used for saddle escape) are post-processing and do not incur any additional privacy loss.

\begin{algorithm}[htbp]
\caption{Updater of Inner Loop}
\begin{algorithmic}[1]\label{innerloop}
\REQUIRE status $x_t, x_{t-1}, y_t, v_{t-1}, u_{t-1}$ and $t$
\REQUIRE stepsize $\lambda$, inner loop size $K$, batchsize $S_1$ and $S_2$, period $q$. 
\STATE  $\sigma_{\omega_t} =\sigma_{\tau_t}= \frac{C_v \sqrt{\log (1/\delta)}}{\varepsilon}\max\{\frac{1}{S_1},\frac{\sqrt{T}}{\sqrt{q}n}\} $, where $c$ is a universal constant.
\STATE $\sigma_{\zeta_t} = \sigma_{\chi_t} = \frac{C_u\sqrt{\log (1/\delta)}}{\varepsilon}  \cdot \max\{\frac{1}{S_2},\frac{\sqrt{T}}{n}\}$, where we define $w_t = (x_y,y_t)$ for simplicity.
\STATE Set $x_{t,-1} = x_{t-1}$, $x_{t,k} = x_t$ when $k \geqslant 0$, $y_{t,-1} = y_{t,0} = y_t$.
\IF{$\bmod(t, q) = 0$}
    \STATE Draw $S_1$ samples $\{\xi_1, \ldots, \xi_{S_1}\}$
    \STATE Compute:
    \STATE $v_{t,-1} = \omega_t+\textbf{Clip}(\frac{1}{S_1} \sum_{i=1}^{S_1} \nabla_x F(x_t, y_t; \xi_i),C_v)$, 
    \STATE $u_{t,-1} = \tau_t+ \textbf{Clip}(\frac{1}{S_1} \sum_{i=1}^{S_1} \nabla_y F(x_t, y_t; \xi_i),C_v)$.
\ELSE
    \STATE Let $v_{t,-1} = v_{t-1}$, $u_{t,-1} = u_{t-1}$.
\ENDIF
\FOR{$k = 0$ to $K-1$}
    \STATE Draw $S_2$ samples $\{\xi_1, \ldots, \xi_{S_2}\}$
    \STATE Compute  $v_{t,k} =  v_{t,k-1} + \zeta_{t,k}+\textbf{Clip}(\frac{1}{S_2} \sum_{i=1}^{S_2} (\nabla_x F(x_{t,k}, y_{t,k}; \xi_i) - \nabla_x F(x_{t,k-1}, y_{t,k-1}; \xi_i)),C_u)$
    \STATE Compute $u_{t,k} =  u_{t,k-1} + \chi_{t,k}+\textbf{Clip}(\frac{1}{S_2} \sum_{i=1}^{S_2} (\nabla_y F(x_{t,k}, y_{t,k}; \xi_i) - \nabla_y F(x_{t,k-1}, y_{t,k-1}; \xi_i)),C_u)$
    \STATE $y_{t,k+1} = \Pi_{\mathcal{Y}}(y_{t,k} + \lambda u_{t,k})$.
\ENDFOR
\STATE Select $s_t = \arg \min_k \|\tilde{G}_\lambda(y_{t,k})\|$. Let $y_{t+1} = y_{t,s_t}$, $v_t = v_{t,s_t}$, $u_t = u_{t,s_t}$.
\ENSURE $y_{t+1}, v_t, u_t$.
\end{algorithmic}
\end{algorithm}

Algorithm~\ref{alg:escape} alternates between two modes:
(i) a \emph{descent phase}, activated when the estimated value-gradient magnitude is large, $\|v_t\|\geqslant \alpha$, where we take a normalized step of fixed length $\eta$ along $-v_t$; and
(ii) an \emph{escape phase}, activated when $\|v_t\|<\alpha$, where we apply a random perturbation of radius $r$ and then take $t_{\mathrm{thres}}$ steps with step size $\eta_H$.
The escape phase is monitored by the movement statistic $D_t \;=\; \sum_{j=m_s+1}^t \eta_H^2 \|v_j\|^2,$ and terminates early if the average squared movement exceeds the threshold $\bar D$.
If the escape phase does not terminate early, Algorithm~\ref{alg:escape} outputs the anchor point $x_{m_s}$.

\subsection{Key Idea and Proof Strategy}
\label{subsec:key-idea}

The central difficulty in differentially private minimax optimization is that (i) the value-gradient $\nabla\Phi(x)=\nabla_x f(x,y^\star(x))$ is not directly available, and (ii) naive recursive estimators accumulate privacy noise over a long horizon, which leads to vacuous control of the optimization error.
Our approach addresses these two issues through three coupled design choices.

\paragraph{(i) Enforce a value-function viewpoint by tracking $y^\star(x)$.}
Algorithm~\ref{innerloop} includes an inner loop of $K$ projected ascent updates on $y$ at each outer iterate $x_t$.
The iterate $y_{t+1}$ is selected using the projected gradient mapping criterion, which is a standard stationarity surrogate for constrained maximization.
Under $\mu$-strong concavity and definition of $G_{\lambda}$ as in Appendix \ref{appendix:population}, this yields a quantitative bound of the form
\[
\|y_{t+1}-y^\star(x_t)\| \;\lesssim\; \|G_\lambda(x_t,y_{t+1})\|,
\]
which in turn converts control of the inner-loop stationarity into control of the bias term 
$\|\nabla_x f(x_t,y_{t+1})-\nabla\Phi(x_t)\|$. 

\paragraph{(ii) Control the recursive estimator locally (period-wise) rather than globally.}
The estimators are refreshed every $q$ outer iterations.
Instead of requiring uniform control over all $T$ steps, we analyze the estimator error within each period and restart the argument at the next refresh.
This is the point at which our analysis diverges from proofs that rely on telescoping bounds across the entire horizon: a period-wise control prevents privacy noise from accumulating unavoidably in the estimator deviation, and yields the rates stated in Theorems \ref{thm:erm} and \ref{thm:pop-main}.

\paragraph{(iii) Avoid private model selection via a movement-based escape test.}
A common obstacle for DP nonconvex optimization is that selecting the ``best'' iterate may require additional private evaluation (e.g., AboveThreshold)\cite{liu2023private}.
Algorithm~\ref{alg:escape} avoids this overhead by using the movement statistic $D_t$ during the escape phase.
The test is computable from privatized gradients and iterates only.
Large movement certifies that the algorithm has escaped a region of negative curvature, while persistently small movement implies that the Hessian at the anchor point is nearly positive semidefinite.
This mechanism yields approximate second-order stationarity without an extra DP selection step.

\subsection{SOSP with Empirical Loss}
\label{subsec:erm}

In the ERM setting, the objective is a finite sum
$f(x,y)=\frac{1}{n}\sum_{i=1}^n F(x,y;\xi_i)$ over a fixed dataset $S=\{\xi_i\}_{i=1}^n$.
We instantiate Algorithm~\ref{innerloop} with a full refresh batch $S_1=n$ and a single inner update per outer step (i.e., $K$ is treated as a fixed constant; in particular, the proofs in the appendix specialize to $K=1$).
In this regime, the only randomness in the gradient oracle arises from the injected Gaussian noise, and the resulting utility bound is driven by the privacy term.

We first quantify how far the recursive estimator can drift from the fresh gradient estimate produced at refresh steps.
Let
\begin{equation}\nonumber
    \begin{aligned}
        w_t :=& (x_t,y_t), \\
\nabla_t :=& \begin{pmatrix}
\tfrac{1}{S_1}\sum_{i\in S_1}\!  \nabla_x F(w_t;\xi_i)+\omega_t \\
-\tfrac{1}{S_1}\sum_{i\in S_1}\!  \nabla_y F(w_t;\xi_i)-\tau_t
\end{pmatrix}^T,
    \end{aligned}
\end{equation}

and define the recursive estimator $\Delta_t := (v_t,-u_t).$
The following bound is a standard consequence of the SPIDER recursion (Proposition~1 in \cite{fang2018spider}).

% \begin{lemma}[Deviation of the recursive estimator]
% \label{lem:erm-recursive-deviation}
% Consider Algorithm~2 in the ERM setting.
% Conditioned on the filtration $\mathcal{F}_t$ up to time $t$, there exist nonnegative quantities $B_1,B_2$ (depending on the batch sizes and the injected noise variance) such that
% \[
% \mathbb{E}\!\left[\|\Delta_t-\nabla_t\|^2 \mid \mathcal{F}_t\right]
% \;\le\;
% B_2^2\,\mathbb{E}\!\left[\|w_t-w_{t-1}\|^2 \mid \mathcal{F}_t\right] \;+\; B_1^2.
% \]
% \end{lemma}
\begin{lemma}\label{SPIDERp1}
      Consider Algorithm \ref{innerloop}, and for any $t \in \{0, ..., T\}$ let $t_0 = \left\lfloor\frac{t}{q}\right\rfloor q$. If each $\nabla_t$ computed defined above is an unbiased estimate of $\nabla F(w_t; S)$ satisfying with probability $1-\delta_1$,
$$
\|\nabla_{t_0} - \nabla F(w_{t_0}; S)\|^2\leq  B_1^2 \log (1/\delta_1),
$$
and each $\Delta_t$  is an unbiased estimate of the gradient variation satisfying
\begin{equation}\nonumber
    \begin{aligned}
        &\|\Delta_t - [\nabla F(w_t; S) - \nabla F(w_{t-1}; S)]\|^2\\
         \leqslant  &B_2^2\|w_t - w_{t-1}\|^2 \log (1/\delta_1).
    \end{aligned}
\end{equation}
Then for any $t \geqslant t_0 + 1$, the iterates of Algorithm \ref{innerloop} satisfy

\begin{equation}\nonumber
    \begin{aligned}
        &\|\nabla_t - \nabla F(w_t)\|^2 \\
        \leqslant &\log_{} (1/\delta_1 )( B_2^2 \sum_{k=t_0+1}^{t} \|w_k - w_{k-1}\|^2 +  B_1^2).
    \end{aligned}
\end{equation}

\end{lemma}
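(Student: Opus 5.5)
We follow the martingale argument behind Proposition~1 of \cite{fang2018spider}, adapted to a high-probability statement. Fix $t$ with $t_0<t$, write the estimator produced by the recursion at step $k$ as $\hat\nabla_k$, and let $\epsilon_k := \hat\nabla_k - \nabla F(w_k;S)$ denote its error. By construction of Algorithm~\ref{innerloop}, for $t_0<k\le t$ the recursion reads $\hat\nabla_k=\hat\nabla_{k-1}+\Delta_k$. Hence
\[
\epsilon_k=\epsilon_{k-1}+\xi_k,\qquad \xi_k:=\Delta_k-\bigl[\nabla F(w_k;S)-\nabla F(w_{k-1};S)\bigr].
\]
Unrolling back to the refresh time gives $\epsilon_t=\epsilon_{t_0}+\sum_{k=t_0+1}^t\xi_k$.

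The key structural fact is that $\{\xi_k\}$ is a martingale difference sequence with respect to the filtration $\mathcal F_k$ generated by everything up to step $k-1$, including $w_k$. To see this, note that $w_k$ is $\mathcal F_k$-measurable, while the fresh minibatch and the fresh Gaussian noise at step $k$ are independent of $\mathcal F_k$. Therefore $\mathbb E[\xi_k\mid\mathcal F_k]=0$ by the unbiasedness hypothesis. Similarly, $\epsilon_{t_0}$ is mean zero given $\mathcal F_{t_0}$. All cross terms in $\mathbb E\|\epsilon_t\|^2$ thus vanish, which yields the in-expectation identity
\[
\mathbb E\|\epsilon_t\|^2=\mathbb E\|\epsilon_{t_0}\|^2+\sum_{k}\mathbb E\|\xi_k\|^2 .
\]

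For the high-probability version stated in the lemma, we use the hypotheses as conditional sub-Gaussian norm bounds. Conditionally on $\mathcal F_k$, each $\xi_k$ is norm-bounded by $B_2\|w_k-w_{k-1}\|\sqrt{\log(1/\delta_1)}$ with probability $1-\delta_1$; here $B_2$ captures both the clipped sampling term and the Gaussian term. Likewise, $\epsilon_{t_0}$ is bounded by $B_1\sqrt{\log(1/\delta_1)}$. We then apply a vector-valued Azuma--Hoeffding / Pinelis-type concentration inequality for martingales in Hilbert space with predictable scale parameters $c_k=B_2\|w_k-w_{k-1}\|$ and $c_{t_0}=B_1$. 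This gives, with probability $1-O(\delta_1)$,
\[
\|\epsilon_t\|^2\lesssim \log(1/\delta_1)\Bigl(B_1^2+B_2^2\sum_{k=t_0+1}^t\|w_k-w_{k-1}\|^2\Bigr).
\]
The constant and the extra logarithmic factor are absorbed into $B_1,B_2$, matching the claimed form.

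The main obstacle is that the scales $c_k$ are random, since they depend on the iterates. A plain Azuma inequality with deterministic bounds therefore does not apply directly. We would handle this with the sub-Gaussian martingale inequality for predictable variance proxies (e.g.\ the version in \cite{fang2018spider} or Jin et al.'s norm-subGaussian martingale lemma). Alternatively, one can state the bound conditionally and union-bound over $k\le q$, paying a $\log q$ that is absorbed into $\log(1/\delta_1)$. A secondary subtlety is that clipping could bias $\Delta_k$. Here we rely on the lemma's hypothesis that $\Delta_k$ is unbiased, which holds in the regime where the clipping thresholds $C_v,C_u$ exceed $M$ and $L\|w_k-w_{k-1}\|$ respectively, so that clipping is inactive.
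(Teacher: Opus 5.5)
The paper gives no proof of its own here. It cites Proposition~1 of \cite{fang2018spider}, which is the in-expectation orthogonality identity $\mathbb{E}\|\epsilon_t\|^2=\mathbb{E}\|\epsilon_{t_0}\|^2+\sum_k\mathbb{E}\|\xi_k\|^2$ from your second paragraph, and then writes the conclusion in high-probability form with no further argument. Your backbone is the same as that source: unroll to $\epsilon_t=\epsilon_{t_0}+\sum_k\xi_k$, with $\{\xi_k\}$ a martingale difference sequence. You also correctly see that this identity alone does not give the stated form. Markov's inequality only yields a $1/\delta_1$ factor, and a pathwise triangle inequality loses a factor $t-t_0+1$. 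The vector-martingale concentration step you add is the ingredient that is actually needed, so your route is more complete than the paper's.

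Be precise about what you end up proving.

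First, the concentration step needs the hypotheses read as conditional norm-subGaussian bounds with parameters $B_1$ and $B_2\|w_k-w_{k-1}\|$. These must hold at every confidence level, with the logarithm outside. A single-level bound at $\delta_1$, as literally stated, is not enough.

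Second, the output carries an absolute constant. With Jin et al.'s predictable-scale inequality it also carries $\log d$ and $\log\log$ factors, plus the proviso that $\sum_k c_k^2$ stays below a deterministic ceiling. You cannot absorb that constant into $B_1,B_2$ while keeping the statement, because the same $B_1,B_2$ appear in both hypothesis and conclusion. In fact the constant-one statement is false. Work in one dimension with $B_1=B_2=\|w_{t_0+1}-w_{t_0}\|=1$, $\epsilon_{t_0}=s_0\sqrt{\log(1/\delta_1)}$ and $\xi_{t_0+1}=s_1\sqrt{\log(1/\delta_1)}$ for independent Rademacher signs $s_0,s_1$. Both hypotheses hold surely, yet $\|\epsilon_{t_0+1}\|^2=4\log(1/\delta_1)$ exceeds the claimed $2\log(1/\delta_1)$ with probability $1/2$. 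So say explicitly that you prove the bound up to an absolute constant and logarithmic factors; that is all the later $\mathcal{O}(\cdot)$ arguments use.

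Third, drop the fallback of conditioning and union-bounding over $k\le q$. Conditioning on the iterates destroys the martingale property, since $w_{k+1}$ depends on $\xi_k$. A union bound over time indices also does nothing about random scales. Your real options are:
\begin{itemize}
\item a predictable-scale inequality (Freedman/Pinelis type with peeling over scales);
\item making the step lengths deterministic, as the normalized steps of \cite{fang2018spider} do for their high-probability results.
\end{itemize}
The second option is not directly available for the $y$-updates or the escape phase of Algorithm~\ref{alg:escape}.
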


Lemma~\ref{SPIDERp1} is used to control the optimization error induced by the recursive updates and by the privacy noise.
Combined with the smoothness of $\Phi$, it yields a descent inequality in the descent phase (when $\|v_t\|\geqslant \alpha$), and it ensures that upon entering the escape phase (when $\|v_t\|<\alpha$), the true gradient $\|\nabla\Phi(x_t)\|$ is also small up to the same order.
The escape phase then follows the standard perturbed-gradient paradigm: if the Hessian at the anchor has a sufficiently negative eigenvalue, the perturbation causes the iterates to travel a nontrivial distance with constant probability, which forces an early exit from the escape phase; otherwise, failure to travel implies near-positive semidefinite curvature at the anchor point.

\begin{theorem}
\label{thm:erm}
Assume Assumptions~\ref{lipschitz}--\ref{Assumption:stronglyconvex} hold.
For given privacy parameters $(\varepsilon,\delta)$ and failure probabilities $\delta_1,\delta_2\in(0,1)$.
Run Algorithm~\ref{alg:escape} with $S_1=n$, $S_2 \geqslant \max \{(\frac{M n \varepsilon }{\sqrt[]{F_0L d \log_{} (1/\delta )}})^{2/3},\frac{(M nd \log_{} (1/\delta ))^{1/3}}{(L F_0)^{1/6}}\}$, where $F_0 = f_S(\text{\textbf{0}})-\min_{x,y\in\mathbb{R}^d}\{f_S(x,y)\}$, $L_\Phi$, and $\rho_\Phi$ are the gradient and Hessian Lipschitz constants of the value function. Choose the remaining parameters as in Appendix \ref{appendix:erm}, 
% \begin{equation}
%     \begin{aligned}
%         &\eta \asymp \frac{\alpha}{L_\Phi},\qquad
% \eta_H \asymp \frac{1}{L_\Phi},\qquad
% r \asymp \sqrt{\frac{\alpha}{\rho_\Phi}},\\
% &t_{\mathrm{thres}} \asymp \frac{L_\Phi}{\sqrt{\rho_\Phi\,\alpha}}\log\!\Big(\frac{d}{\delta_2}\Big),
% \qquad
% \bar D \asymp \frac{\alpha^2}{L_\Phi^2},
%     \end{aligned}
% \end{equation}
 Algorithm~\ref{alg:escape} is $(\varepsilon,\delta)$-DP and outputs a point $x_{\mathrm{out}}$ such that, with probability at least $1-\delta_1-\delta_2$,
\[
\|\nabla \Phi_S(x_{\mathrm{out}})\| \;\le\; \alpha,
\lambda_{\min}\!\big(\nabla^2 \Phi_S(x_{\mathrm{out}})\big) \;\ge\; -\sqrt{\rho_\Phi\,\alpha},
\]
with $\alpha \;=\; \widetilde{\mathcal{O}}\!\left(\bar\epsilon^{2/3}\right)
\;=\; \widetilde{\mathcal{O}}\!\left(\Big(\tfrac{\sqrt{d\log(1/\delta)}}{n\varepsilon}\Big)^{2/3}\right).$
\end{theorem}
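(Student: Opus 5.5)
The proof splits into a privacy part and a utility part. The utility part has three pieces: bounding the gradient-estimation error in each $q$-period, a descent argument for the descent phase, and a perturbed-escape argument for the escape phase.

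\textbf{Privacy.} Every data access happens in Algorithm~\ref{innerloop}, and every query passes through \textbf{Clip}.
\begin{itemize}
\item With $S_1=n$, a refresh query has $\ell_2$-sensitivity $O(C_v/n)$, and at most $T/q$ refreshes occur.
\item With $S_2$ samples, an incremental query on a gradient difference has sensitivity $O(C_u/S_2)$. Sampling amplification with rate $S_2/n$ then applies over the $TK$ incremental steps.
\end{itemize}
Composing both families with the moments accountant (or advanced composition) shows that the noise scales $\sigma_{\omega},\sigma_{\tau},\sigma_{\zeta},\sigma_{\chi}$ of Algorithm~\ref{innerloop} make the whole sequence $\{u_{t,k},v_{t,k}\}$ $(\varepsilon,\delta)$-DP. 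Algorithm~\ref{alg:escape} is then post-processing: the descent steps, the escape perturbation $\xi\sim B_0(r)$, and the movement test built from $D$ never touch the data again.

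\textbf{Estimation error.} I apply Lemma~\ref{SPIDERp1} with $K=1$ and high-probability Gaussian norm bounds, using a union bound over $T$ steps with failure probability $\delta_1$. The two input constants are:
\begin{itemize}
\item $B_1^2\asymp d\,\sigma_\omega^2 \asymp dT\log(1/\delta)/(qn^2\varepsilon^2)$, since the refresh uses the full dataset and carries no sampling error.
\item $B_2^2\asymp L^2/S_2 + d\,\sigma_\zeta^2/\|w_t-w_{t-1}\|^2$. I handle the privacy part by choosing $C_u\asymp L\|w_t-w_{t-1}\|$. Then clipping is inactive and the noise is proportional to the movement.
\end{itemize}
Each outer step moves $x$ by at most $\eta$ in the descent phase, and by $O(\sqrt{\bar D})$ on average in the escape phase. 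The $y$-step is $\lambda\|u\|$, which I control through strong concavity. Summing over one period, which is the period-wise device rather than a full-horizon telescope, gives
\[
\|v_t-\nabla_x f(x_t,y_{t+1})\|^2 \lesssim \log(1/\delta_1)\Big(B_1^2 + q\,(\eta^2+\bar D)\big(L^2/S_2 + dTL^2\log(1/\delta)/(n^2\varepsilon^2)\big)\Big).
\]
Next I add the bias term. Strong concavity together with the projected gradient mapping gives $\|y_{t+1}-y^\star(x_t)\|\lesssim \|G_\lambda\|/\mu$, and hence $\|\nabla_x f(x_t,y_{t+1})-\nabla\Phi(x_t)\|\le \kappa\|G_\lambda\|$. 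The $u$-recursion is controlled identically, so $\|G_\lambda\|$ is bounded by the same error. The target of this step is $\|v_t-\nabla\Phi(x_t)\|\le \alpha/4$ for all $t$.

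\textbf{Descent and escape.} In the descent phase I use $L_\Phi$-smoothness of $\Phi$ with $L_\Phi\asymp\kappa L$ and $\rho_\Phi$ from Assumption~\ref{second_order}. A normalized step of length $\eta$ gives $\Phi(x_{t+1})\le \Phi(x_t)-\eta\alpha/2+L_\Phi\eta^2/2$, which is a decrease of order $\alpha^2/L_\Phi$ when $\eta\asymp \alpha/L_\Phi$. In the escape phase there are two cases.
\begin{itemize}
\item If $\lambda_{\min}(\nabla^2\Phi(x_{m_s}))<-\sqrt{\rho_\Phi\alpha}$, I use the standard coupling or improve-or-localize argument of perturbed gradient descent, with the estimator error treated as an inexact-gradient perturbation of size $\le\alpha/4$. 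With probability $1-\delta_2$ (after repetition), the iterates move far enough that $D$ exceeds $(t-m_s)\bar D$ within $t_{thres}$ steps. The improve-or-localize inequality then converts this movement into a function decrease of order $\sqrt{\alpha^3/\rho_\Phi}$ per escape episode.
\item Otherwise the algorithm returns $x_{m_s}$. Since $\|v_{m_s}\|<\alpha$ and the error is at most $\alpha/4$, we get $\|\nabla\Phi(x_{m_s})\|\le 2\alpha$, and the negation of the first case gives the Hessian condition.
\end{itemize}
The total decrease is bounded by $F_0$, which bounds the number of episodes and hence fixes $T\asymp F_0 L_\Phi t_{thres}/\alpha^{2}$ up to $\sqrt{\rho_\Phi}$ factors.

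\textbf{Balancing.} Substituting $T$, $q$, and $S_2$ into the error bound and requiring it to be at most $\alpha^2$ gives the following. With $q\asymp \sqrt{T}$-type choices, the stated lower bound on $S_2$ makes $qL^2\eta^2/S_2$ negligible. The dominant privacy term is then $\alpha^2\gtrsim (\alpha^2/L_\Phi^2)\,T d\log(1/\delta)/(n\varepsilon)^2$ with $T\sim 1/\alpha^2$, which forces $\alpha^3\sim d/(n\varepsilon)^2$, that is, $\alpha=\widetilde O(\bar\epsilon^{2/3})$.

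\textbf{Main obstacle.} The hardest part is the escape-phase analysis with a stale, noisy, and biased SPIDER estimator. The error bound must stay uniform across an escape episode of length $t_{thres}$ that can straddle refresh boundaries. I would first try to choose $q\ge t_{thres}$ and restart the coupling argument at refreshes. The aim is to keep the accumulated deviation below $\alpha/4$ without losing the $2/3$ exponent.
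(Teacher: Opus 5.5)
\textbf{The escape step does not go through as proposed.} You plan to run the standard PGD coupling (or improve-or-localize) argument ``with the estimator error treated as an inexact-gradient perturbation of size $\le\alpha/4$''. That abstraction cannot yield the Hessian condition.

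Take the coupling recursion of Lemma~\ref{lem:11}, $w_{t+1}=(I-\eta_H\mathcal H)w_t-\eta_H(\Delta_t w_t+\nu_t)$ with $w_t=x_t-x_t'$, and let $\gamma=-\lambda_{\min}(\mathcal H)$. The error contribution $\eta_H\sum_\tau(I-\eta_H\mathcal H)^{t-\tau}\nu_\tau$ can have norm $\approx\gamma^{-1}\max_\tau\|\nu_\tau\|\,(1+\eta_H\gamma)^{t-m_s}$. It must stay below $\tfrac14 r_0(1+\eta_H\gamma)^{t-m_s}$, where $r_0=\delta_2 r/\sqrt d$ and $r\le R$. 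For $\gamma$ near $\sqrt{\rho_\Phi\alpha}$, a uniform bound $\|\nu_\tau\|\le\alpha/2$ would therefore need $r_0\gtrsim\sqrt{\alpha/\rho_\Phi}$, which is impossible.

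This is not a bookkeeping issue. At a saddle with $\lambda_{\min}=-2\sqrt{\rho_\Phi\alpha}$, an error of norm $\le\alpha/4$ may equal $-\nabla\Phi_S$ on the whole ball of radius $\alpha/(4L_\Phi)$. For the usual $r\lesssim\alpha/L_\Phi$ this ball contains the perturbation ball. Then $v_t\equiv 0$, the movement test never fires, and the saddle is returned.

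The missing idea is the one that makes SARAH/SPIDER-type escape proofs work. Run the two coupled trajectories on the same minibatches and the same standard Gaussian draws. Then show that $\|\nu_t\|\le c\,\gamma\max_{\tau\le t}\|x_\tau-x_\tau'\|$ with $c=O(1/(\eta_H\gamma\,t_{thres}))=O(1/\log)$, i.e.\ that the error difference scales with the separation. This has to cover:
\begin{itemize}
\item refreshes falling inside the episode;
\item the inner $y$-iterates and their tracking bias;
\item with your trajectory-dependent $C_u\asymp L\|w_t-w_{t-1}\|$, the mismatch between the two runs' noise scales.
\end{itemize}
Your ``main obstacle'' paragraph aims at a uniform $\alpha/4$ bound, which the descent accounting needs but the escape argument does not. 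Following the paper will not repair this: Lemma~\ref{lem:11} plugs the uniform bound $\|\nu_\tau\|\le\bar\epsilon^{2/3}$ into the same recursion and simply asserts the required inequality.

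\textbf{How your route compares, and what else needs fixing.} Outside this step your route differs from the paper's ERM proof and is closer to what the theorem asserts. The paper applies the DP-SPIDER averaged-gradient Lemma~\ref{dpspider1} to the joint variable $(x,y)$ and bounds $\|\nabla F(\bar w;S)\|$ at a selected iterate. You instead control $\|v_t-\nabla\Phi_S(x_t)\|$ uniformly, as the paper does only in the population case (Lemma~\ref{population1}), and run a potential argument on $\Phi_S$ that ends at the returned anchor. Your privacy accounting matches the paper's. Three steps still need repair.
\begin{itemize}
\item[(i)] $\|G_\lambda(x_t,y_{t+1})\|$ measures inner optimality, not estimator error, so it is not ``bounded by the same error''. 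You need a tracking recursion like \eqref{gamma}. Its contraction is only $1-\Theta(1/\kappa)$ per step when $K=1$, and it must absorb the drift $\kappa\|x_{t+1}-x_t\|$ of $y^\star$, including at the perturbation step. This forces $\eta$, $r$ and the escape steps to be $O(\alpha/(\kappa^2L))$.
\item[(ii)] Your balancing display drops the factor $q$. As written, with $T\sim\alpha^{-2}$, it only gives $\alpha\gtrsim\bar\epsilon$. Minimizing $T\bar\epsilon^2/q+q\eta^2L^2T\bar\epsilon^2$ at $q\sim1/\alpha$ gives the correct condition $\alpha^3\gtrsim\bar\epsilon^2$.
\item[(iii)] That condition yields $\bar\epsilon^{2/3}$ only if $T\sim F_0L_\Phi/\alpha^2$ up to logs. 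Your $T\asymp F_0L_\Phi t_{thres}/\alpha^2\sim\alpha^{-5/2}$ would give only $\bar\epsilon^{4/7}$.
\end{itemize}
For (iii), the average-movement exit gives a decrease of about $\bar D/\eta_H$ per escape step. This sums to $\sqrt{\alpha^3/\rho_\Phi}$ only for full-length episodes, not for every episode. With $\bar D\le R^2/t_{thres}^2$, beating the error term $\eta_H\|v_t-\nabla\Phi_S(x_t)\|^2$ requires error $O(\alpha/\log)$ rather than $\alpha/4$. The decrease must also beat the $O(\alpha r)$ perturbation cost of short episodes. Finally, $\delta_2$ must be union-bounded over all episodes by shrinking $r_0$; the algorithm never repeats, so ``repetition'' is not available.
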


Theorem~\ref{thm:erm} states that, for ERM, the stationarity accuracy is dominated by the privacy term $\bar\epsilon = \frac{\sqrt{d \log (1/\delta)}}{n \varepsilon}$, and the algorithm achieves a $\widetilde{\mathcal{O}}(\bar\epsilon^{2/3})$-approximate SOSP using only first-order access through privatized gradients.

\begin{remark}
Motivated by the privacy requirement in learning, a growing line of work studies how to compute approximate second-order stationary points (SOSP) under differential privacy constraints; see, e.g., \cite{liu2023private} for DP-SOSP algorithms in non-convex \emph{minimization}/ERM. Theorem~\ref{thm:erm} shows that, in the empirical-risk minimax setting, our method attains $\alpha=\tilde{O}\!\big((\sqrt{d}/(n\varepsilon))^{2/3}\big)$ (up to logarithmic factors), which matches the best-known ERM DP-SOSP scaling under comparable smoothness assumptions.
\end{remark}

\subsection{SOSP with Population Loss}
\label{subsec:population}

We now consider the population objective $f(x,y)=\mathbb{E}_\xi[F(x,y;\xi)]$.
Compared with ERM, two additional issues arise:
(i) the gradient oracle has intrinsic stochastic variance (Assumption~\ref{bounded_variance}), and
(ii) privacy amplification by subsampling becomes essential for achieving the optimal privacy--utility tradeoff.
As a result, the final accuracy contains an additional statistical term of order $n^{-1/3}$.

\noindent {\bf Estimator decomposition.}
For the population analysis, it is convenient to decompose the error into three parts:
\begin{equation}
    \begin{aligned}
        \alpha_t := & v_t-\nabla_x f(x_t,y_{t+1}),\\
        \theta_t := & u_t-\nabla_y f(x_t,y_{t}),\\
        \gamma_t := & y_t-y^\star(x_t).
    \end{aligned}
\end{equation}
The first two quantities measure gradient-estimation error (including privacy noise and sampling noise), while $\gamma_t$ measures how well the inner loop tracks the maximizer.
These terms enter the value-gradient deviation via
\begin{equation}
    \begin{aligned}
        \|v_t-\nabla \Phi(x_t)\|&
\;\le\;
\underbrace{\|v_t-\nabla_x f(x_t,y_{t+1})\|}_{\|\alpha_t\|}\\
& \;+\;
\underbrace{\|\nabla_x f(x_t,y_{t+1})-\nabla_x f(x_t,y^\star(x_t))\|}_{\text{controlled by }\|\gamma_t\|}.
    \end{aligned}
\end{equation}

\begin{lemma}
\label{lem:pop-value-gradient}
Assume Assumptions~\ref{lipschitz}--\ref{Assumption:stronglyconvex} and~\ref{bounded_variance} hold.
Fix $\delta_1\in(0,1)$ and choose the period $q$, the number of inner steps $K$, and the batch sizes $(S_1,S_2)$ and other parameters as in Appendix~\ref{appendix:population}. 
Then, with probability at least $1-\delta_1$, the iterates produced by Algorithm~\ref{alg:escape} satisfy, for all outer iterations $t$,
\[
\|v_t-\nabla \Phi(x_t)\|
\;\le\;
\widetilde{\mathcal{O}}\!\left(\frac{1}{n^{1/3}}+\Big(\frac{\sqrt{d\log(1/\delta)}}{n\varepsilon}\Big)^{1/2}\right).
\]
\end{lemma}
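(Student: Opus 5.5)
Using the decomposition stated just before the lemma,
\[
\|v_t-\nabla\Phi(x_t)\|\le\|\alpha_t\|+\|\nabla_x f(x_t,y_{t+1})-\nabla_x f(x_t,y^\star(x_t))\|
\le\|\alpha_t\|+L\|y_{t+1}-y^\star(x_t)\|,
\]
it suffices to show that, uniformly over $t$ and with probability $1-\delta_1$, both $\|\alpha_t\|$ and $\|y_{t+1}-y^\star(x_t)\|$ are $\widetilde{\mathcal O}(n^{-1/3}+\bar\epsilon^{1/2})$ with $\bar\epsilon=\sqrt{d\log(1/\delta)}/(n\varepsilon)$. The bound $\|y_{t+1}-y^\star(x_t)\|\le(1+\lambda L)\lambda^{-1}\mu^{-1}\|G_\lambda(x_t,y_{t+1})\|$ is the usual $\mu$-strong-concavity estimate for the projected gradient mapping. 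Since the inner iterate is selected by $s_t=\arg\min_k\|\tilde G_\lambda(y_{t,k})\|$, which uses $u_{t,k}$ in place of $\nabla_y f$, nonexpansiveness of $\Pi_{\mathcal Y}$ gives $\|G_\lambda-\tilde G_\lambda\|\le\|\theta\|$. So $\gamma$ is controlled by $\|\theta\|$ plus the minimal $\|\tilde G_\lambda\|$ over the $K$ inner steps. A standard projected-ascent descent inequality on $-f(x_t,\cdot)$, combined with warm starting from $y_t$ (whose distance to $y^\star(x_t)$ is at most $\|\gamma_{t-1}\|+\kappa\|x_t-x_{t-1}\|$, since $y^\star$ is $\kappa$-Lipschitz), gives a contraction $\|\gamma_{t}\|\le c\|\gamma_{t-1}\|+\kappa\eta+C\|\theta\|$ with $c<1$ for a constant $K$. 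The outer steps satisfy $\|x_t-x_{t-1}\|\le\max\{\eta,\eta_H\|v\|,r\}$, which the parameter choice makes of the target order. Unrolling the contraction reduces everything to uniform bounds on the estimator errors $\alpha_t,\theta_t$.

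For the estimator errors I would run a period-wise SPIDER argument in the spirit of Lemma~\ref{SPIDERp1}, now in the population setting. At a refresh time $t_0$ the error is the sum of three terms. The first is the sampling error, bounded by $\sigma/\sqrt{S_1}$ via vector Bernstein and Assumption~\ref{bounded_variance}; clipping is inactive since $C_v\ge M$. The second is the Gaussian term $\|\omega_{t_0}\|\lesssim\sigma_\omega\sqrt{d\log(1/\delta_1)}$. The third is a $y$-tracking term. Between refreshes, each increment is a martingale difference whose sampling part is bounded by $L\|w_k-w_{k-1}\|/\sqrt{S_2}$ and whose noise part is $\zeta_{t,k}$. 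Azuma--Hoeffding for vector martingales then gives
\[
\|\alpha_t\|^2+\|\theta_t\|^2\lesssim\log\tfrac{T}{\delta_1}\Big(\tfrac{\sigma^2}{S_1}+d\sigma_\omega^2+\sum_{k=t_0+1}^{t}\big(\tfrac{L^2}{S_2}\|w_k-w_{k-1}\|^2+d\sigma_\zeta^2\big)\Big).
\]
Crucially, the sum runs only over at most $qK$ terms within the current period. A union bound over the $T/q$ periods gives the uniform statement. Inside the period, $\|w_k-w_{k-1}\|$ is bounded by $\eta$ plus $\lambda\|u\|$, and the latter is in turn bounded by $\|\gamma\|+\|\theta\|$. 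This creates a coupled system, which I would close by induction within each period: assume the target bound holds up to time $t$, then verify it at $t+1$ using small $\lambda$ and $q\eta^2L^2/S_2\lesssim\eta^2$.

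The main obstacle is the parameter balancing, and I expect the exponent $1/2$ on $\bar\epsilon$ to come out of it. With $n$ samples total we need $TS_2+(T/q)S_1\lesssim n$. The noise levels $\sigma_\omega\sim\max\{1/S_1,\sqrt{T/q}/n\}/\varepsilon$ and $\sigma_\zeta\sim\max\{1/S_2,\sqrt T/n\}/\varepsilon$ are then fixed by composition and amplification. The error contributed per period is $\sigma^2/S_1+d\sigma_\omega^2+q d\sigma_\zeta^2+q\eta^2$. The number of iterations must satisfy $T\gtrsim F_0/(\eta\alpha)$, so that the descent argument can reach an escape point. I would first choose $S_1=n/\log$-type and $q\approx S_1/S_2$, and set $\eta\approx\alpha/L_\Phi$. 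Then I would solve for $T$ and $S_2$ by equating $\sigma^2/S_1$ with $n^{-2/3}$ and the privacy terms with $\bar\epsilon$, and check that the resulting $\alpha$ is $n^{-1/3}+\bar\epsilon^{1/2}$. If the privacy terms do not balance this way, I would fall back to treating $q$ as free and optimizing it against $T$.
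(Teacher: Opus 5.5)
Your outline follows the paper's proof of Lemma~\ref{population1} step for step. It uses the split $\|v_t-\nabla\Phi(x_t)\|\le\|\alpha_t\|+L\|y_{t+1}-y^\star(x_t)\|$ with the strong-concavity bound of Lemma~\ref{y}, the selection rule plus nonexpansiveness of $\Pi_{\mathcal Y}$, a warm-start contraction for $\gamma_t$ with $K\asymp\kappa$, period-wise martingale bounds for $\alpha_t,\theta_t$, and a joint induction. The gap is your last paragraph, which is where the lemma is actually decided. The exponents $n^{-1/3}$ and $\bar\epsilon^{1/2}$ come only from a concrete schedule for $(T,q,S_1,S_2,\eta,C_u)$, and you leave that as an expectation.

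The schedule you float also fails on your own bookkeeping:
\begin{itemize}
\item \textbf{A single period.} With $S_1$ of order $n$ up to logarithms, the budget $(T/q)S_1+TKS_2\lesssim n$ forces $T/q=\widetilde{\mathcal O}(1)$. That is essentially one period spanning the whole run, so the period-wise localization you call crucial disappears.
\item \textbf{Statistical term.} Since $q\approx S_1/S_2$, your inductive condition $qL^2\eta^2/S_2\lesssim\eta^2$ (i.e.\ $S_2\gtrsim qL^2$) gives $S_2^2\gtrsim L^2S_1$. So $S_2=\widetilde{\Omega}(\sqrt n)$ and $T\lesssim n/(KS_2)=\widetilde{\mathcal O}(\sqrt n)$. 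Combined with your $T\gtrsim F_0/(\eta\alpha)$ and $\eta\approx\alpha/L_\Phi$, this yields only $\alpha=\widetilde{\Omega}(n^{-1/4})$.
\item \textbf{Privacy term.} The budget gives $n/S_2\gtrsim TK$, so the calibration in Algorithm~\ref{innerloop} yields $d\sigma_\zeta^2=C_u^2\bar\epsilon^2\max\{n^2/S_2^2,T\}\gtrsim C_u^2\bar\epsilon^2T^2$. A period has $qK=\widetilde{\Omega}(T)$ increments, so the accumulated increment noise is $\widetilde{\Omega}(C_u^2\bar\epsilon^2T^3)$. You never fix $C_u$, although $\sigma_\zeta$ is linear in it. Even at the smallest admissible level $C_u\asymp L\eta$, this forces $\alpha=\widetilde{\Omega}(\bar\epsilon^{1/3})$ rather than $\bar\epsilon^{1/2}$.
\end{itemize}

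The paper instead uses many short periods. In the proof it takes $S_1\asymp n^{2/3}$, so $\sigma^2/S_1$ is directly $n^{-2/3}$. It sets $q=S_2\asymp\bar\epsilon^{-1/3}$, so $(L^2/S_2)\sum_{\mathrm{period}}\|x_{i+1}-x_i\|^2\lesssim L^2\max\{\eta^2,r^2,D\}\lesssim\bar\epsilon$, together with $T\asymp\bar\epsilon^{-2/3}$ and $K\asymp\kappa$. It then closes the induction on $(\alpha_t,\theta_t,\gamma_t)$ jointly.

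Do not simply import these numbers. The paper only asserts this induction, and its displayed recursions carry $\sigma_\omega^2$ where $d\sigma_\omega^2$ belongs. With the factor $d$ kept, as in your display, the refresh noise is $d\sigma_\omega^2\asymp C_v^2\bar\epsilon^2\max\{n^2/S_1^2,T/q\}$. This is $\lesssim\bar\epsilon$ only if $S_1\gtrsim n\bar\epsilon^{1/2}$ and $T/q\lesssim\bar\epsilon^{-1}$. In particular $S_1\asymp n^{2/3}$ suffices only when $\bar\epsilon\lesssim n^{-2/3}$, i.e.\ outside the privacy-dominated regime.

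A complete proof must exhibit a schedule and check every term of the per-period bound against $\widetilde{\mathcal O}(n^{-2/3}+\bar\epsilon)$. Reconciling refresh noise, the sample budget, and the $qK$-fold increment noise is the real content of the lemma.

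Two minor slips. The error bound is $\|y-y^\star(x)\|\le(1+\lambda L)\mu^{-1}\|G_\lambda(x,y)\|$, without your extra $\lambda^{-1}$. In the warm start, $\|\gamma_{t-1}\|$ is a gradient-mapping norm, so it enters the distance to $y^\star(x_{t-1})$ with a factor $\mu^{-1}$.
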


Lemma~\ref{lem:pop-value-gradient} provides a uniform bound on the deviation between the privatized direction $v_t$ and the true value-gradient.
This bound is then used in exactly the same two-phase outer-loop analysis as in the ERM case:
when $\|v_t\|\geqslant \alpha$ we obtain a per-step decrease in $\Phi$, while when $\|v_t\|<\alpha$ we invoke the perturb-then-descend escape argument.
Failure to escape implies near-PSD curvature at the anchor point.

\begin{theorem}
\label{thm:pop-main}
Assume Assumptions~\ref{lipschitz}--\ref{Assumption:stronglyconvex} and~\ref{bounded_variance} hold.
Fix $(\varepsilon,\delta)$ and failure probabilities $\delta_1,\delta_2\in(0,1)$.
Run Algorithm~1--2 with subsampling-based gradient estimates and Gaussian perturbations calibrated so that the full procedure is $(\epsilon,\delta)$-DP.
Choose $(q,S_1,S_2,K)$ as in Appendix~\ref{appendix:population} and set the outer-loop parameters $(\eta,\eta_H,r,t_{\mathrm{thres}},\bar D)$ according to the same perturbed-descent scaling as in Theorem~\ref{thm:erm}.
Then Algorithm~\ref{alg:escape} outputs a point $x_{\mathrm{out}}$ such that, with probability at least $1-\delta_1-\delta_2$,
\[
\|\nabla \Phi(x_{\mathrm{out}})\| \;\le\; \alpha,
\lambda_{\min}\!\big(\nabla^2 \Phi(x_{\mathrm{out}})\big) \;\ge\; -\sqrt{\rho_\Phi\,\alpha},
\]
with $\alpha \;=\; \widetilde{\mathcal{O}}\!\left(\frac{1}{n^{1/3}}+\Big(\frac{\sqrt{d\log(1/\delta)}}{n\varepsilon}\Big)^{1/2}\right).$
\end{theorem}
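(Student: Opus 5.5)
The argument has three parts: privacy, a uniform bound on the gradient-estimation error (Lemma~\ref{lem:pop-value-gradient}), and the two-phase perturbed-descent argument of Theorem~\ref{thm:erm} run with the population error level in place of the ERM one.

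\textbf{Privacy.} The only data accesses are the clipped, noised refresh queries (batch $S_1$, every $q$ outer steps) and the incremental queries (batch $S_2$, $K$ per outer step). We will use Poisson or uniform subsampling together with the moments accountant (R\'enyi DP composition). The clipping thresholds $C_v, C_u$ bound the sensitivity of the two query types by $C_v/S_1$ and $C_u/S_2$, respectively. The choices of $\sigma_{\omega_t},\sigma_{\zeta_t}$ in Algorithm~\ref{innerloop} are exactly the amplified-subsampling calibrations for $T/q$ refreshes and $TK$ increments. Composing the two families gives $(\varepsilon,\delta)$-DP. Everything in Algorithm~\ref{alg:escape}, including the escape perturbation and the $D$-test, is post-processing.

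\textbf{Uniform gradient error.} By Lemma~\ref{lem:pop-value-gradient}, with probability $1-\delta_1$ we have $\|v_t-\nabla\Phi(x_t)\|\le \beta$ for all $t$, where $\beta=\tilde O(n^{-1/3}+\bar\epsilon^{1/2})$. We set $\alpha = c\beta$ for a sufficiently large constant $c$, say $c\ge 4$, so that $\beta\le\alpha/4$.

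\textbf{Descent phase.} Here $\|v_t\|\ge\alpha$, and the normalized step gives, by $L_\Phi$-smoothness of $\Phi$ (with $L_\Phi = O(\kappa L)$ and $\rho_\Phi$ from Assumption~\ref{second_order}, the standard value-function facts),
\[
\Phi(x_{t+1})\le\Phi(x_t)-\eta\langle\nabla\Phi(x_t),v_t/\|v_t\|\rangle+\tfrac{L_\Phi}{2}\eta^2 .
\]
The inner-product term is at least $\eta(\|v_t\|-\beta)\ge \tfrac{3}{4}\eta\alpha$. With $\eta\asymp\alpha/L_\Phi$, each step decreases $\Phi$ by $\Omega(\alpha^2/L_\Phi)$.

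\textbf{Escape phase: early exit.} We follow the perturbed-GD analysis of Jin et al.\ and Liu et al.\ (2024), with $r\asymp\sqrt{\alpha/\rho_\Phi}$, $\eta_H\asymp 1/L_\Phi$, $t_{\rm thres}\asymp \frac{L_\Phi}{\sqrt{\rho_\Phi\alpha}}\log(d/\delta_2)$, and $\bar D\asymp\alpha^2/L_\Phi^2$. If the phase exits early, the iterates moved a lot relative to $\eta_H\|v_j\|$. An improve-or-localize inequality then gives $\Phi(x_{m_s})-\Phi(x_t)\gtrsim \sum_j\eta_H\|v_j\|^2 - O(\beta^2 \eta_H\,(t-m_s))$. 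Since $\beta\ll\alpha$, the gain per escape episode is $\Omega(\alpha^{3/2}/\sqrt{\rho_\Phi})$, which offsets the at most $O(L_\Phi r^2)$ increase caused by the perturbation.

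\textbf{Escape phase: no early exit.} When $\lambda_{\min}(\nabla^2\Phi(x_{m_s}))<-\sqrt{\rho_\Phi\alpha}$, a coupling/stuck-region argument shows that, with probability $1-\delta_2$ over the perturbation, the iterates travel distance $\gtrsim r$ within $t_{\rm thres}$ steps. This forces $D>(t-m_s)\bar D$ and hence an early exit. So if the algorithm returns $x_{m_s}$, the Hessian is near-PSD. The returned point also satisfies $\|\nabla\Phi(x_{m_s})\|\le\|v_{m_s}\|+\beta<2\alpha$, which becomes $\le\alpha$ after rescaling constants.

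\textbf{Termination and final rate.} Since $\Phi$ is bounded below by $\Phi(x_0)-F_0$, the per-step and per-episode decreases bound the number of descent steps and escape episodes, and a choice $T\gtrsim F_0 L_\Phi/\alpha^2 + t_{\rm thres}F_0\sqrt{\rho_\Phi}/\alpha^{3/2}$ guarantees a return. $T$ enters $\sigma$ only through the calibration already absorbed in Lemma~\ref{lem:pop-value-gradient}, so the rate stays $\alpha=\tilde O(n^{-1/3}+\bar\epsilon^{1/2})$. A union bound over the events of Lemma~\ref{lem:pop-value-gradient} and the escape argument gives probability $1-\delta_1-\delta_2$.

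\textbf{Main obstacle.} The hardest step is the saddle-escape coupling under inexact gradients. The two coupled sequences use the same $v$-noise only approximately, since the privacy noise is independent of the iterate, and the error is $\beta$ rather than zero. The difference dynamics $x-x'$ must grow at rate $(1+\eta_H\sqrt{\rho_\Phi\alpha})$ despite an additive perturbation of size $\eta_H\beta$ per step. I would first try to require $\beta\le c'\sqrt{\rho_\Phi\alpha}\,r$, which holds once $\beta\lesssim\alpha$, and to treat the error as a bounded adversarial drift, as in Liu et al.\ (2024).
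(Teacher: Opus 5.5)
Your skeleton matches the paper's: subsampled Gaussian composition for privacy, the uniform bound of Lemma~\ref{lem:pop-value-gradient}, then descent plus a coupling/stuck-region escape argument. However, the resolution you propose for the step you flag as the main obstacle does not work.

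\textbf{The coupling step.} In the coupling, the two runs start only $r_0=\delta_2 r/\sqrt d$ apart along $\mathbf e_1$. That small separation is exactly what bounds the stuck-region probability by $\sqrt d\,r_0/r\le\delta_2$. The unstable component of $x_t-x_t'$ has size $(1+\eta_H\gamma)^{t-m_s}r_0$, with $\gamma\ge\sqrt{\rho_\Phi\alpha}$. Define the drift $\nu_\tau=(v_\tau-\nabla\Phi(x_\tau))-(v'_\tau-\nabla\Phi(x'_\tau))$. If all you know is $\|\nu_\tau\|\le 2\beta$, it can contribute up to $\eta_H\sum_{\tau=m_s+1}^{t}(1+\eta_H\gamma)^{t-\tau}\,2\beta\le 2\beta(1+\eta_H\gamma)^{t-m_s}/\gamma$. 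So you need $\beta\lesssim\gamma r_0=\sqrt{\rho_\Phi\alpha}\,\delta_2 r/\sqrt d$, not $\beta\lesssim\sqrt{\rho_\Phi\alpha}\,r$.

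A uniform error bound cannot deliver this when $\beta\asymp\alpha$. Take a quadratic saddle and the error $e(x)=\gamma\langle x-x_{m_s},\mathbf e_1\rangle\mathbf e_1$. It has norm at most $\beta$ on the slab $|\langle x-x_{m_s},\mathbf e_1\rangle|\le\beta/\gamma$ and freezes the $\mathbf e_1$-coordinate there. Both coupled runs started in that slab stay stuck, so the coupling lemma is false whenever $r_0<2\beta/\gamma$.

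The stuck set then has probability about $\sqrt d\,\beta/(\gamma r)$. Making this at most $\delta_2$ requires $r\gtrsim\sqrt d\,\beta/(\gamma\delta_2)$. For $\beta\asymp\alpha$ that is $\sqrt d/\delta_2$ times the localization radius $\sqrt{\alpha/\rho_\Phi}$, so you lose at least that factor in $\alpha$.

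What is missing is a coupling that shares all minibatches and all Gaussian privacy noise between the two runs. This is possible exactly because the privacy noise does not depend on the iterate, so it cancels from $\nu_\tau$; it is not shared ``only approximately.'' You would then bound the remaining iterate-dependent part of $\nu_\tau$ (SPIDER sampling error and the inner-loop $y$-tracking error) in terms of $\|x_s-x'_s\|$ for $s\le\tau$, and absorb it like the Hessian-deviation term. You would also have to handle the discontinuous $\arg\min$ selection of $s_t$ in Algorithm~2. The paper's Lemma~\ref{lem:10} also just inserts a uniform bound on $\|\nu_\tau\|$, taken from Lemma~\ref{population1}, into this recursion, so following it will not close the gap.

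\textbf{The descent and termination accounting.} You add this accounting (the paper does not carry one out), but it is inconsistent as parametrized.
\begin{itemize}
\item Your choice $r\asymp\sqrt{\alpha/\rho_\Phi}$ is the localization radius, not a perturbation radius; indeed $t_{\mathrm{thres}}\sqrt{\bar D}$ is of that order up to logarithms. With this $r$, the increase you must offset, $O(L_\Phi r^2)=O(L_\Phi\alpha/\rho_\Phi)$, exceeds $\alpha^{3/2}/\sqrt{\rho_\Phi}$ by the factor $L_\Phi/\sqrt{\rho_\Phi\alpha}\gg 1$.
\item The exit test $D>(t-m_s)\bar D$, via your improve-or-localize inequality, only certifies a decrease of order $(t-m_s)\bar D/\eta_H\asymp(t-m_s)\alpha^2/L_\Phi$. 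For an immediate exit this is merely $\alpha^2/L_\Phi$, not $\Omega(\alpha^{3/2}/\sqrt{\rho_\Phi})$.
\item The accounting closes with $r\lesssim\alpha/L_\Phi$. But that shrinks $r_0$ and tightens the drift requirement above to $\beta\lesssim\delta_2\alpha^{3/2}\sqrt{\rho_\Phi}/(L_\Phi\sqrt d)$.
\item Your termination needs $T\gtrsim F_0L_\Phi/\alpha^2$, i.e.\ $T\gtrsim\bar\epsilon^{-1}$ when the privacy term dominates. The parameters behind Lemma~\ref{lem:pop-value-gradient} instead fix $T\asymp\bar\epsilon^{-2/3}$ through the sample budget $\frac{T}{q}S_1+TKS_2\le n$. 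Since $\sigma_{\omega_t}$ and $\sigma_{\zeta_t}$ grow like $\sqrt T$, $\beta$ is not independent of $T$, and the lemma would have to be re-derived at the horizon you actually use.
\end{itemize}
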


The population guarantee separates two unavoidable sources of error: the statistical term $n^{-1/3}$ arising from stochastic gradients, and the privacy term $(\sqrt{d\log(1/\delta)}/(n\varepsilon))^{1/2}$ arising from protecting the dataset.
The key technical distinction from ERM is that Lemma~\ref{lem:pop-value-gradient} is proved by a period-wise control of the recursive estimator with subsampling amplification, rather than by summing deviations over the entire optimization horizon.
\begin{remark}
In the non-convex \emph{minimization} literature, \cite{tao2025second} analyzes first-order private methods and establishes that an $\alpha$-SOSP can be found with
$\alpha=\tilde{O}(\frac{1}{n^{1/3}}+(\frac{\sqrt{d}}{n\varepsilon})^{2/5})$.
More recently, \cite{liu2024adaptive} combines the tree mechanism with second-order information to enable saddle-point escape, achieving the improved guarantee
$\alpha=\tilde{O}(\frac{1}{n^{1/3}}+(\frac{\sqrt{d}}{n\varepsilon})^{1/2})$.
Theorem~\ref{thm:pop-main} shows that our minimax framework attains the same population rate
$\tilde{O}(\frac{1}{n^{1/3}}+(\frac{\sqrt{d}}{n\varepsilon})^{1/2})$, and hence matches the current ERM state-of-the-art scaling while targeting SOSP of the minimax value function $\Phi$.
\end{remark}
\begin{remark}
    We have shown that our results match the best-known results in DP-SOSP for ERM in both finite-sum \cite{liu2023private} and stochastic \cite{liu2024adaptive} settings. However, the main weakness in these methods is that they need second order (Hessian) information, while our method is purely first-order.  To avoid the usage of second order information, we add an inner loop for the update of $y$, which can let us focus on $q$ period instead of the whole iteration $T$. In addition, we set an average moving distance as a critieron for escaping procedure, which facilitates us to update the outer variable $x$ conveniently. Since minimax is more general, as a byproduct,  we provide the first first-order DP-SOSP for ERM that matches the best-known results.
\end{remark}

\section{Experiements}
\begin{table*}[t]
    \centering
    \begin{tabular}{cccc}
        \toprule
        Method & $\Phi(x_{399})$ & $\|\nabla\Phi(x_{399})\|$ & $\lambda_{\min}(\nabla^2\Phi(x_{399}))$ \\
        \midrule
        DP-RGDA (ours) & 0.9119 & 0.4251 & $-4.8504\times 10^{-2}$ \\
        Sto-SPIDER \cite{liu2023private} & 13.7046 & 3.4505 & $-2.1862\times 10^{-1}$ \\
        Ada-DP-SPIDER \cite{tao2025second} & 8.0751 & 2.2753 & $-1.4378\times 10^{-1}$ \\
        PrivateDiff \cite{zhang2025improved} & 0.6546 & 0.3344 & $-4.3622\times 10^{-2}$ \\
        \bottomrule
    \end{tabular}
    \caption{Final objective and stationarity diagnostics on the synthetic matrix sensing instance ($T=400$). Lower $\Phi$ and $\|\nabla\Phi\|$ are better; $\lambda_{\min}$ closer to $0$ indicates milder negative curvature.}
    \label{tab:matrix_sensing_final}
\end{table*}
\begin{figure*}[t]
    \centering
    \begin{subfigure}[t]{0.32\textwidth}
        \centering
        \includegraphics[width=\linewidth]{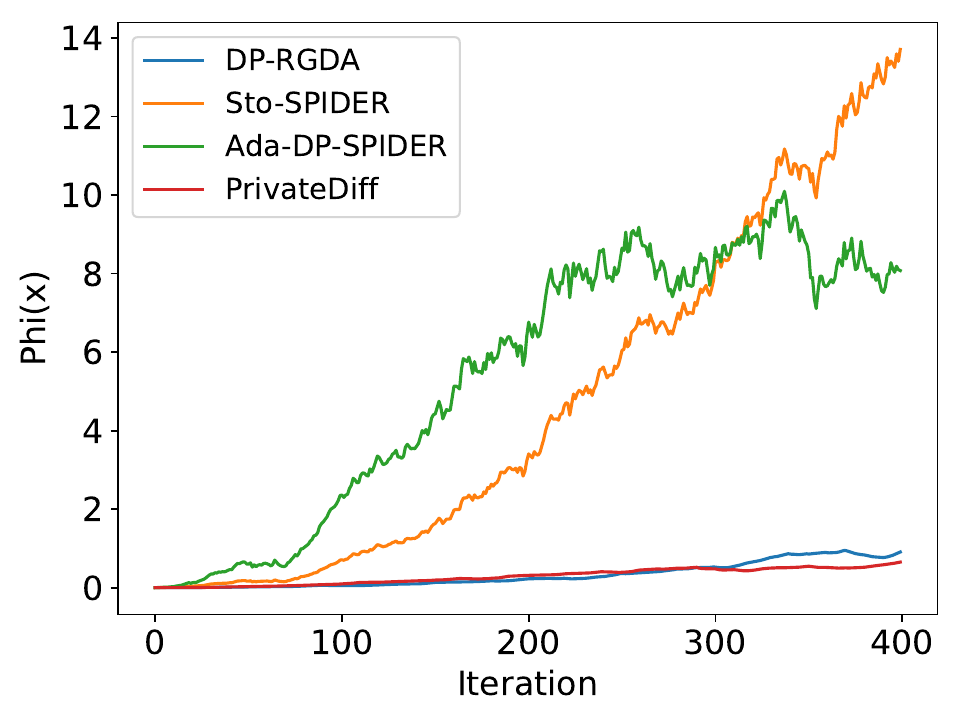}
        \caption{Outer objective $\Phi(x_t)$.}
        \label{fig:exp-phi}
    \end{subfigure}
    \hfill
    \begin{subfigure}[t]{0.32\textwidth}
        \centering
        \includegraphics[width=\linewidth]{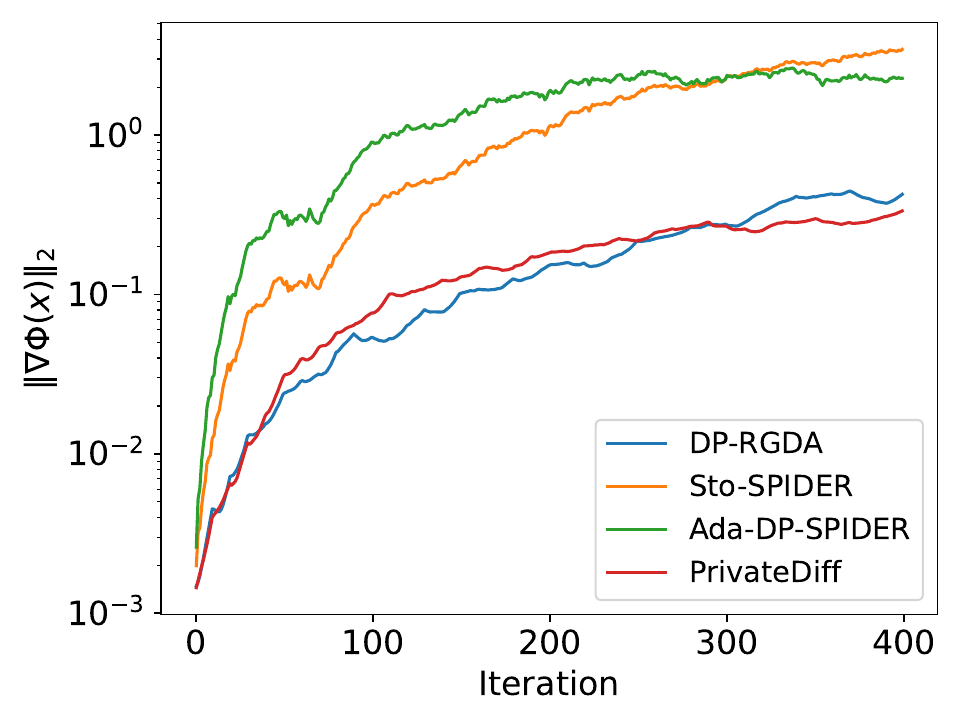}
        \caption{Gradient norm $\|\nabla\Phi(x_t)\|_2$ (log scale).}
        \label{fig:exp-grad}
    \end{subfigure}
    \hfill
    \begin{subfigure}[t]{0.32\textwidth}
        \centering
        \includegraphics[width=\linewidth]{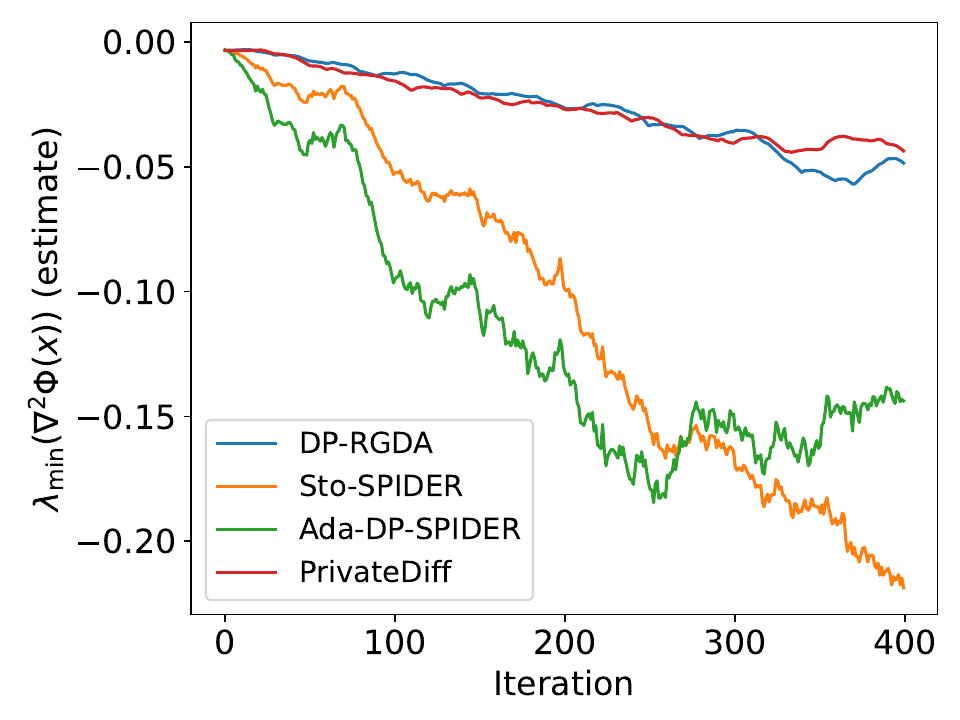}
        \caption{Estimated $\lambda_{\min}(\nabla^2\Phi(x_t))$.}
        \label{fig:exp-lmin}
    \end{subfigure}
    \caption{
    Trajectories on the synthetic matrix sensing minimax instance.
    DP RGDA implements our SPIDER recursion and (enabled) escape mechanism, while DP-SGDA is a single-loop baseline.
    We report $\Phi(x_t)$, the (non-private) gradient norm of the induced objective, and an estimated minimum Hessian eigenvalue.
    }
    \label{fig:exp-ms}
\end{figure*}

\subsection{Experimental setup}
\paragraph{Problem.}
We evaluate our differentially private second-order method on a synthetic low-rank \emph{matrix sensing} instance formulated as a nonconvex--strongly-concave minimax problem. 
Given measurements $\{(A_i,b_i)\}_{i=1}^n$ with $A_i\in\mathbb{R}^{p\times q}$ and $b_i\in\mathbb{R}$, we optimize a rank-$r$ factorization $X=UV^\top$ ($U\in\mathbb{R}^{p\times r}$, $V\in\mathbb{R}^{q\times r}$) via
\[
\min_{U,V}\max_{y\in\mathbb{R}^n} f(U,V,y)
\;:=\;
\frac1n\sum_{i=1}^n \Big(y_i\big(\langle A_i,UV^\top\rangle-b_i\big)-\tfrac12 y_i^2\Big).
\]

\iffalse 
The inner problem is $1$-strongly concave in $y$ and admits the closed form value function
\[
\Phi(U,V)=\max_y f(U,V,y)
=\frac{1}{2n}\sum_{i=1}^n\big(\langle A_i,UV^\top\rangle-b_i\big)^2,
\]
which allows us to evaluate first- and second-order stationarity of $\Phi$ precisely for reporting.
\fi 
\paragraph{Dataset generation.}
We set $p=q=20$, rank $r=3$, and sample size $n=400$.
Each $A_i\in\mathbb{R}^{p\times q}$ is drawn i.i.d.\ with entries $A_i(j,k)\sim\mathcal{N}(0,1/(pq))$.
We generate $X^\star=U^\star V^{\star\top}$ with Gaussian factors (followed by a rescaling), and define measurements
$b_i=\langle A_i,X^\star\rangle+\xi_i$ with $\xi_i\sim\mathcal{N}(0,\sigma^2)$ and $\sigma=0.01$.
We initialize $U_0,V_0$ with i.i.d.\ $\mathcal{N}(0,0.1^2)$ entries and set $y_0=\mathbf{0}$.
Unless otherwise stated, we report a single representative run with seed $0$.

\paragraph{Algorithms and oracle model.}
We compare the following DP methods under the same global privacy budget with our DP RGDA method. \textbf{Sto-SPIDER } \cite{liu2023private} and \textbf{Ada-DP-SPIDER } \cite{tao2025second} are designed for single-level minimization i.e. DP-SOSP for minimizing an objective. We include them by running them directly on the explicit value function $\Phi(U,V)$ available in this synthetic task, which is a favorable special case of minimax (the inner maximization can be eliminated in closed form). This gives these baselines strictly more permissive access than the general minimax setting, since they do not need to track $y^\star(x)$. \textbf{PrivateDiff} \cite{zhang2025improved}, a first-order DP method adapted to the minimax formulation. Unlike DP-RGDA, it targets first-order stationarity and does not include an explicit mechanism aimed at excluding strict saddles of $\Phi$.

\paragraph{Metrics.}
We report (i) the outer objective value $\Phi(x_t)$, (ii) the induced gradient norm $\|\nabla\Phi(x_t)\|$, and (iii) an estimate of the minimum Hessian eigenvalue $\lambda_{\min}(\nabla^2\Phi(x_t))$.
The gradient and Hessian-eigenvalue diagnostics are \emph{evaluation only} (computed without privacy noise using the closed-form $\Phi$), and are not used by any method to update iterates.

Due to page limitations, we defer the parameter and privacy settings to Appendix \ref{appendix:experiment}.

\subsection{Experimental Results}
\label{sec:exp_results}
Figure~\ref{fig:exp-ms} reports the trajectories of $\Phi(x_t)$, $\|\nabla\Phi(x_t)\|$, and the estimated $\lambda_{\min}(\nabla^2\Phi(x_t))$.
Table~\ref{tab:matrix_sensing_final} summarizes the final diagnostics at $t=399$. %Table~\ref{tab:matrix_sensing_final} and Figure~\ref{fig:exp-ms} show that DP-RGDA  produces trajectories consistent with controlling both stationarity and negative curvature in a manner aligned with DP minimax SOSP objectives.

\paragraph{Comparison to minimization DP-SOSP baselines.}
Although Sto-SPIDER and Ada-DP-SPIDER are allowed to optimize the explicit value function $\Phi$ directly (a strictly easier oracle model than general minimax optimization), both baselines are substantially worse than DP-RGDA under the same global privacy budget.
At $t=399$, DP-RGDA achieves $\Phi(x_{399})\approx 0.912$ and $\|\nabla\Phi(x_{399})\|\approx 0.425$, whereas Sto-SPIDER and Ada-DP-SPIDER remain far from stationarity with $\Phi(x_{399})\in\{13.70,\,8.08\}$ and gradient norms above $2$.
Moreover, their curvature estimates are more negative ($\lambda_{\min}\approx -0.22$ and $-0.14$), consistent with unstable progress under the compounded DP noise in gradient-difference updates.
These results indicate that, in the minimax setting with a fixed privacy budget, our coupled gradient-tracking strategy provides markedly better robustness than directly applying DP-SPIDER-style minimization methods, even on this favorable special case where $\Phi$ is explicit.

\paragraph{Comparison to a first-order minimax DP baseline.}
PrivateDiff achieves a slightly smaller final objective and gradient norm in this particular run. However, this baseline is aligned with first-order optimality and does not incorporate a mechanism explicitly designed to exclude strict saddles of $\Phi$ in general minimax problems.
By contrast, DP-RGDA is constructed to target second-order stationarity of the value function via its perturb-and-monitor escape rule while operating under the stricter minimax oracle model (privatized gradients of $f$ and tracked inner maximizers).
Empirically, DP-RGDA attains comparable curvature diagnostics to PrivateDiff (both have mildly negative $\lambda_{\min}$ close to $0$), while significantly outperforming the minimization-only DP-SOSP baselines that have more permissive access.

\section{Conclusion}
We develop a first-order method for finding differentially private approximate SOSP for nonconvex--strongly-concave minimax problems. It combines value-function reduction, SPIDER-style gradient tracking, and a perturb-and-monitor escape rule, avoiding Hessians and extra private selection. We prove $(\varepsilon,\delta)$-DP guarantees and utility bounds for both empirical and population objectives, which match the best-known results for DP ERM.

\section*{Impact Statement}
This paper presents algorithmic and theoretical results for computing approximate second-order stationary points in stochastic minimax optimization under differential privacy constraints. The primary positive impact is to enable training and analysis of minimax-based learning procedures  on sensitive datasets while providing a formal, quantifiable privacy guarantee for individual data records. Such guarantees can reduce risks of memorization and unintended information leakage compared with non-private optimization, thereby supporting the responsible use of data in high-stakes domains where privacy is essential.

At the same time, differential privacy is not a complete solution to broader issues such as fairness, downstream misuse, or harmful model behaviors, and it can be misunderstood if the privacy parameters $(\varepsilon,\delta)$ are not chosen and communicated carefully. Moreover, privacy-induced noise may degrade utility, which can disproportionately affect settings with limited data and may require careful validation before deployment. Overall, we view the societal consequences of this work as those typical of advances in privacy-preserving machine learning methodology, with the main ethical consideration being correct privacy accounting and clear reporting of the resulting privacy--utility trade-offs.

\bibliography{example_paper}
\bibliographystyle{icml2026}

%%%%%%%%%%%%%%%%%%%%%%%%%%%%%%%%%%%%%%%%%%%%%%%%%%%%%%%%%%%%%%%%%%%%%%%%%%%%%%%
%%%%%%%%%%%%%%%%%%%%%%%%%%%%%%%%%%%%%%%%%%%%%%%%%%%%%%%%%%%%%%%%%%%%%%%%%%%%%%%
% APPENDIX
%%%%%%%%%%%%%%%%%%%%%%%%%%%%%%%%%%%%%%%%%%%%%%%%%%%%%%%%%%%%%%%%%%%%%%%%%%%%%%%
%%%%%%%%%%%%%%%%%%%%%%%%%%%%%%%%%%%%%%%%%%%%%%%%%%%%%%%%%%%%%%%%%%%%%%%%%%%%%%%
\newpage
\appendix
\onecolumn

\section{Related Work}

\textbf{Finding SOSP for Nonconvex Minimax Optimization} The optimization toolbox for finding SOSP is well developed, particularly for methods that leverage second-order information. In recent years, a wide range of algorithms has been introduced for non-convex minimax optimization. Relative to first-order approaches, however, significantly less attention has been devoted to second-order methods for minimax optimization problems with global convergence rate estimation. Meanwhile, a growing body of recent work indicates that first-order stationary points do not ensure local optimality in nonconvex-(strongly) concave settings, nor do they guarantee global optimality in convex-concave settings. In the non-private case, \cite{lin2022explicit} proposed newton-based methods that utilize Hessian-vector information, achieving performance that matches the theoretically established lower bound in convex-concave settings. For nonconvex-strongly-concave settings, \cite{luo2022finding} developed Minimax Cubic-Newton, which attains a second-order stationary point of $\Phi$ using Hessian Oracles. Additionally, \cite{yang2023accelerating} proposed a Perturbed Restarted Accelerated HyperGradient Descent algorithm, improving the complexity bound using only gradient iterations. However, due to the large noise needed to add to the Hessian, it will be sub-optimial to direclt privatize these methods.

\textbf{DP Minimax} As privacy concerns around data have grown, differential privacy (DP) has become an essential requirement in stochastic optimization. The study of first-order stationary points traces back to the seminal work of \cite{dwork2006calibrating}, which established the foundational framework of differential privacy. Subsequent research has significantly expanded its role in stochastic optimization. DP minimax optimization has also developed rapidly~\cite{yang2022differentially,zhang2025improved,zhang2022bring,zhao2023differentially}. For example, \cite{yang2022differentially} investigates DP-SGDA for stochastic minimax learning via an algorithmic-stability perspective and derives a near-optimal guaranty for convex-concave objectives using SGDA. Moving beyond convex/PL regimes, \cite{zhang2025improved} studies differentially private stochastic minimax optimization in the nonconvex-strongly-concave (NC-SC) setting and provides the first general results in this direction.

\textbf{DP SOSP} However, research on first-order stationary points (FOSP) is insufficient because FOSPs can be local minima, saddle points, or even local maxima. Therefore, finding second-order stationary points (SOSP) has become a crucial problem in nonconvex optimization. Related progress has also been made for private non-convex minimization when the goal is to reach SOSP. In particular, \cite{tao2025second} considers gradient-based procedures for finding the SOSP for a minimum problem and claims that an $\alpha$-SOSP can be achieved with $\alpha = \tilde{O}(\frac{1}{n^{1/3}} + (\frac{\sqrt{d}}{n\varepsilon})^{2/5})$. Furthermore, \cite{liu2024adaptive} connects the guarantees for finding first-order stationary points and second-order stationary points, obtaining $\alpha = \tilde{O}(\frac{1}{n^{1/3}} + (\frac{\sqrt{d}}{n\varepsilon})^{1/2})$ for an $\alpha$-SOSP. Their approach combines the Tree mechanism with second-order information to facilitate escape from saddle points. By contrast, our methods use only first-order information, yet attain the same $\alpha$-SOSP rate as in \cite{liu2024adaptive}.

% Differential privacy (DP) has become a crucial consideration in stochastic optimization due to increasing concerns about data privacy. The pioneering work by \cite{dwork2006calibrating} established the foundational principles of DP, and its application in stochastic optimization has since seen significant progress.  DP minimax optimization~\cite{yang2022differentially,zhang2025improved,zhang2022bring,zhao2023differentially}. For example, \cite{yang2022differentially} analyzes DP-SGDA for stochastic minimax learning using an algorithmic-stability framework. They reach the near-optimal bound for convex-concave objectives using SGDA.  \cite{zhang2025improved} study differentially private stochastic minimax optimization in the nonconvex-strongly-concave (NC-SC) regime and give the first general results beyond convex/PL settings.  \cite{tao2025second} focus on the non-convex setting for finding the SOSP for a minimum problem with gradient methods, where they claimed that the $\alpha$-SOSP can be found with $\alpha = \tilde{O}(\frac{1}{n^{1/3}}+ (\frac{\sqrt{d}}{ n \varepsilon})^{2/5})$. Furthermore, \cite{liu2024adaptive}  bridges the gap between finding the first order stationary points and the second order stationary points, finding the $\alpha$-SOSP with $\alpha = \tilde{O}(\frac{1}{n^{1/3}}+ (\frac{\sqrt{d}}{ n \varepsilon})^{1/2})$, where they apply Tree mechanism and use second order information to escape from the saddle point. Our mehtods use the first order information only and can find the same $\alpha$-SOSP as \cite{liu2024adaptive}'s results.

\section{Proof}
\subsection{Proof of the Empirical Loss}\label{appendix:erm}

To simplify the discussion below,  we define $\nabla_t = (\frac{1}{S_1} \sum_{i=1}^{S_1} \nabla_x F(x_t, y_t; \xi_i)$+$\omega_t,-[\frac{1}{S_1} \sum_{i=1}^{S_1} \nabla_y F(x_t, y_t; \xi_i)$+$\tau_t])$ and  $\Delta_t = (v_{t,k},-u_{t,k})$. Note that we need to set $K =1$ for ERM so there will not be any confusion for the index $k$, namely, the subscript of $\Delta_t$ is well-defined. By Proposition 1 in \cite{fang2018spider}, we have the following lemma:
\begin{lemma}\label{SPIDERap1}
      Consider Algorithm \ref{innerloop}, and for any $t \in \{0, ..., T\}$ let $t_0 = \left\lfloor\frac{t}{q}\right\rfloor q$. If each $\nabla_t$ computed defined above is an unbiased estimate of $\nabla F(w_t; S)$ satisfying with probability $1-\delta_1$,
$$
\|\nabla_{t_0} - \nabla F(w_{t_0}; S)\|^2\leq  B_1^2 \log (1/\delta_1),
$$
and each $\Delta_t$  is an unbiased estimate of the gradient variation satisfying
$$
\|\Delta_t - [\nabla F(w_t; S) - \nabla F(w_{t-1}; S)]\|^2 \leq  B_2^2\|w_t - w_{t-1}\|^2 \log (1/\delta_1).
$$
Then for any $t \geqslant t_0 + 1$, the iterates of Algorithm satisfy

$$
\|\nabla_t - \nabla F(w_t)\|^2 \leq \log_{} (1/\delta )( B_2^2 \sum_{k=t_0+1}^{t} \|w_k - w_{k-1}\|^2 +  B_1^2).
$$

\end{lemma}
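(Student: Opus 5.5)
The plan is the standard SPIDER telescoping argument (Proposition~1 of \cite{fang2018spider}), run with high-probability bounds in place of second moments. Let $\epsilon_t := \nabla_t - \nabla F(w_t;S)$ denote the estimator error; below I write $\nabla_t$ for the recursive estimator held at step $t$. For $t \geqslant t_0+1$ inside one period, the recursion in Algorithm~\ref{innerloop} (with $K=1$) gives $\nabla_t = \nabla_{t-1} + \Delta_t$. Subtracting $\nabla F(w_t;S) = \nabla F(w_{t-1};S) + [\nabla F(w_t;S)-\nabla F(w_{t-1};S)]$ yields
\begin{equation}\nonumber
\epsilon_t = \epsilon_{t-1} + e_t, \qquad e_t := \Delta_t - [\nabla F(w_t;S) - \nabla F(w_{t-1};S)].
\end{equation}
Unrolling back to the refresh index gives $\epsilon_t = \epsilon_{t_0} + \sum_{k=t_0+1}^{t} e_k$.

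The key structural fact is that $\{e_k\}$ is a martingale difference sequence with respect to the filtration $\mathcal{F}_k$ generated by everything up to $w_k$. Indeed, $w_k$ is $\mathcal{F}_{k-1}$-measurable, and $\Delta_k$ is conditionally unbiased by hypothesis: the fresh minibatch is drawn independently and the Gaussian noise has zero mean. Similarly $\epsilon_{t_0}$ is conditionally mean zero. In expectation the cross terms vanish, and one obtains
\begin{equation}\nonumber
\mathbb{E}\|\epsilon_t\|^2 = \mathbb{E}\|\epsilon_{t_0}\|^2 + \sum_k \mathbb{E}\|e_k\|^2 \leqslant B_1^2 + B_2^2 \sum_k \mathbb{E}\|w_k-w_{k-1}\|^2,
\end{equation}
which is exactly Fang et al.'s statement.

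For the stated high-probability version I would instead apply a vector-valued Azuma--Hoeffding / Pinelis inequality for martingales in Hilbert space. Each increment is conditionally bounded by $\|e_k\| \leqslant B_2\|w_k-w_{k-1}\|\sqrt{\log(1/\delta_1)}$, using the sub-Gaussian tail of the Gaussian noise together with the bounded clipped minibatch term. The inequality then gives, with probability $1-\delta_1$,
\begin{equation}\nonumber
\|\epsilon_t\|^2 \lesssim \log(1/\delta_1)\Big(B_1^2 + B_2^2 \sum_{k=t_0+1}^t \|w_k-w_{k-1}\|^2\Big),
\end{equation}
where the refresh term is absorbed through the first hypothesis. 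A union bound over the at most $q$ steps in the period, or over all $T$ steps, only changes the logarithm, and that is absorbed into the constants or into the $\log(1/\delta)$ factor as written.

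The main obstacle is the high-probability step. The increment bounds $B_2\|w_k-w_{k-1}\|$ are themselves random, since they are $\mathcal{F}_{k-1}$-measurable. A plain Azuma inequality needs deterministic bounds, so I would use a self-normalized or predictable-variance version, such as Freedman/Pinelis with the predictable quadratic variation $\sum_k B_2^2\|w_k-w_{k-1}\|^2$. If that proves awkward, the fallback is to use the a.s.\ bounds the algorithm provides. In the descent phase $\|w_k-w_{k-1}\|$ is deterministically bounded by $\eta$ plus the $y$-step, and in the escape phase by the monitored $\bar D$. This makes the increments deterministically bounded, so plain Azuma applies.

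A secondary subtlety is that clipping can make $\Delta_t$ biased. I would handle this by noting that, under Assumption~\ref{lipschitz} and the smoothness assumption, with $C_u \geqslant L\|w_t-w_{t-1}\|$ and $C_v \geqslant M$, clipping is inactive, so unbiasedness holds as assumed in the hypotheses.
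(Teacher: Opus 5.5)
Your core argument is the paper's. The paper gives no proof beyond citing Proposition~1 of \cite{fang2018spider}, and that proposition is exactly your unrolling $\epsilon_t=\epsilon_{t_0}+\sum_{k=t_0+1}^{t}e_k$ with orthogonal martingale increments, which gives the in-expectation bound. The paper never justifies the high-probability form with its $\log(1/\delta_1)$ factors, so that part of your proposal goes beyond it. As sketched, however, that upgrade needs repair in several places.

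First, the hypothesis $\|e_k\|\le B_2\|w_k-w_{k-1}\|\sqrt{\log(1/\delta_1)}$ is not an almost-sure bound, because the Gaussian noise is unbounded, so you cannot feed it to Azuma/Pinelis directly. Even if you treat it as almost sure, it already contains a logarithm. Pinelis then returns $\|\sum_k e_k\|^2\lesssim\log(1/\delta')\log(1/\delta_1)B_2^2\sum_k\|w_k-w_{k-1}\|^2$, which is a squared logarithm, not the single one you claim. To get one log, use the conditional norm-sub-Gaussian parameter of $e_k$ in a sub-Gaussian vector-martingale inequality. That parameter is of order $B_2\|w_k-w_{k-1}\|$ when the noise is calibrated to the step, as in the paper's privacy accounting.

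Second, these scales are predictable but random. Freedman-, Pinelis-, and norm-sub-Gaussian-type bounds hold at a fixed variance level. Putting the random quantity $\sum_k\|w_k-w_{k-1}\|^2$ itself on the right-hand side requires peeling over its range, which costs a $\log\log$ factor and a floor term. That is harmless at the $\tilde{\mathcal{O}}$ level but should be said.

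Third, your fallback does not prove the lemma as stated.
\begin{itemize}
\item It gives a bound with the deterministic step bounds in place of $\|w_k-w_{k-1}\|$.
\item The $y$-step $\lambda\|u_t\|$ is not deterministically bounded because of the noise, so you would also need a stopping-time truncation.
\item Lemma~\ref{dpspider1} later substitutes $\|w_k-w_{k-1}\|=\eta\|\nabla_k\|$, which needs the random-increment form.
\end{itemize}

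Finally, the union-bound logarithm $\log(q/\delta_1)$ cannot be absorbed into the privacy parameter's $\log(1/\delta)$. The $\delta$ in the stated conclusion should be read as a typo for $\delta_1$.
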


\begin{lemma}\label{dpspider1}
    Let the conditions of Lemma \ref{SPIDERp1} be satisfied.  Let $\eta \leq \frac{1}{2L_1}$ and $q \leq O\left(\frac{1}{T^2\eta^2}\right)$. Then the output of Inner Updater, $\bar{w}$, satisfies

\begin{equation}
\mathbb{E}[\|\nabla F(\bar{w}; S)\|] = O\left(\sqrt{\frac{F_0}{qT} +  B_1}\right).
\end{equation}
\end{lemma}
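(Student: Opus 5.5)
This lemma is essentially the DP-SPIDER first-order guarantee of \cite{arora2023faster}, adapted to the joint variable $w=(x,y)$ with $K=1$. I would prove it by a descent argument on $F(\cdot;S)$ along the SPIDER direction $\Delta_t$, combined with the period-wise error control of Lemma~\ref{SPIDERp1}.

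\textbf{Step 1 (one-step descent).} Write the update as $w_{t+1}=w_t-\eta\Delta_t$, where $\Delta_t=(v_t,-u_t)$ and the projection onto $\mathcal{Y}$ is handled by nonexpansiveness. By $L_1$-smoothness,
\[
F(w_{t+1};S)\le F(w_t;S)-\eta\langle\nabla F(w_t;S),\Delta_t\rangle+\tfrac{L_1\eta^2}{2}\|\Delta_t\|^2 .
\]
Let $e_t:=\Delta_t-\nabla F(w_t;S)$. Using $-\langle a,b\rangle=\tfrac12(\|a-b\|^2-\|a\|^2-\|b\|^2)$ and $\eta\le 1/(2L_1)$, this gives
\[
F(w_{t+1})\le F(w_t)-\tfrac{\eta}{2}\|\nabla F(w_t)\|^2-\tfrac{\eta}{4}\|\Delta_t\|^2+\tfrac{\eta}{2}\|e_t\|^2 .
\]

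\textbf{Step 2 (error control within a period).} By Lemma~\ref{SPIDERp1}, with $\|w_k-w_{k-1}\|^2=\eta^2\|\Delta_{k-1}\|^2$,
\[
\|e_t\|^2\lesssim B_2^2\eta^2\sum_{k=t_0}^{t-1}\|\Delta_k\|^2+B_1^2
\]
up to the log factor. Summing over a block of length $q$, each $\|\Delta_k\|^2$ appears at most $q$ times. The total error contribution is therefore at most
\[
\tfrac{\eta}{2}\Big(qB_2^2\eta^2\sum_k\|\Delta_k\|^2+qB_1^2\Big).
\]
The step-size condition in the lemma, $q\lesssim 1/(T^2\eta^2)$ (read together with $\eta\le 1/(2L_1)$, which gives $qB_2^2\eta^2\le 1/2$ once $B_2\lesssim L_1$), is exactly what lets the $-\tfrac{\eta}{4}\|\Delta_k\|^2$ terms absorb the drift. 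This is the period-wise device of Section~\ref{subsec:key-idea}: the privacy-noise variance only accumulates within a block and restarts at each refresh, instead of summing over all of $T$.

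\textbf{Step 3 (telescoping and output).} Summing over all $T$ iterations, telescoping, and bounding $F(w_0)-F(w_T)\le F_0$ gives
\[
\frac1T\sum_t\|\nabla F(w_t;S)\|^2\lesssim \frac{F_0}{\eta T}+B_1^2 ,
\]
up to the log factor. Two ingredients remain. First, the $1/(\eta T)$ factor must be rewritten as $1/(qT)$ using the relation between $\eta$, $q$ and $T$. Second, the inner updater's output $\bar w$ is the selected or random iterate, and Jensen's inequality $\mathbb{E}\|\nabla F(\bar w)\|\le\sqrt{\mathbb{E}\|\nabla F(\bar w)\|^2}$ gives the square-root form. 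The high-probability bounds of Lemma~\ref{SPIDERp1} convert to expectation by integrating the tail over $\delta_1$.

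\textbf{Main obstacle.} The bookkeeping in Step 2 is the delicate part: making the parameter coupling $q\lesssim 1/(T^2\eta^2)$ deliver both the absorption of the $B_2$ drift and the stated $F_0/(qT)$ scaling. The stated form mixes $B_1$ rather than $B_1^2$ under the root, so I would first prove the cleaner bound
\[
\sqrt{\frac{F_0}{\eta T}+B_1^2},
\]
and then check whether the lemma's normalization, for example $B_1$ already denoting a squared quantity, reconciles the two forms.
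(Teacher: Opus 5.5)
Your core argument is the paper's proof. It has the same steps: the $L_1$-smoothness descent step, Lemma~\ref{SPIDERp1} applied within each period with $\|w_k-w_{k-1}\|=\eta\|\Delta_{k-1}\|$, the count that each squared estimator norm appears at most $q$ times per period, absorption of the drift into the descent term, telescoping over phases, and Jensen. The absorption step is the paper's $A_0=\tfrac{\eta}{2}-\tfrac{L_1\eta^2}{2}-\tfrac{\eta^3B_2^2q}{2}=\Theta(\eta)$.

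The only difference is bookkeeping. You keep $-\tfrac{\eta}{2}\|\nabla F(w_t;S)\|^2$ and so bound the average true gradient directly. The paper drops that term and bounds the average estimator norm. It then needs a second use of Lemma~\ref{SPIDERp1} at the random output index, through $\|\nabla F(w_{i^*};S)\|^2\le 2\|\nabla F(w_{i^*};S)-\nabla_{i^*}\|^2+2\|\nabla_{i^*}\|^2$. Your version is a bit shorter and gives the same rate.

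Two of your reconciliation steps are wrong and should be discarded.

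\textbf{The absorption condition.} Absorbing the drift needs $qB_2^2\eta^2\lesssim 1$, and this is exactly the condition the paper's proof invokes, $q\le O(1/(B_2^2\eta^2))$. It does not follow from the literal hypothesis $q\lesssim 1/(T^2\eta^2)$, which only yields $qB_2^2\eta^2\lesssim B_2^2/T^2$. Likewise, $\eta\le 1/(2L_1)$ together with $B_2\lesssim L_1$ only yields $qB_2^2\eta^2=O(q)$, so your parenthetical justification is incorrect. Taken literally, $q\ge 1$ would force $\eta T\lesssim 1$ and make the bound vacuous, so the $T^2$ in the statement should be read as $B_2^2$.

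\textbf{The $1/(qT)$ rewrite.} No relation among $\eta,q,T$ turns $F_0/(\eta T)$ into $F_0/(qT)$, and the paper does not attempt this. Its proof ends with $\|\nabla F(w_{i^*};S)\|^2=O\bigl(F_0/(T\eta)+B_1^2\bigr)$ (times $\log(1/\delta_1)$) followed by Jensen. That is precisely your ``cleaner bound,'' so you should simply stop there.

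Finally, one caveat you share with the paper. Step~1 requires $\Delta_t$ to estimate the gradient of the function being decreased. With $\Delta_t=(v_t,-u_t)$, however, the step ascends in $y$, so $F(\cdot;S)$ is not a potential for the descent--ascent iteration. Both arguments are therefore really the DP-SPIDER bound for minimizing $F$ jointly over $w$.
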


\begin{proof}
    In the following, for any $t \in [T]$, let $t_0 = \left\lfloor\frac{t}{q}\right\rfloor q$ (i.e. the index corresponding to the start of the phase containing iteration $t$). To simplify the notation, we define $w_t = (x_t,y_t)$.

By a standard analysis for smooth functions we have (recalling that $\nabla_t$ is an unbiased estimate of $\nabla F(w_t; S)$ for any $t \in [T]$)

\begin{equation}
F(w_{t+1}; S) \leq F(w_t; S) + \frac{\eta}{2}\|\nabla F(w_t; S) - \nabla_t\|^2 - \left(\frac{\eta}{2} - \frac{L_1\eta^2}{2}\right)\|\nabla_t\|^2.
\end{equation}

Taking expectation we have the following manipulation using the update rule of Algorithm 2

% \begin{align*}
% F(w_{t+1}; S) - F(w_t; S) &\leq \frac{\eta}{2}\mathbb{E}\left[\|\nabla F(w_t; S) - \nabla_t\|^2\right] - \left(\frac{\eta}{2} - \frac{L_1\eta^2}{2}\right)\mathbb{E}\left[\|\nabla_t\|^2\right]\\
% &\leq \frac{\eta B_2^2}{2}\sum_{k=t_0+1}^{t}\mathbb{E}\left[\|w_{k+1} - w_k\|^2\right] + \frac{\eta}{2}\mathbb{E}\left[\|\nabla_{t_0} - F(w_{t_0}; S)\|^2\right]\\
% &\quad - \left(\frac{\eta}{2} - \frac{L_1\eta^2}{2}\right)\mathbb{E}\left[\|\nabla_t\|^2\right]\\
% &\leq \frac{\eta^3 B_2^2}{2}\sum_{k=t_0+1}^{t}\mathbb{E}\left[\|\nabla_k\|^2\right] + \frac{\eta B_1^2}{2} - \left(\frac{\eta}{2} - \frac{L_1\eta^2}{2}\right)\mathbb{E}\left[\|\nabla_t\|^2\right],
% \end{align*}

\begin{align*}
& F(w_{t+1}; S) - F(w_t; S) \\
\leq &\log (1/\delta_1 ) (\frac{\eta}{2}\|\nabla F(w_t; S) - \nabla_t\|^2 - \left(\frac{\eta}{2} - \frac{L_1\eta^2}{2}\right)\|\nabla_t\|^2)\\
\leq &\log (1/\delta_1 ) (\frac{\eta B_2^2}{2}\sum_{k=t_0+1}^{t}\|w_{k+1} - w_k\|^2 + \frac{\eta}{2}\|\nabla_{t_0} - F(w_{t_0}; S)\|^2 - \left(\frac{\eta}{2} - \frac{L_1\eta^2}{2}\right)\|\nabla_t\|^2)\\
\leq & \log (1/\delta_1 )(\frac{\eta^3 B_2^2}{2}\sum_{k=t_0+1}^{t}\|\nabla_k\|^2 + \frac{\eta B_1^2}{2} - \left(\frac{\eta}{2} - \frac{L_1\eta^2}{2}\right)\|\nabla_t\|^2),
\end{align*}

where the second inequality follows from Lemma 4.1 and the last inequality follows from the update rule. Note that if $t = t_0$ the sum is empty. Summing over a given phase we have

% \begin{align*}
% \mathbb{E}[F(w_{t+1}; S) - F(w_{t_0}; S)] &\leq \frac{\eta^3 B_2^2}{2}\sum_{k=t_0}^{t}\sum_{j=t_0+1}^{k}\mathbb{E}\left[\|\nabla_j\|^2\right] + \sum_{k=t_0}^{t}\left[\frac{\eta B_1^2}{2} - \left(\frac{\eta}{2} - \frac{L_1\eta^2}{2}\right)\mathbb{E}\left[\|\nabla_k\|^2\right]\right]\\
% &\leq \frac{\eta^3 B_2^2 q}{2}\sum_{k=t_0}^{t}\mathbb{E}\left[\|\nabla_k\|^2\right] + \sum_{k=t_0}^{t}\left[\frac{\eta B_1^2}{2} - \left(\frac{\eta}{2} - \frac{L_1\eta^2}{2}\right)\mathbb{E}\left[\|\nabla_k\|^2\right]\right]\\
% &= -\sum_{k=t_0}^{t}\left[\underbrace{\left(\frac{\eta}{2} - \frac{L_1\eta^2}{2} - \frac{\eta^3 B_2^2 q}{2}\right)}_{A}\mathbb{E}\left[\|\nabla_k\|^2\right] - \frac{\eta B_1^2}{2}\right],
% \end{align*}

\begin{align*}
F(w_{t+1}; S) - F(w_{t_0}; S) &\leq \log (1/\delta_1 ) (\frac{\eta^3 B_2^2}{2}\sum_{k=t_0}^{t}\sum_{j=t_0+1}^{k}\|\nabla_j\|^2 + \sum_{k=t_0}^{t}\left[\frac{\eta B_1^2}{2} - \left(\frac{\eta}{2} - \frac{L_1\eta^2}{2}\right)\|\nabla_k\|^2\right])\\
&\leq \log (1/\delta_1 ) (\frac{\eta^3 B_2^2 q}{2}\sum_{k=t_0}^{t}\|\nabla_k\|^2 + \sum_{k=t_0}^{t}\left[\frac{\eta B_1^2}{2} - \left(\frac{\eta}{2} - \frac{L_1\eta^2}{2}\right)\|\nabla_k\|^2\right])\\
&= \log (1/\delta_1 )(-\sum_{k=t_0}^{t}\underbrace{\left(\frac{\eta}{2} - \frac{L_1\eta^2}{2} - \frac{\eta^3 B_2^2 q}{2}\right)}_{A_0}\|\nabla_k\|^2 - \frac{\eta B_1^2}{2}),
\end{align*}
where the second inequality comes from the fact that each gradient appears at most $q$ times in the sum. We now sum over all phases. Let $P = \{p_0, p_1, ...,\} = \left\{0, q, 2q, ..., \left\lfloor\frac{T-1}{q}\right\rfloor q, T\right\}$. We have
\begin{align*}
F(w_T; S) - F(w_0; S) &\leq \sum_{i=1}^{|P|}(F(w_{p_i}; S) - F(w_{p_{i-1}}; S))\\
&\leq \log_{} (1/\delta _1)(-\sum_{t=0}^{T}A_0\mathbb{E}\left[\|\nabla_t\|^2\right] + \frac{T\eta B_1^2}{2}).
\end{align*}

Rearranging the above yields
\begin{equation}\label{norm_control}
\frac{1}{T}\sum_{t=0}^{T}\|\nabla_k\|^2 \leq \frac{1}{\log_{} (1/\delta _1)} \frac{F_0}{TA_0} + \frac{\eta B_1^2}{2A_0}.
\end{equation}

Now let $i^*$ denote the index of $\bar{w}$ selected by the algorithm. Note that

\begin{equation}
\|\nabla F(w_{i^*}; S)\|^2 \leq 2\|\nabla F(w_{i^*}; S) - \nabla_{i^*}\|^2 + 2\|\nabla_{i^*}\|^2.
\end{equation}

The second term above can be bounded via inequality \ref{norm_control}. To bound the first term we have by Lemma \ref{SPIDERp1} that
\begin{align*}
\|\nabla_{i^*} - \nabla F(w_{i^*}; S)\|^2 &\leq \log (1/\delta _1) ( B_2^2 \sum_{k=s_{i^*}+1}^{i^*} \|w_k - w_{k-1}\|^2 +  B_1^2)\\
&= \log (1/\delta _1)(\eta^2 B_2^2 \sum_{k=s_{i^*}+1}^{i^*} \|\nabla_k\|^2 +  B_1^2)\\
&\leq \log (1/\delta _1) (\frac{q\eta^2 B_2^2}{T} \sum_{k=0}^{T}\|\nabla_k\|^2 +  B_1^2)\\
&\leq \log (1/\delta _1)(\frac{ B_2^2\eta^2 q F_0}{TA_0} + \frac{\eta^3 q B_2^2}{2A_0} B_1^2 +  B_1^2),
\end{align*}
where the last inequality comes from inequality (3) and the expectation over $i^*$. Plugging into inequality (4) one can obtain

\begin{equation}
\|\nabla F(w_{i^*}; S)\|^2 \leq \log (1/\delta _1) (\frac{2F_0}{TA_0}(1 +  B_2^2\eta^2 q) + \left(\frac{\eta}{A_0} + 2 + \frac{ B_2^2\eta^3 q}{A_0}\right) B_1^2).
\end{equation}

Now recall $A_0 = \frac{\eta}{2} - \frac{L_1\eta^2}{2} - \frac{\eta^3 B_2^2 q}{2}$. Since $q \leq O\left(\frac{1}{ B_2^2\eta^2}\right)$ and $\eta \leq \frac{1}{2L_1}$ we have $A_0 = \Theta(\eta)$. Thus plugging into inequality (5) and again using the fact that $q \leq O\left(\frac{1}{ B_2^2\eta^2}\right)$ we have

\begin{equation*}
\|\nabla F(w_{i^*}; S)\|^2 = O\left(\frac{F_0}{T\eta}(1 +  B_2^2\eta^2 q) + \left(3 + \frac{ B_2^2\eta^3 q}{A_0}\right) B_1^2\right) = O\left(\frac{F_0}{T\eta} +  B_1^2\right).
\end{equation*}

The claim then follows from the Jensen inequality. 
\end{proof}

Given the above lemma, we can derive the following theorem:
\begin{lemma}
    For $\eta=\frac{1}{2L}$, we set $S_1 = n$ and $S_2 \geqslant \max \{(\frac{M n \varepsilon }{\sqrt[]{F_0L d \log_{} (1/\delta )}})^{2/3},\frac{(M nd \log_{} (1/\delta ))^{1/3}}{(L F_0)^{1/6}}\}$, 
    $T  = \max \{(\frac{(LF_0)^{1/4}}{\sqrt[]{M}\bar{\epsilon }})^{4/3}, \frac{n \varepsilon}{\sqrt[]{d } \log_{} (1/\delta )}\}$ and $q = \lfloor \frac{n^2 \varepsilon ^2}{L^2 T d \log_{} (1/\delta )}\rfloor $. 
    %Algorithm \ref{alg:escape} is $(\varepsilon,\delta)$-DP and it can finally find an $\alpha$-second order stationary point with probability $1-\delta_1$, where $\alpha =  \tilde{\mathcal{O}}((\frac{\sqrt{d}}{n \varepsilon})^{2/3}).$
    With probability at least $1-\delta _1$  we have 
    \begin{equation}\nonumber
        \left\| \nabla F(x,y) \right\| \leqslant \tilde{\mathcal{O}}((\frac{\sqrt[]{d \log_{} (1/\delta )}}{n \varepsilon })^{2/3}).
    \end{equation}
\end{lemma}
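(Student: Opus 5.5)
The plan is to instantiate Lemma~\ref{dpspider1} with concrete values of $B_1$ and $B_2$ coming from the Gaussian noise calibration in Algorithm~\ref{innerloop}. Then I substitute the stated $(\eta,S_1,S_2,T,q)$ and check that each term of $O\big(\sqrt{F_0/(T\eta)+B_1^2}\big)$ is $\tilde{\mathcal O}(\bar\epsilon^{2/3})$, where $\bar\epsilon=\sqrt{d\log(1/\delta)}/(n\varepsilon)$. Since the clipping thresholds $C_v,C_u$ are inactive under Assumptions~\ref{lipschitz}--2, I take $C_v\asymp M$ and $C_u\asymp L\|w_t-w_{t-1}\|$, up to the usual bounded-domain scaling. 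With $S_1=n$ the refreshed gradient is exact up to the noise $\omega_t,\tau_t$.

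\textbf{Step 1 (high-probability constants).} Gaussian concentration gives, with probability $1-\delta_1$ after a union bound over the $T$ steps (hence the $\log(1/\delta_1)$ and $\log T$ factors), $\|\omega_t\|^2+\|\tau_t\|^2\lesssim d\sigma_{\omega}^2\log(T/\delta_1)$. So
\[
B_1^2\asymp \frac{dM^2\log(1/\delta)}{\varepsilon^2}\max\Big\{\frac{1}{n^2},\frac{T}{qn^2}\Big\}.
\]
For the difference queries, the sampling error of the $S_2$-minibatch differences is at most $L\|w_t-w_{t-1}\|$ by smoothness. The noise $\zeta,\chi$ has scale $\frac{L\|w_t-w_{t-1}\|\sqrt{\log(1/\delta)}}{\varepsilon}\max\{1/S_2,\sqrt T/n\}$. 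Together these give
\[
B_2^2\asymp L^2\Big(1+\frac{d\log(1/\delta)}{\varepsilon^2}\max\Big\{\frac{1}{S_2^2},\frac{T}{n^2}\Big\}\Big).
\]
These are exactly the hypotheses of Lemma~\ref{SPIDERp1}.

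\textbf{Step 2 (conditions of Lemma~\ref{dpspider1}).} With $\eta=1/(2L)$ the stepsize condition holds. I must verify $q\lesssim 1/(B_2^2\eta^2)$. The choice $q=\lfloor n^2\varepsilon^2/(L^2Td\log(1/\delta))\rfloor$ is designed so that $q\,\eta^2 L^2 dT\log(1/\delta)/(n^2\varepsilon^2)=O(1)$, which handles the $T/n^2$ branch of $B_2$. The lower bound on $S_2$ ensures $d\log(1/\delta)/(\varepsilon^2S_2^2)\lesssim dT/n^2$, so the $1/S_2$ branch is dominated as well. Also, $T\ge n\varepsilon/(\sqrt d\log(1/\delta))$ keeps $q\ge1$ meaningful, or at least makes the $T/(qn^2)$ branch the active one in $B_1$.

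\textbf{Step 3 (balancing).} Plugging in $q$ gives $T/(q n^2)\asymp L^2T^2d\log(1/\delta)/(n^4\varepsilon^2)$. Hence $B_1^2\asymp M^2L^2T^2\bar\epsilon^4$, up to the $1/n^2$ branch, which is $M^2\bar\epsilon^2$ and of lower order. The bound therefore becomes
\[
\frac{LF_0}{T}+M^2L^2T^2\bar\epsilon^4 .
\]
This is minimized at $T^3\asymp F_0/(ML\bar\epsilon^4)$, i.e. the stated $T\asymp\big((LF_0)^{1/4}/(\sqrt M\bar\epsilon)\big)^{4/3}$ up to constants. At that value both terms are $\asymp\bar\epsilon^{4/3}$ times constants in $L,M,F_0$. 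Taking square roots (Jensen in Lemma~\ref{dpspider1}, or directly from the high-probability form) gives $\|\nabla F(x,y)\|\le\tilde{\mathcal O}(\bar\epsilon^{2/3})$.

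\textbf{Main obstacle.} I expect the hard part to be the bookkeeping in Step 2: showing that the two-branch maxima in $\sigma_{\omega_t}$ and $\sigma_{\zeta_t}$, the floor in $q$, and the two lower bounds on $S_2$ and $T$ are mutually consistent. Concretely, this means $q\ge1$, $q\le T$, and $q B_2^2\eta^2=O(1)$ hold simultaneously at the balancing choice of $T$. I would first try to treat the second argument of each $\max$ as the active branch, verify the algebra there, and then argue that the alternative branch only lowers the bound. A secondary subtlety is converting the expectation statement of Lemma~\ref{dpspider1} into the claimed probability-$1-\delta_1$ statement. I would handle this by tracking the $\log(1/\delta_1)$ factors already carried in its proof rather than applying Markov's inequality.
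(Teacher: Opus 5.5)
\textbf{Gap in Steps~1--2.} Your overall route is the paper's. You instantiate $B_1,B_2$ in Lemma~\ref{dpspider1}, use $q\asymp 1/(L^2T\bar\epsilon^2)$ to get $B_1^2\asymp M^2L^2T^2\bar\epsilon^4$, and balance this against $LF_0/T$.

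The break is in how you bound $B_2$. You bound the sampling error of the $S_2$-minibatch gradient difference deterministically by $L\|w_t-w_{t-1}\|$. That puts a constant term into $B_2^2\asymp L^2(1+\cdots)$. With $\eta=1/(2L)$ this makes $B_2^2\eta^2=\Omega(1)$, so the phase-length condition $q\lesssim 1/(B_2^2\eta^2)$ of Lemma~\ref{dpspider1} forces $q=O(1)$.

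Your Step~2 only checks the two noise branches and never this term. The stated $q\asymp 1/(L^2T\bar\epsilon^2)$ is of order $\bar\epsilon^{-2/3}$ at the stated $T$, so it violates the condition. If you were forced to take $q=O(1)$, you would get $B_1^2\asymp M^2T\bar\epsilon^2$, and balancing would give only $\|\nabla F\|\lesssim\bar\epsilon^{1/2}$.

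\textbf{The missing ingredient} is concentration of the minibatch average. The $S_2$ summands are i.i.d.\ draws from $S$, and each centered summand has norm at most $2L\|w_t-w_{t-1}\|$. A vector Hoeffding/Bernstein bound then gives squared deviation $\lesssim L^2\|w_t-w_{t-1}\|^2\log(1/\delta_1)/S_2$, i.e.\ $B_2^2\asymp L^2(1/S_2+T\bar\epsilon^2)$, as in the paper.

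Now $qB_2^2\eta^2=O(1)$ requires $S_2\gtrsim q$, i.e.\ (up to constants) $S_2\gtrsim 1/(T\bar\epsilon^2)$. At the stated $T$ this is exactly the first branch of the hypothesis, $S_2\ge (Mn\varepsilon/\sqrt{F_0Ld\log(1/\delta)})^{2/3}=(M/(\sqrt{LF_0}\,\bar\epsilon))^{2/3}$. You used the $S_2$ lower bound only to suppress the $1/S_2^2$ noise branch. That job is done by the second branch, which encodes $S_2\gtrsim n/\sqrt{T}$.

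A smaller point: the lower bound $T\gtrsim 1/\bar\epsilon$ is what guarantees $q\le T$, not $q\ge 1$. Your alternative reading, that it makes the $T/(qn^2)$ branch of $B_1$ active, is the correct one.
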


\begin{proof}

\textbf{Privacy Proof}: Note that each gradient estimate computed in Algorithm \ref{alg:escape} is $M$ and this estimate is computed at most $\frac{T}{q}$ times. Similarly, for the gradient variation at step $t$, we have norm bound $L\|w_t-w_{t-1}\|= L\|y_t-y_{t-1}\|$ and computed at most $T$ times. Therefore, the scale of noise in both cases ensures the overall algorithm is $(\varepsilon,\delta)$-DP.

\textbf{Convergence proof}: By Lemma \ref{dpspider1}, after setting the parameters, all we need to determine is to specify $ B_1$ and $ B_2$.
To simplify the notation in the following discussion, we define $\bar{\epsilon} = \frac{\sqrt{d\log (1/\delta)} }{n \varepsilon}$. We pick $S_1 = n$, then the condition of Lemma \ref{SPIDERp1} are satisfied with $ B_1 ^2 = \mathcal{O}(\frac{T\bar{\epsilon}^2}{q})$ and $B_2^2 = \mathcal{O}(\frac{L^2}{S_2}+L^2 T \bar{\epsilon}^2)$. Our parameter setting needs to guarantee that $T \geqslant q$ and $T \geqslant \frac{n^2}{S_2^2}$. To make $B_2$'s expression consistent, we set $S_2 \geqslant \frac{1}{T \bar{\epsilon}^2}$. Therefore, $B_2 = \mathcal{O}(T \bar{\epsilon}^2)$. Thus, the condition on Lemma \ref{dpspider1} is satisfied with $q = \frac{L_1^2}{B_2^2} = \frac{1}{T \bar{\epsilon^2}}$ by setting $\eta  = \frac{1 }{2L}$. Then combine the above  discussion with Lemma \ref{dpspider1} with probability at least $1-\delta_1$,  we have
\begin{equation}\nonumber
    \begin{aligned}
        \left\| \nabla F(x,y) \right\|  =& \mathcal{O}(\sqrt[]{\frac{F_0 L}{T}}+ \frac{M \sqrt[]{T}\bar{\epsilon }}{\sqrt[]{q}})\\
         = & \mathcal{O}(\sqrt[]{\frac{F_0 L}{T}}+ M T \bar{\epsilon }^2)
    \end{aligned}
\end{equation}

Now let's turn to $T$. Due to the setting $q = \frac{1}{T \bar{\epsilon }^2}$, it suffices to set $T \geqslant \frac{1}{\bar{\epsilon }}$. Therefore, we can set $T  = \max \{(\frac{(LF_0)^{1/4}}{\sqrt[]{M}\bar{\epsilon }})^{4/3}, \frac{1}{\bar{\epsilon }}\}$. After setting the above parameters, we can derive that
\begin{equation}\nonumber
    \left\| \nabla F(x,y) \right\| \leqslant \mathcal{O}((\sqrt[]{F_0}\bar{\epsilon })^{2/3}) = \mathcal{O}((\frac{\sqrt[]{d \log_{} (1/\delta )}}{n \varepsilon })^{2/3}).
\end{equation}

To derive the above rate, there still are some parameters to be clarified. The restrictions on the batch size implied by $T $ indicate that $S_2 \geqslant \frac{n}{\sqrt[]{T}}$ and thus to have $S_2 \geqslant \frac{M^{1/3}n \bar{\epsilon }^{2/3}}{(L F_0)^{1/6}}$ to satisfy the setting of $T$. Recall that we also need that $S_2 \geqslant \frac{1}{T \bar{\epsilon }^2}$, thus we need $S_2 \geqslant (\frac{M}{\sqrt[]{F_0L}\bar{\epsilon }})^{2/3}$.

\end{proof}

\begin{lemma}
    Given the parameter setting as follows, stepsize $\eta _H$, escaping threshold $t_{\text{thres}} = 2 \log(\tfrac{\eta_H \alpha_H L_\Phi}{C \rho_\Phi r_0})/\eta_H = \tilde{O}(\tfrac{1}{\eta_H \alpha_H})$, perturbation radius $r \leqslant R $ and average movement $\bar{D} \leqslant R^2/t_{thres}^2$ and $\alpha _H = \sqrt[]{\rho _\Phi \alpha }$ . Then for $\forall s $, if our algorithm does not break the escaping phase, then we have $\lambda_{\min } \geqslant - \alpha_H$ with probability $1-\delta _1-\delta _2$.   
\end{lemma}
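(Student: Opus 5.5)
We argue by contraposition, in the style of the "improve-or-localize" analyses of perturbed gradient descent. Suppose that at the anchor $x_{m_s}$ we have $\lambda_{\min}(\nabla^2\Phi(x_{m_s})) < -\alpha_H$, and let $e_1$ be the corresponding unit eigenvector. We will show that, with probability at least $1-\delta_2$ over the perturbation $\xi\sim B_0(r)$, the escape phase is broken within $t_{thres}$ steps. Equivalently, the accumulated movement $D=\sum_{j=m_s+1}^{t}\eta_H^2\|v_j\|^2$ exceeds $(t-m_s)\bar D$ for some $t\le m_s+t_{thres}$. Conditioning on the event of Lemma~\ref{SPIDERp1} (probability $1-\delta_1$), every direction $v_j$ satisfies $v_j=\nabla\Phi(x_j)+e_j$ with $\|e_j\|\le \epsilon_v=\tilde O(\alpha)$. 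This holds because of the bias bound from strong concavity combined with the estimator-deviation bound, exactly as in the proof of the preceding lemma. A union bound then gives the stated probability $1-\delta_1-\delta_2$.

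\textbf{Step 1: localization.} While the phase is not broken, Cauchy--Schwarz gives $\|x_t-x_{m_s}\|\le r+\sum_j\eta_H\|v_j\|\le r+\sqrt{(t-m_s)D}\le r+(t-m_s)\sqrt{\bar D}$. Since $t-m_s\le t_{thres}$, $\bar D\le R^2/t_{thres}^2$ and $r\le R$, every iterate stays in the ball $B(x_{m_s},2R)$.

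\textbf{Step 2: coupling.} Following the coupling argument of Jin et al., we consider two runs started from $x_{m_s}+\xi$ and $x_{m_s}+\xi+r_0e_1$ that share the same privacy noise and the same samples; this is legitimate because the noise is post-processed identically. Let $w_t$ denote the difference of the two trajectories. Writing $\nabla\Phi(x)-\nabla\Phi(x')=\mathcal H(x-x')+\Delta_t(x-x')$ with $\mathcal H=\nabla^2\Phi(x_{m_s})$ and $\|\Delta_t\|\le 2\rho_\Phi R$ (Hessian-Lipschitzness together with Step~1), we get the recursion
\[
w_{t+1}=(I-\eta_H\mathcal H)w_t-\eta_H\Delta_t w_t-\eta_H(e_t-e'_t).
\]
The SPIDER errors of the two runs are themselves coupled. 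Using Lemma~\ref{SPIDERp1} with the same noise, their difference is bounded by $B_2$ times the path difference, which is small because $q\eta_H^2B_2^2=O(1)$ as in Lemma~\ref{dpspider1}. By induction, the $e_1$-component grows like $(1+\eta_H\alpha_H)^{t}r_0$ and dominates all the other terms, provided $R\lesssim \alpha_H/\rho_\Phi$, i.e., $\rho_\Phi R\ll\alpha_H$. After $t_{thres}=2\log(\eta_H\alpha_H L_\Phi/(C\rho_\Phi r_0))/(\eta_H\alpha_H)$ steps this forces $\|w_t\|>4R$. That contradicts Step~1, since both runs must stay in $B(x_{m_s},2R)$. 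Hence at most one of the two runs remains trapped.

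\textbf{Step 3: probability bound.} The set of trapped perturbations therefore has width at most $r_0$ along $e_1$. Its volume fraction in $B_0(r)$ is at most $r_0\sqrt{d}/r$, and choosing $r_0=\delta_2 r/\sqrt d$ makes this fraction at most $\delta_2$, which accounts for the logarithm inside $t_{thres}$. So with probability at least $1-\delta_2$ the iterates leave the $2R$-ball. By Step~1, leaving the ball is only possible if the movement test fires, which means the escape phase is broken.

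\textbf{Main obstacle.} The hardest part is Step~2: controlling the difference of the SPIDER errors between the two coupled runs. The DP noise cancels, but the gradient-difference estimates do not, and their error accumulates over a $q$-period. The plan is to bound it by $B_2\sqrt{\sum\|w_k-w_{k-1}\|^2}$ and absorb it into the exponential growth via the condition $q\eta_H^2B_2^2=O(1)$. If the coupling does not work cleanly, the fallback is to treat $e_t$ as an adversarial error of size $\tilde O(\alpha)$. This requires $\eta_H\epsilon_v t_{thres}\ll r_0$-scale growth, which holds only after reducing the error relative to $\sqrt{\alpha\rho_\Phi}\,R$, and should be checked against the parameter choices.
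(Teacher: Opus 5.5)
Your architecture matches the paper's: contraposition, Cauchy--Schwarz localization from the movement test, two runs started $r_0e_1$ apart, and the slab bound $\sqrt d\,r_0/r\le\delta_2$. Your bookkeeping is in places cleaner than the paper's, since you count the perturbation in the $2R$-ball and use the correct Hessian-Lipschitz condition $\rho_\Phi R\ll\alpha_H$. But Step~2 carries the whole lemma, and it is not proved; neither of your routes closes as stated.

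\emph{The fallback fails.} Let $\gamma:=-\lambda_{\min}(\nabla^2\Phi(x_{m_s}))>\alpha_H$. If you only use $\|e_t\|\le\epsilon_v$, the error contribution to $w_{t+1}-(I-\eta_H\mathcal H)^{t-m_s}r_0e_1$ is at most $\eta_H\sum_\tau(1+\eta_H\gamma)^{t-\tau}\|\nu_\tau\|\le(2\epsilon_v/\gamma)(1+\eta_H\gamma)^{t-m_s}$, whatever $\eta_H$ is. It must stay below $\tfrac14(1+\eta_H\gamma)^{t-m_s}r_0$, so you need $\epsilon_v\lesssim\gamma r_0$. This is sharp: a constant $\nu_\tau=\gamma r_0e_1$ freezes the separation at $r_0$. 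With $\gamma$ just above $\alpha_H$, $r_0=\delta_2r/\sqrt d$ and $r\le R\lesssim\alpha_H/(\rho_\Phi\log)$, the requirement becomes $\epsilon_v\lesssim\delta_2\alpha/(\sqrt d\log)$. That is incompatible with setting $\alpha$ at the estimator-error level, as the claimed rates require. Your criterion (error small relative to $\sqrt{\alpha\rho_\Phi}R$) is too optimistic by the factor $R/r_0\ge\sqrt d/\delta_2$. The paper's proof hands exactly this step to Lemma~\ref{lem:11}, which controls $\nu_\tau$ only through the uniform accuracy of Lemma~\ref{dpspider1}. That is your fallback, so the paper does not supply the missing estimate either.

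\emph{The shared-randomness route is missing several pieces.} First, Lemma~\ref{SPIDERp1} bounds one run's error, not the difference of two runs' errors. You need a separate martingale bound for the difference sequence; there the privacy noise cancels and only the sampling constant (order $L/\sqrt{S_2}$) remains, not the paper's privacy-dominated $B_2$. Second, a bound $\tilde B_2(\sum_k\|w_k-w_{k-1}\|^2)^{1/2}$ helps only if the induction also maintains $\|w_k-w_{k-1}\|\lesssim\eta_H\gamma(1+\eta_H\gamma)^{k-m_s}r_0$. With the naive $\|w_k-w_{k-1}\|\le\eta_HL_\Phi\|w_{k-1}\|$ you would need $\eta_H\sqrt q\,\tilde B_2\lesssim\alpha_H/(L_\Phi\log)$, much stronger than $q\eta_H^2B_2^2=O(1)$. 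Third, $e_t$ contains the inner-maximization bias $\nabla_xf(x_t,y_{t+1})-\nabla\Phi(x_t)$. Its inter-run difference comes from the coupled $y$-iterates lagging behind $y^\star(x_t)$ versus $y^\star(x_t')$, not from SPIDER, and needs its own contraction argument controlled by $\|w_{t+1}-w_t\|$. Fourth, in the population version ($K=O(\kappa)$) the $\arg\min$ choice of $s_t$ is discontinuous in the start point. The two runs can then select different inner iterates, so the injected $\zeta_{t,k}$ no longer cancel, and this persists for the rest of the period. Finally, ``the trapped set has width at most $r_0$'' needs the coupling conclusion for all pairs on each $e_1$-line simultaneously on one event. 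A high-probability statement for a single pair does not give this.
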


\begin{proof}
Let $\{x_t\}, \{x_t'\}$ be two coupled sequences by running \ref{alg:escape} algorithm from 
$x_{m_s+1} = x_{m_s} + \xi$ and $x_{m_s+1}' = x_{m_s}' + \xi'$ with 
$x_{m_s+1} - x_{m_s+1}' = r_0 \mathbf{e}_1$, where $\xi, \xi' \in B_0(r)$, 
$r_0 = \frac{\delta_2 r}{\sqrt{d}}$ and $\mathbf{e}_1$ denotes the smallest eigenvector direction of 
$\nabla^2 \Phi(x_{m_s})$. When $\lambda_{\min}(\nabla^2 \Phi(x_{m_s})) \leqslant -(\frac{\sqrt[]{d \log (1/\delta )}}{n \varepsilon })$, 
by Lemma \ref{lem:11} we have
\[
\max_{m_s < t \leqslant m_s + t_{\text{thres}}} 
\{ \|x_t - x_{m_s}\|, \|x_t' - x_{m_s}\|\} \geqslant 
R
\]
with probability at least $1 - 4\delta_1$.  

Let $\mathcal{S}$ be the set of $x_{m_s+1}$ that will not generate a sequence moving out of the ball with center $x_{m_s}$ and radius $r$. Then the projection of $\mathcal{S}$ onto direction $\mathbf{e}_1$ should not be larger than $r_0$. By integration, we can calculate the volume of the ball and stuck region in $d$-dimension and further check that the probability of $x_{m_s+1} \in \mathcal{S}$ is smaller than $\delta_2$ as $\xi$ is drawn from uniform distribution, which is shown in Eq.~(\ref{42.0}):
\begin{equation}\label{42.0}
Pr(x_{m_s+1} \in \mathcal{S}) \leqslant \frac{r_0 V_{d-1}(r)}{V_d(r)} 
\leqslant \frac{\sqrt{d} r_0}{r} \leqslant \delta_2
\end{equation}
where $V_d(r)$ is the volume of $d$-dimension ball with radius $r$.  
Applying union bound, with probability at least $1 - 4\delta_1 - \delta_2$ we have
\begin{equation}
\exists m_s < t \leqslant m_s + t_{\text{thres}}, \quad 
\|x_t - x_{m_s+1}\| \geqslant R.
\end{equation}

If  Algorithm \ref{alg:escape} does not break the escaping phase, then for $\forall m_s < t \leqslant m_s + t_{\text{thres}}$ we have
\begin{equation}
\|x_t - x_{m_s+1}\| < \sqrt{ (t - m_s) 
\sum_{i=m_s+1}^{t-1} \|x_{i+1} - x_i\|^2 }
\leqslant (t - m_s)\sqrt{\bar{D}}
\end{equation}
which is derived by Cauchy-Schwartz inequality. By the choice of parameters 
$t_{\text{thres}}$ and $\bar{D}$, we have
\begin{equation}
\|x_t - x_{m_s+1}\| < t_{\text{thres}} \sqrt{\bar{D}} \leqslant 
R.
\end{equation}

Therefore, when $\lambda_{\min}(\nabla^2 \Phi(x_{m_s})) \leqslant -\alpha _H$, 
with probability at least $1 - 4\delta_1 - \delta_2$ our Algorithm \ref{alg:escape}  will break the escaping phase.
\end{proof}

% Take $A_d = \frac{\sqrt[]{d \log (1/\delta )}}{n \varepsilon }$, then we have the following guarantee:
% \begin{lemma}
% In the descent phase, let stepsize $\eta  \leqslant \frac{\sqrt[]{d \log (1/\delta )}}{2L_\Phi n \varepsilon }$, then  for $\forall s $ we have
%     \begin{equation}\nonumber
%         \Phi (x_{m_s})- \Phi (x_{t_s}) \geqslant (m_s-t_s) (\frac{\eta }{n^{2/3}}- \eta n^{2/3} A_d^{3/4}).
%     \end{equation}

% \end{lemma}

% \begin{proof}
% Let $\eta _t = \eta / \left\| v_t \right\| $, then we have
% \begin{equation}\nonumber
%     \begin{aligned}
%         \Phi (x_{t+1}) \leqslant & \Phi (x_{t}) + \langle \nabla \Phi (x_t),x_{t+1}-x_t \rangle  + \frac{L_{\Phi}}{2}\left\| x_{t+1}-x_t \right\| ^2\\
%         \leqslant & \Phi (x_t)- (\frac{\eta _t}{2}-\frac{L_\Phi \eta _t^2}{2})\left\| v_t \right\| ^2 + \frac{\eta _t}{2}\left\| v_t-\nabla (x_t) \right\| ^2 \leqslant \Phi (x_t)-\eta (\frac{\sqrt[]{d \log (1/\delta )}}{n \varepsilon })^{2/3},
%     \end{aligned}
% \end{equation}
% where the first inequality holds because of Assumption 1. The second is derived by AM-GM inequality. Lemma 1 and the condition $\left\| v_t \right\| \geqslant \frac{\sqrt[]{d \log (1/\delta )}}{n \varepsilon }$  lead to the last inequality.
% \end{proof}
\begin{lemma}\label{lem:11}
Set stepsize $\eta_H \leqslant \frac{r_0}{2 \left\|v_t \right\| } \leqslant \frac{1}{2}(\frac{\sqrt[]{d\log (1/\delta)}}{n \varepsilon })^{-4/3}r_0$, and $R = \frac{1}{2 L_{\Phi }\eta _H} = \frac{1}{L_{\Phi }r_0} (\frac{\sqrt[]{d\log (1/\delta)}}{n \varepsilon })^{4/3}$,  
perturbation radius $r \leqslant \tfrac{L_\Phi \eta_H \alpha_H}{C \rho_\Phi}$ and threshold 
$t_{\text{thres}} = 2 \log(\tfrac{\eta_H \alpha_H L_\Phi}{C \rho_\Phi r_0})/\eta_H = \tilde{O}(\tfrac{1}{\eta_H \alpha_H})$, 
where $r_0 \leqslant r$ and $C = \tilde{O}(1)$. Suppose $-\lambda_{\min}(\nabla^2 \Phi(x_{m_s})) \leqslant -\alpha _H$. 

Let $\{x_t\}, \{x_t'\}$ be two coupled sequences by running Algorithm \ref{alg:escape} from 
$x_{m_s+1} = x_{m_s} + \xi$ and $x_{m_s+1}' = x_{m_s} + \xi'$ with 
$x_{m_s+1} - x_{m_s+1}' = r_0 \mathbf{e}_1$, where $\xi, \xi' \in B_0(r)$ and $\mathbf{e}_1$ denotes the smallest eigenvector direction of $\nabla^2 \Phi(x_{m_s})$. 
Then with probability at least $1 - 4\delta_1$ (for $\delta_1$ in Lemma \ref{dpspider1}), we have
\begin{equation}
\max_{m_s < t \leqslant m_s + t_{\text{thres}}} 
\{ \|x_t - x_{m_s}\|, \|x_t' - x_{m_s}\|\} \geqslant 
R.
\end{equation}
\end{lemma}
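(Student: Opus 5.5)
This is the standard coupling argument from perturbed gradient descent (Jin et al.), adapted to inexact directions $v_t$. Throughout I read the hypothesis as $\lambda_{\min}(\nabla^2\Phi(x_{m_s}))\le-\alpha_H$.

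I argue by contradiction. Suppose both sequences stay in the ball $B(x_{m_s},R)$ for all $m_s<t\le m_s+t_{\text{thres}}$. Let $\hat w_t = x_t - x_t'$. During the escape phase both sequences follow $x_{t+1}=x_t-\eta_H v_t$. Write $v_t=\nabla\Phi(x_t)+e_t$, where $e_t$ is the estimator error controlled by Lemma~\ref{SPIDERp1}, with the bias from $y_{t+1}$ handled through strong concavity. Let $\mathcal H=\nabla^2\Phi(x_{m_s})$. By Hessian Lipschitzness of $\Phi$,
\[
\hat w_{t+1}=(I-\eta_H\mathcal H)\hat w_t-\eta_H\Delta_t\hat w_t-\eta_H(e_t-e_t'),
\]
where $\|\Delta_t\|\le\rho_\Phi\max\{\|x_t-x_{m_s}\|,\|x_t'-x_{m_s}\|\}\le\rho_\Phi R$. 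Unrolling from $\hat w_{m_s+1}=r_0\mathbf e_1$ gives
\[
\hat w_{t}=p(t)-q(t),\qquad p(t)=(I-\eta_H\mathcal H)^{t-m_s-1}r_0\mathbf e_1,
\]
and $\|p(t)\|=(1+\eta_H\alpha_H)^{t-m_s-1}r_0$ grows geometrically.

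The plan is to show by induction that $\|q(t)\|\le\|p(t)\|/2$ for all $t$ in the window. The perturbation $q(t)$ is a sum of two kinds of terms.
\begin{itemize}
\item Hessian-drift terms: each is bounded by $\eta_H\rho_\Phi R\sum_\tau(1+\eta_H\alpha_H)^{t-1-\tau}\|\hat w_\tau\|$. Using the induction hypothesis $\|\hat w_\tau\|\le\tfrac32\|p(\tau)\|$, this is at most $\eta_H\rho_\Phi R\,t_{\text{thres}}\cdot\tfrac32\|p(t)\|$, which is $\le\|p(t)\|/4$ once $R\lesssim\alpha_H/(C\rho_\Phi)$. That requirement is what the choice of $r$ together with $C=\tilde O(1)$ encodes.
\item Noise terms $\eta_H\sum_\tau(1+\eta_H\alpha_H)^{t-1-\tau}\|e_\tau-e'_\tau\|$: here I use the high-probability bounds of Lemma~\ref{SPIDERp1} and Lemma~\ref{dpspider1}, each with probability $1-\delta_1$ for each of the two sequences and each of the two bound types, which gives the stated $1-4\delta_1$. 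The SPIDER error has a part proportional to accumulated movement, bounded by $B_2\sqrt{\sum\|w_k-w_{k-1}\|^2}$; on the coupled difference this gives another drift-type term of order $L_\Phi\eta_H\cdot\|\hat w\|$. The remaining part, of order $B_1$ plus the privacy noise, must be dominated by $r_0\eta_H\alpha_H$-scale growth. This is where the step-size constraint $\eta_H\le r_0/(2\|v_t\|)$ and the privacy scale $(\sqrt{d\log(1/\delta)}/(n\varepsilon))^{4/3}$ enter.
\end{itemize}

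With $\|q(t)\|\le\|p(t)\|/2$, we get $\|\hat w_t\|\ge\tfrac12(1+\eta_H\alpha_H)^{t-m_s-1}r_0$. At $t=m_s+t_{\text{thres}}$ with $t_{\text{thres}}=2\log(\eta_H\alpha_H L_\Phi/(C\rho_\Phi r_0))/\eta_H$, this exceeds $2R$. But the triangle inequality gives $\|\hat w_t\|\le\|x_t-x_{m_s}\|+\|x_t'-x_{m_s}\|<2R$, a contradiction. One of the two sequences must therefore leave the ball of radius $R$.

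The main obstacle is the noise term. The DP noise and the SPIDER error are not Lipschitz in the iterate, so they do not cancel under the coupling. The two sequences may share randomness (same samples and same Gaussian draws), but only the clipping and recursion differences cancel; $B_1$-type terms do not. My first attempt is to couple the Gaussian noises identically, so that $e_t-e_t'$ reduces to the gradient-difference part. That part is Lipschitz in $\hat w$ with constant $B_2$ and can be absorbed into the drift bound. If exact coupling is not allowed, I would instead bound the noise-induced part by $\eta_H t_{\text{thres}}(1+\eta_H\alpha_H)^{t}\cdot\tilde O(B_1)$ and use the parameter choices linking $\eta_H$, $r_0$ and the privacy scale to show it is at most $\|p(t)\|/4$.
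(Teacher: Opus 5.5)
Your skeleton is the paper's proof. Both argue by contradiction, use the recursion $\hat w_{t+1}=(I-\eta_H\mathcal H)\hat w_t-\eta_H(\Delta_t\hat w_t+\nu_t)$, split it into $p$ and $q$, prove $\|q\|\le\|p\|/2$ by induction on a $1-4\delta_1$ event, and contradict $\|\hat w_t\|<2R$. The problem is that the two estimates carrying the induction are not established, and with the lemma's own parameters they cannot be.

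For the drift term, your bound $\tfrac32\eta_H\rho_\Phi R\,t_{\text{thres}}\|p(t)\|$ is correct, but it needs $R=\tilde O(\alpha_H/\rho_\Phi)$. The constraint $r\le L_\Phi\eta_H\alpha_H/(C\rho_\Phi)$ does not supply this: it restricts the perturbation radius, while the lemma fixes $R=1/(2L_\Phi\eta_H)$. Take $\eta_H\le 1/L_\Phi$, which is needed for $\|(I-\eta_H\mathcal H)^k\|\le(1+\eta_H\gamma)^k$, and $t_{\text{thres}}=\tilde\Theta(1/(\eta_H\alpha_H))$. Then $\eta_H\rho_\Phi R\,t_{\text{thres}}=\tilde\Theta(\rho_\Phi/(L_\Phi\eta_H\alpha_H))\gg1$ as $\alpha\to0$.

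This is not slack in your estimate; the conclusion itself fails at this $R$. Consider $\Phi(x)=-\tfrac{\alpha_H}{2}x_1^2+\tfrac{\rho_\Phi}{6}|x_1|^3+\tfrac{L_\Phi}{2}\|x_{2:d}\|^2$, realized by $f=\Phi(x)-\tfrac{\mu}{2}\|y\|^2$. Steps started near the saddle at $0$, even with gradient errors of size $O(\alpha)$, stay within $O(\alpha_H/\rho_\Phi)$ of it, far inside radius $R\ge 1/2$.

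The paper only reaches the weaker-looking requirement $L_\Phi\eta_H R\le 3/4$ by writing $\|\Delta_\tau\|\le L_\Phi R$ and dropping the factor $t-m_s$ from $\sum_\tau\|\Delta_\tau\|$. You would have to shrink $R$ to the Jin et al.\ scale $\tilde\Theta(\alpha_H/\rho_\Phi)$, which changes the statement. Similarly, the literal $t_{\text{thres}}=2\log(\cdot)/\eta_H$ gives growth only $(\eta_H\alpha_H L_\Phi/(C\rho_\Phi r_0))^{2\alpha_H}$. Your final step ``this exceeds $2R$'' therefore needs the $\tilde\Theta(1/(\eta_H\alpha_H))$ reading, with a large enough logarithmic factor.

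The noise step, which you correctly identify as the crux, is left open, and neither of your options closes it as stated.

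\textbf{Shared noise.} With shared samples and shared Gaussian draws, $e_t-e'_t$ still contains the inner-tracking bias difference $[\nabla_x f(x_t,y_{t+1})-\nabla\Phi(x_t)]-[\nabla_x f(x'_t,y'_{t+1})-\nabla\Phi(x'_t)]$. Strong concavity bounds each bracket only uniformly, through $\|\mathcal G_\lambda\|$ or $\|\nabla_y f\|$, not by a multiple of $\|\hat w_t\|$. So this is not a $B_2$-Lipschitz term unless you also prove a differential tracking bound on $(y_{t+1}-y'_{t+1})-(y^\star(x_t)-y^\star(x'_t))$. Moreover, a term $c\|\hat w_\tau\|$ can be absorbed into the drift only if $c\,\eta_H t_{\text{thres}}\lesssim 1$, i.e.\ $c=\tilde O(\alpha_H)$, which you would still have to check.

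\textbf{Uniform bound.} Your fallback is exactly the paper's route: bound $\|\nu_\tau\|\le\bar\epsilon^{2/3}$ uniformly via Lemma~\ref{dpspider1} and sum the geometric series. Carried out, it contributes $\eta_H\nu\sum_{\tau=m_s+1}^{t}(1+\eta_H\gamma)^{t-\tau}\le\frac{\nu}{\gamma}(1+\eta_H\gamma)^{t-m_s}$, so it requires $\nu\le\gamma r_0/4$. Take the worst case $\gamma=\alpha_H$, and use $r_0=\delta_2 r/\sqrt d$ (as in the escaping lemma that invokes this one), $r\le L_\Phi\eta_H\alpha_H/(C\rho_\Phi)$ and $\alpha_H^2=\rho_\Phi\alpha$. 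The requirement becomes $\nu\lesssim\frac{\delta_2 L_\Phi\eta_H}{C\sqrt d}\,\alpha$. That is a factor of order $\sqrt d/\delta_2$ below the accuracy actually available, $\nu\asymp\alpha\asymp\bar\epsilon^{2/3}$ (likewise $B_1\asymp\bar\epsilon^{2/3}$ under the ERM parameters). The hypothesis $\eta_H\le r_0/(2\|v_t\|)$ caps the step length, not $\nu$. The paper merely asserts this inequality ``by the definitions of $\eta_H$ and $t_{\text{thres}}$,'' so appealing to the parameter choices does not finish your argument.

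You need either error bounds smaller by that $\sqrt d/\delta_2$ factor, or a coupling under which the entire error difference, $y$-bias included, is proportional to the coupled displacement with constant $\tilde O(\alpha_H)$.
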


\begin{proof}
To prove this lemma, we assume the contrary:
\begin{equation}\label{conflicts.0}
\forall m_s < t \leqslant m_s + t_{\text{thres}}, \quad 
\|x_t - x_{m_s}\| < R, \quad 
\|x_t' - x_{m_s}\| < R.
\end{equation}

Define $w_t = x_t - x_t'$ and $\nu_t = v_t- \nabla \Phi(x_t) - (v_t' - \nabla \Phi(x_t'))$.  
We have
\begin{align}\label{recursion.0}
w_{t+1} &= w_t - \eta_H (x_t - x_t') w_t - \eta_H (\nabla \Phi(x_t) - \nabla \Phi(x_t')) - \eta_H \nu_t \notag \\
&= (I - \eta_H \mathcal{H}) w_t - \eta_H (\Delta_t w_t + \nu_t) 
\end{align}
where
\begin{equation}
\mathcal{H} = \nabla^2 \Phi(x_{m_s}), \qquad 
\Delta_t = \int_0^1 \left[\nabla^2 \Phi(x_t' + \theta(x_t - x_t')) - \mathcal{H}\right] d\theta.
\end{equation}

Let
\begin{align}
p_{t+1} &= (I - \eta_H \mathcal{H})^{t-m_s} w_{m_s+1}, \qquad 
q_{t+1} = \eta_H \sum_{\tau = m_s+1}^t (I - \eta_H \mathcal{H})^{t-\tau} (\Delta_\tau w_\tau + \nu_\tau) 
% \tag{93}
\end{align}
and apply recursion to Eq.~(\ref{recursion.0}), we can obtain
\begin{equation}
w_{t+1} = p_{t+1} - q_{t+1}.
% \tag{94}
\end{equation}

Next, we will inductively prove
\begin{equation}
\|q_t\| \leqslant \|p_t\|/2, \quad \forall m_s < t \leqslant m_s + t_{\text{thres}}.
\end{equation}

First, when $t = m_s+1$ the conclusion holds since $\|q_{m_s+1}\| = 0$.  
Suppose the above equation is satisfied for $\tau \leqslant t$. Then we have
\begin{align}
\|w_\tau\| &\leqslant \|p_\tau\| + \|q_\tau\| \leqslant \tfrac{3}{2}\|p_\tau\| 
= \tfrac{3}{2}(1+\eta_H \gamma)^{\tau - m_s -1} r_0.
\end{align}

Then for the case $\tau = t+1$, by the above two equations we have
\begin{align}
\|q_{t+1}\| &\leqslant \eta_H (1+\eta_H \gamma)^{t - m_s} \cdot \frac{3}{2} \sum_{\tau=m_s+1}^t \|\Delta_\tau\| r_0 
+ \eta_H \sum_{\tau=m_s+1}^t (1+\eta_H \gamma)^{t-\tau}\|\nu_\tau\| \notag \\
&\leqslant (1+\eta_H \gamma)^{t-m_s} \Big( \eta _H L_{\Phi } R r_0 + \tfrac{1}{4}r_0 \Big)\\
&\leqslant \tfrac{1}{2}(1+\eta_H \gamma)^{t-m_s} r_0 = \|p_{t+1}\|/2 .
\end{align}
to get the above result, all we need is to assume that $L _{\Phi } \eta_H R  \leqslant \frac{3}{4}$. 

In the second inequality, we use Lipschitz Hessian  to obtain $\|\Delta_\tau\| \leqslant L _{\Phi }R$ and we use Lemma \ref{dpspider1} problem and the fact 
\[
a^{t+1} - 1 = (a-1)\sum_{s=0}^t a^s
\]
to obtain $\|\nu_\tau\| \leqslant (\frac{\sqrt[]{d \log (1/\delta )}}{n \varepsilon })^{2/3}$ with probability $1 - 4\delta_1$.  
The last inequality can be achieved by the definitions of $\eta_H$ and $t_{\text{thres}}$.  

Now we have
\begin{equation}
\tfrac{1}{2}(1+\eta_H \gamma)^{t-m_s-1} r_0 
\leqslant \|w_t\| 
\leqslant \|x_t - x_{m_s}\| + \|x_t' - x_{m_s}\|
\end{equation}
which conflicts with Eq.~(\ref{conflicts.0}) due to the choice of $t_{\text{thres}}$.

Note that $\mathrm{e}_1$ is the eigenvector of Hessian H, i.e. $H \mathrm{e}_1 = -r \mathrm{e}_1$. Therefore, $P_{t+1} = (I-\eta _H H)^{t-m_s}$
\end{proof}

\begin{theorem}
    Combining the above two lemmas, we can finally derive that  Algorithm~\ref{alg:escape} is $(\varepsilon,\delta)$-DP and outputs a point $x_{\mathrm{out}}$ such that, with probability at least $1-\delta_1-\delta_2$,
\[
\|\nabla \Phi_S(x_{\mathrm{out}})\| \;\le\; \alpha,
\lambda_{\min}\!\big(\nabla^2 \Phi_S(x_{\mathrm{out}})\big) \;\ge\; -\sqrt{\rho_\Phi\,\alpha},
\]
with $\alpha \;=\; \widetilde{\mathcal{O}}\!\left(\bar\epsilon^{2/3}\right)
\;=\; \widetilde{\mathcal{O}}\!\left(\Big(\tfrac{\sqrt{d\log(1/\delta)}}{n\varepsilon}\Big)^{2/3}\right).$  
\end{theorem}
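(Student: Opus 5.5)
The plan has three parts: privacy, the first-order guarantee, and the second-order guarantee. The two second-order facts come from the two preceding lemmas: the escaping-phase lemma (built on Lemma~\ref{lem:11}) and the ERM gradient-accuracy lemma (built on Lemma~\ref{dpspider1}). What remains is to glue them through the two-phase structure of Algorithm~\ref{alg:escape}, with a potential-decrease argument showing that some escaping phase must run to completion within $T$ iterations.

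\emph{Privacy.} The only data-dependent quantities are the clipped sums in Algorithm~\ref{innerloop}. The refresh queries have sensitivity $C_v/S_1$ and there are at most $T/q$ of them. The recursive queries have sensitivity at most $2C_u/S_2$ and there are at most $T$ of them. With the stated $\sigma_{\omega_t},\sigma_{\chi_t}$, the Gaussian mechanism plus advanced composition (or moments accounting) gives $(\varepsilon,\delta)$-DP for the whole transcript $\{u_t,v_t\}$. Every other step of Algorithm~\ref{alg:escape} is post-processing: the normalized steps, the perturbation $\xi$, the statistic $D$, and the output $x_{m_s}$.

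\emph{Uniform gradient accuracy.} Take the event of probability $1-\delta_1$ from the ERM lemma, with a union bound over $t\le T$ that is absorbed into $\log(1/\delta_1)$. I would extract from its proof the pathwise bound $\|v_t-\nabla\Phi_S(x_t)\|\le c\,\bar\epsilon^{2/3}=:\alpha/4$ for all $t$. This needs two ingredients. The first is Lemma~\ref{SPIDERp1} with the period-wise sum $\sum_{k=t_0+1}^t\|w_k-w_{k-1}\|^2\le q\eta^2$, which holds because steps are normalized. The second is strong concavity, which converts the $y$-error into $\|\nabla_xf(x_t,y_{t+1})-\nabla\Phi_S(x_t)\|\le L\|y_{t+1}-y^\star(x_t)\|\lesssim\kappa\|G_\lambda\|$.

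On this event there are three cases.
\begin{itemize}
\item In a descent step ($\|v_t\|\ge\alpha$), $L_\Phi$-smoothness gives $\Phi_S(x_{t+1})\le\Phi_S(x_t)-\eta\|v_t\|+\eta\|v_t-\nabla\Phi_S(x_t)\|+\tfrac{L_\Phi\eta^2}{2}\le\Phi_S(x_t)-\eta\alpha/2$ once $\eta\lesssim\alpha/L_\Phi$.
\item When an escape phase starts, $\|v_{m_s}\|<\alpha$ gives $\|\nabla\Phi_S(x_{m_s})\|\le\|v_{m_s}\|+\alpha/4\le 5\alpha/4$. Rescaling $\alpha$ by a constant makes this $\le\alpha$.
\item When an escape phase breaks early because the movement exceeds $(t-m_s)\bar D$, I would run the standard improve-or-localize argument to obtain a decrease of order $\Phi_S(x_{m_s})-\Phi_S(x_t)\gtrsim (t-m_s)\bar D/\eta_H$. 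The perturbation costs at most $\alpha r+L_\Phi r^2/2$, which is dominated by this decrease for $r\asymp\sqrt{\alpha/\rho_\Phi}$ and $\bar D$ as in the theorem.
\end{itemize}
Summing over phases, the total decrease is bounded by $F_0$, so the number of descent steps and of broken escape phases is finite. With $T$ chosen larger than this count times $t_{\mathrm{thres}}$, some escape phase must reach $esc=t_{\mathrm{thres}}$, and the algorithm returns that anchor $x_{m_s}$.

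\emph{Second-order condition at the output.} For the returned anchor, the escaping-phase lemma applies. If $\lambda_{\min}(\nabla^2\Phi_S(x_{m_s}))<-\alpha_H$, then with probability $1-4\delta_1-\delta_2$ the phase would have broken. Contrapositively, the returned $x_{m_s}$ satisfies $\lambda_{\min}\ge-\sqrt{\rho_\Phi\alpha}$. Rescaling $\delta_1$ and taking a union bound over the at most $T$ phases (absorbed in $\tilde{\mathcal O}$) gives total failure probability $\delta_1+\delta_2$. Setting $\alpha\asymp\bar\epsilon^{2/3}$ is consistent with the condition $\|\nu_\tau\|\le\bar\epsilon^{2/3}$ used in Lemma~\ref{lem:11}.

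\emph{Main obstacle.} I expect the hardest step to be the parameter compatibility in the break-case decrease. Each broken escape phase must yield enough decrease of $\Phi_S$ despite the noisy direction $v_t$ and the perturbation. That requires checking, against the choices of $R,\eta_H,r,t_{\mathrm{thres}},\bar D$ in Lemma~\ref{lem:11}, that $\bar D/\eta_H$ dominates both $\eta_H\alpha^2$ and the perturbation cost. The first thing I would try is to set $\bar D\asymp R^2/t_{\mathrm{thres}}^2$ and $R\asymp\sqrt{\alpha/\rho_\Phi}$, then verify the inequality using the bound on $\|\nu_\tau\|$.
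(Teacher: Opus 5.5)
Your architecture (post-processing privacy, a uniform accuracy event, the descent/escape case split, a potential argument forcing some escape phase to reach $t_{\mathrm{thres}}$, and the contrapositive of the escaping-phase lemma) is the right one. The termination step also goes beyond the paper: its proof only invokes the ERM gradient lemma and the escaping-phase lemma, and never shows that an escape phase actually completes within $T$ iterations.

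\textbf{The gap.} The inequality your termination step needs, that the perturbation cost is dominated by the decrease certified by a broken phase, fails for $r\asymp\sqrt{\alpha/\rho_\Phi}$.
\begin{itemize}
\item The break test can fire at $t=m_s+1$, so a broken phase certifies only about $(t-m_s)\bar D/\eta_H$, and possibly just $\bar D/\eta_H$.
\item The escaping-phase lemma forces $\bar D\le R^2/t_{\mathrm{thres}}^2$. With $R\asymp\sqrt{\alpha/\rho_\Phi}$ and $t_{\mathrm{thres}}\asymp \log(\cdot)/(\eta_H\alpha_H)$, this gives $\bar D\asymp \eta_H^2\alpha^2$ up to logs.
\item Hence a broken phase certifies a decrease of only $\tilde O(\alpha^2/L_\Phi)$. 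Even the longest admissible phase gives at most $t_{\mathrm{thres}}\bar D/\eta_H=\tilde O(\alpha^{3/2}/\sqrt{\rho_\Phi})$.
\item Meanwhile the perturbation can raise $\Phi_S$ by $\alpha r+L_\Phi r^2/2\asymp L_\Phi\alpha/\rho_\Phi$. This is larger by a factor $\gtrsim L_\Phi/\sqrt{\rho_\Phi\alpha}\to\infty$.
\item With such $r$, $\|v_{m_s+1}\|$ can be of order $L_\Phi r\gg\alpha$, so immediate breaks are typical, not a corner case.
\end{itemize}
Nothing in your bookkeeping then prevents the cycle perturb--break--descend--perturb from repeating with no net decrease. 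So the claim that only finitely many phases break does not follow.

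\textbf{The repair.} Put $r$ on the scale of the step size rather than of $R$. Take $r\lesssim\eta_H\alpha/\mathrm{polylog}$, for example $r\asymp\alpha/L_\Phi$ up to logs, so that $\alpha r+L_\Phi r^2\le\bar D/(8\eta_H)$.
\begin{itemize}
\item The escaping-phase lemma only upper-bounds $r$ and sees it only through $\log(1/r_0)$ in $t_{\mathrm{thres}}$, so this costs only logarithmic factors. The paper's population lemma in fact takes $r$ of the same order as $\eta$.
\item You also need the uniform error to be at most $\alpha/(C\log)$, not $\alpha/4$. The reason is that $\bar D/\eta_H$ and $\eta_H\|v_t-\nabla\Phi_S(x_t)\|^2$ are both of order $\eta_H\alpha^2$ up to logs.
\item With these choices every iteration outside the final phase nets a decrease $\tilde\Omega(\alpha^2/L_\Phi)$. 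So $T=\tilde O(F_0L_\Phi\alpha^{-2})=\tilde O(\bar\epsilon^{-4/3})$ suffices, which matches the $T$ of the ERM gradient lemma.
\end{itemize}

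\textbf{A secondary caveat.} The bound $\sup_t\|v_t-\nabla\Phi_S(x_t)\|\lesssim\bar\epsilon^{2/3}$ is not what the ERM gradient lemma proves; that lemma bounds the joint field $\nabla F(\bar w;S)$ at a selected iterate. To get the uniform bound you must apply Lemma~\ref{SPIDERp1} at every $t$ and add a separate $y$-tracking bound. The period-wise sum must also include the $y$-steps, the radius-$r$ perturbation and the $\eta_H$-steps; ``steps are normalized'' only covers the descent steps. The paper's Lemma~\ref{lem:11} makes the same leap, but it still needs a real argument.
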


\subsection{Proof of the loss in Population}\label{appendix:population}

Before we start our proof, first, we define the following notations.
\begin{equation}
\mathcal{G}_\lambda(x,y) = \frac{y - \Pi_{\mathcal{Y}}(y + \lambda \nabla_y f(x,y))}{\lambda}, 
\quad \gamma_t = \mathcal{G}_\lambda(x_t, y_{t+1}),
\end{equation}
\begin{equation}
\alpha_t = v_t - \nabla_x f(x_t, y_{t+1}), 
\quad \theta_t = u_t - \nabla_y f(x_t, y_{t+1}).
\end{equation}

\begin{lemma}\label{population1}
   Set stepsize $\eta \leqslant \frac{1}{L \log (4/\delta_1)C_1}(\frac{\sqrt{d \log(1/\delta)}}{n \varepsilon})^{1/2}$,
    $\lambda = \frac{1}{6L}$, batchsize $S_1 =n$ and $S_2 \geqslant \log^2(4/\delta_1)\kappa (\frac{n \varepsilon}{\sqrt{d \log( 1/\delta)}})^{1/3}$,
    period $q = O((\frac{\sqrt{d \log(1/\delta)}}{n \varepsilon})^{1/3})$, inner loop $K = O( \kappa)$, perturbation radius $r \leqslant \frac{1}{L \log (4/\delta_1)}(\frac{\sqrt{d \log(1/\delta)}}{n \varepsilon})^{1/2}$ and average movement $D \leqslant \frac{1}{L^2 \log^2(4/\delta_1)C_1^2}(\frac{\sqrt{d \log(1/\delta)}}{n \varepsilon})$, where $C_1 = O(1)$ is a constant to be decided later. 
   The initial value of $y_0$ satisfies $\|G_\lambda(x_0, y_0)\| \leqslant 1/\kappa  ( \frac{\sqrt[]{d \log_{} (1/\delta )}}{n \varepsilon })^{1/2}$. With probability at least $1 - 4\delta_1$, for $\forall t$ we have $\|\alpha_t\| \leqslant  \mathcal{O}(\frac{1}{n^{1/3}}+(\frac{\sqrt{d \log (1/\delta)}}{n \varepsilon })^{1/2}) $, $\|\theta_t\| \leqslant  O(\frac{1}{n^{1/3}}+(\frac{\sqrt{d \log (1/\delta)}}{n \varepsilon })^{1/2})$ and $\|\gamma_t\| \leqslant  \mathcal{O}(\frac{1}{n^{1/3}}+(\frac{\sqrt{d \log (1/\delta)}}{n \varepsilon })^{1/2})$. Moreover, we have $\|v_t - \nabla\Phi(x_t)\| \leqslant \mathcal{O}(\frac{1}{n^{1/3}}+(\frac{\sqrt{d \log (1/\delta)}}{n \varepsilon })^{1/2})$.
\end{lemma}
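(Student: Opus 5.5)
The plan is a joint induction over outer iterations, organised period by period: the three error sequences $\alpha_t$, $\theta_t$, $\gamma_t$ are controlled simultaneously within each block $\{t_0,\dots,t_0+q-1\}$ with $t_0=\lfloor t/q\rfloor q$, and the argument restarts at each refresh. Write $\beta:=\frac{1}{n^{1/3}}+\bar\epsilon^{1/2}$, where $\bar\epsilon=\frac{\sqrt{d\log(1/\delta)}}{n\varepsilon}$. The final claim follows from the other three. By the triangle inequality displayed before Lemma~\ref{lem:pop-value-gradient},
\[
\|v_t-\nabla\Phi(x_t)\|\le\|\alpha_t\|+L\|y_{t+1}-y^\star(x_t)\|.
\]
Strong concavity together with the standard projected-gradient error bound gives $\|y-y^\star(x)\|\le c\,\kappa\lambda\,\|\mathcal{G}_\lambda(x,y)\|$ for $\lambda=\frac{1}{6L}$. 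Hence $L\|y_{t+1}-y^\star(x_t)\|=O(\|\gamma_t\|)$ with $\kappa$ absorbed into the constant, and everything reduces to bounding $\|\alpha_t\|$, $\|\theta_t\|$ and $\|\gamma_t\|$ by $O(\beta)$.

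\textbf{Step 1 (estimator errors $\alpha_t,\theta_t$).} I would work with the chain of inner iterates $w_{t,k}=(x_{t,k},y_{t,k})$ inside one period and apply the SPIDER deviation bound of Lemma~\ref{SPIDERp1} to this chain rather than to the outer sequence. Three error sources enter. First, the refresh error at $t_0$: sampling with $S_1=n$ contributes $\sigma/\sqrt{S_1}$ via vector Bernstein/Hoeffding, and the Gaussian term $\omega_{t_0}$ has norm $\tilde O(\sqrt d\,\sigma_\omega)$. Second, the martingale of the clipped difference terms: each has conditional deviation at most $\frac{L}{\sqrt{S_2}}\|w_{t,k}-w_{t,k-1}\|$, and a vector Azuma/Freedman inequality yields the $\log(4/\delta_1)$ factor. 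Third, the accumulated Gaussian noises $\zeta_{t,k},\chi_{t,k}$, of total norm $\tilde O(\sqrt{qK d}\,\sigma_\zeta)$. The movements $\|w_{t,k}-w_{t,k-1}\|$ are bounded by $\eta$ for the $x$-steps (normalised steps, or $\eta_H\|v_t\|$ with $\eta_H$ comparable during escape) and by $\lambda\|u_{t,k}\|$ for the $y$-steps. The quantity $\|u_{t,k}\|$ in turn is at most $\|\theta\|+\|\nabla_y f\|$, and $\|\nabla_y f\|$ is controlled by $\|\gamma\|$ through the induction hypothesis. Inserting the prescribed $\eta$, $S_2$, $q$, $K$ and the noise levels, each contribution should come out as $O(\beta)$. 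This is exactly the scaling for which $q$ was chosen: balancing $q\eta^2L^2/S_2$ against the privacy variance $qKd\sigma_\zeta^2$.

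\textbf{Step 2 (tracking $\gamma_t$).} Projected ascent with an inexact gradient $u_{t,k}$ on the $\mu$-strongly concave function $f(x_t,\cdot)$ contracts the distance to $y^\star(x_t)$ by a factor $(1-\mu\lambda/2)$ per step, up to an additive $O(\lambda\|\theta\|)$. With $K=O(\kappa)$ steps this gives a constant-factor contraction. Moving the anchor from $x_{t-1}$ to $x_t$ shifts $y^\star$ by at most $\kappa\|x_t-x_{t-1}\|\le\kappa\eta$, which is $O(\beta)$ for the chosen $\eta$; the $\arg\min$ selection of $s_t$ can only improve the bound. The result is a recursion
\[
\|\gamma_t\|\le\tfrac12\|\gamma_{t-1}\|+O(\kappa\eta L+\|\theta\|),
\]
started from the assumed bound on $\|\mathcal{G}_\lambda(x_0,y_0)\|$. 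It keeps $\|\gamma_t\|=O(\beta)$ for all $t$, and that bound feeds back into Step 1.

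\textbf{Step 3 (union bound and obstacle).} The high-probability events for the refresh, the martingale and the Gaussian tails, taken over all periods, are combined by a union bound, with the logarithms absorbed into $\log(4/\delta_1)$, giving probability $1-4\delta_1$. I expect the main obstacle to be the circularity between Steps 1 and 2: the step lengths in $y$ that drive the SPIDER variance depend on $\|u\|$, and hence on $\gamma$ and $\theta$. I would break it by a strong induction on $t$ with the explicit hypothesis that all three errors are at most $C_1\beta$, and choose $C_1$ large enough, consistent with the constants already appearing in $\eta$, $r$ and $D$, that the bound closes.
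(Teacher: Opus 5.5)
Your outline has the same skeleton as the paper's proof: a period-wise joint induction on $\alpha_t,\theta_t,\gamma_t$, martingale concentration for the SPIDER errors, tracking of $y^\star(x_t)$ through the inner loop, and the closing step $\|v_t-\nabla\Phi(x_t)\|\le\|\alpha_t\|+2\kappa\|\gamma_t\|$ via Lemma~\ref{y}. However, Step~2 does not deliver the recursion you state when $K=O(\kappa)$. The output $y_{t+1}=y_{t,s_t}$ is chosen by the smallest \emph{estimated gradient mapping}, so a contraction of the distance $\|y_{t,k}-y^\star(x_t)\|$ has to be converted twice. Going in, you use $\|y_t-y^\star(x_{t-1})\|\le\frac{2}{\mu}\|\gamma_{t-1}\|$. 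Coming out, after comparing $s_t$ with the last index, you use $\|\mathcal{G}_\lambda(x_t,y)\|\le(2/\lambda+L)\|y-y^\star(x_t)\|=13L\|y-y^\star(x_t)\|$. The result is $\|\gamma_t\|\le c\,\kappa\,\rho^{K-1}\|\gamma_{t-1}\|+\cdots$ with $\rho=1-\Theta(1/\kappa)$, so the factor $\tfrac12$ needs $K\gtrsim\kappa\log\kappa$. The $\arg\min$ ``only improves'' the estimated gradient mapping, not the distance your contraction controls. The paper avoids this loss with the SREDA-type argument. The function-value inequality \eqref{e17} is summed over the inner loop, which bounds $\sum_k\|\mathcal{G}_\lambda(x_t,y_{t,k})\|^2$ by the initial gap. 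Then min $\le$ average gives the coefficient $144\kappa/K$ in \eqref{egbound1}, and \eqref{egbound2} converts $\|\mathcal{G}_\lambda(x_t,y_{t,0})\|$ into $\|\gamma_{t-1}\|$ and $L\|x_t-x_{t-1}\|$. This summability argument is exactly what the selection rule is designed to exploit.

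Second, Step~3 misidentifies what closes the induction. Enlarging $C_1$ only shrinks $\eta,r,D$, and hence the $\|x_{i+1}-x_i\|^2$ terms; it cannot push a feedback gain below one. In \eqref{theta2} and \eqref{gamma}, $\|\theta\|^2$ and $\|\gamma\|^2$ feed into each other in two ways: through the inner $y$-movements in the SPIDER variance, and with a coefficient of order $\log(4/\delta_1)$ from $\theta_t$ into $\gamma_{t+1}$. These gains are made small only by the ratio $S_2/q\gtrsim\kappa\log^2(4/\delta_1)$ (with a large absolute constant) together with $K\gtrsim\kappa$.

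Your per-step bound $\|y_{t,k+1}-y_{t,k}\|\le\lambda\|u_{t,k}\|$ makes this worse. Controlling $\|\nabla_y f(x_t,y_{t,k})\|$ through $\gamma$ again routes through $\|y_{t,k}-y^\star(x_t)\|$, which inflates the movement-induced gain by roughly $\kappa^2$ beyond what the stated $S_2$ absorbs. Also, for constrained $\mathcal{Y}$ the gradient need not be small near $y^\star$; the right quantity is $\|\tilde{G}_\lambda(y_{t,k})\|$.

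The paper instead uses co-coercivity (Lemma~\ref{stronglyconvex}) to show that successive inner movements contract by $A=1-\frac{2\lambda\mu L}{\mu+L}+O(\frac{\lambda^2L^2}{S_2})$. This gives $\sum_k\|y_{t,k+1}-y_{t,k}\|^2\lesssim\frac{A}{1-A}\|y_{t,1}-y_{t,0}\|^2$ plus noise, with the first movement bounded as in \eqref{eybound}, and keeps the gain at order $\kappa q/S_2$.

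Finally, escape-phase steps are not bounded by $\eta$. The algorithm controls them only through $\|\xi\|\le r$ and the monitored average $\bar D$, which is why the paper uses $\|x_{t+1}-x_t\|^2\le\max\{\eta^2,r^2,D\}$.

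To repair the proposal, either adopt the paper's two devices, or accept $K\gtrsim\kappa\log\kappa$ and a larger $S_2/q$. Since $\kappa$ is a constant, this does not change the stated rates.
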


% \textbf{Lemma 1.} Set stepsize $\eta \leqslant \frac{\kappa^{-1}\alpha}{160\log(4/\delta_1)LC_1}$, $\lambda = \frac{1}{6L}$, batchsize $S_2 \geqslant 819200\log^2(4/\delta_1)\kappa\alpha^{-1}$, $S_1 \geqslant 204800\log^2(4/\delta_1)\sigma^2\kappa^2\alpha^{-2}$, period $q = \alpha^{-1}$, inner loop $K \geqslant 2304\kappa$, perturbation radius $r \leqslant \frac{\kappa^{-1}\alpha}{160\log(4/\delta_1)LC_1}$ and average movement $D \leqslant \frac{\kappa^{-2}\alpha^2}{25600\log^2(4/\delta_1)L^2C_1^2}$ where $C_1 = O(1)$ is a constant to be decided later. The initial value of $y_0$ satisfies $\|G_\lambda(x_0, y_0)\| \leqslant \frac{\kappa^{-1}\alpha}{4C_1}$. With probability at least $1 - 4\delta_1$, for $\forall t$ we have $\|\alpha_t\| \leqslant \frac{\kappa^{-1}\alpha}{160C_1}$, $\|\theta_t\| \leqslant \frac{\kappa^{-1}\alpha}{160C_1}$ and $\|\gamma_t\| \leqslant \frac{\kappa^{-1}\alpha}{4C_1}$. Moreover, we have $\|v_t - \nabla\Phi(x_t)\| \leqslant \frac{\alpha}{C_1}$.

\begin{proof}

    \textbf{Privacy Proof:} For the adjoint datasets $\mathcal{D}$ and $\mathcal{D}$' have $x$ and $x'$ in $q$-th position in difference. By the Lipschitzness and  Lipschitz gradient assumption, we can derive that 
\begin{equation}
    \Delta_1 = \| v_t-v_t'\|= \frac{1}{S_1}\|( \nabla_xF(x,y;\xi)-\nabla_xF(x,y;\xi'))\| \leqslant \frac{2M}{S_1} 
\end{equation}
\begin{equation}
    \Delta_2=  \| v_t-v_t'\|\leqslant  \frac{1}{S_2}\|( \nabla_xF(x,y_k,\xi)-\nabla_xF(x_,y_{k-1},\xi))-(\nabla _x F(x,y_k;\xi; )-\nabla _x F(x,y_{k-1};\xi '))\| \leqslant \frac{2\  \sqrt[]{2} L\|y_k-y_{k-1}\|}{S_2} 
\end{equation}
Similarly, the same sensitivity holds for the partial derivative with respect to $y$.

 By the Gaussian mechanism, as long as we add noise with variance  $\sigma_{\omega _t}= \sigma _{\tau _t} =\frac{C_1 M \sqrt{\log (1.25/\delta)}}{S_1 \varepsilon}$ and $\sigma _{\zeta _t} =\sigma _{\chi _t}  = \frac{2C _2L q_t  \sqrt{\log (1.25/\delta )} }{S_2 \varepsilon }\left\| y_{t,k}-y_{t,k-1} \right\|\ = \frac{2C _2L \sqrt{\log (1.25/\delta )} }{n\varepsilon }\left\| y_{t,k}-y_{t,k-1} \right\|\ $,  where we define the sampling rate $q_t = S_2/n$ then we can guarantee that the query for $u_t$ and $v_t$ are $(\varepsilon/2,\delta)$-DP respectively. Therefore, by the composition theorem, Algorithm \ref{alg:escape} satisfies $(\varepsilon ,\delta)$-DP. 

\noindent\textbf{Convergence Proof:}

First, we define the following notations.
\begin{align}
G_\lambda(x, y) &= \frac{y - \Pi_{\mathcal{Y}}(y + \lambda\nabla_y f(x, y))}{\lambda}, \quad \gamma_t = G_\lambda(x_t, y_{t+1}), \nonumber \\
\alpha_t &= v_t - \nabla_x f(x_t, y_{t+1}), \quad \theta_t = u_t - \nabla_y f(x_t, y_{t+1})  
\end{align}

% Additionally, we assume that the gradient of each component function $\nabla F(x, y; \xi)$ satisfies bounded variance, i.e.,
% % \begin{equation}
% % \|\nabla_x F(x, y; \xi) \| \leqslant M_x, \|\nabla_y F(x, y; \xi) \| \leqslant M_y
% % \end{equation}
% \begin{equation}
% \|\nabla F(x, y; \xi) \| \leqslant M
% \end{equation}
Then we have the following estimation of $\alpha_t$, $\theta_t$ and $\gamma_t$ in Lemma \ref{population1} to show their magnitude are bounded by $O(\kappa^{-1}\alpha)$ and $\|v_t - \nabla\Phi(x_t)\|$ is bounded by $O(\alpha)$.

     According to the definition of $\alpha_t$ and $\theta_t$, when $\bmod(t + 1, q) \neq 0$ and suppose that $\|\chi_t\|$ can be bounded by $B_{\chi_t}$ with probability $1-\delta_1$, we have 

\begin{equation}
    \begin{aligned}
        \alpha_{t+1} - \alpha_t &= \frac{1}{S_2} \sum_{k=1}^{s_t} \sum_{i=1}^{S_2} \left(\nabla_x F(x_{t+1}, y_{t+1,k}; \xi_{k,i}) - \nabla_x F(x_{t+1}, y_{t+1,k-1}; \xi_{k,i})\right) \nonumber \\
&\quad - \left(\nabla_x f(x_{t+1}, y_{t+1,k}) - \nabla_x f(x_{t+1}, y_{t+1,k-1})\right) + \frac{1}{S_2} \sum_{i=1}^{S_2} \nabla_x F(x_{t+1}, y_{t+1}; \xi_i) \nonumber \\
&\quad - \nabla_x F(x_t, y_{t+1}; \xi_i) - \left(\nabla_x f(x_{t+1}, y_{t+1}) - \nabla_x f(x_t, y_{t+1})\right) +\sum\limits_{k=1}^{s_t} \zeta_{t,k},
    \end{aligned}
\end{equation}

\begin{equation}
    \begin{aligned}
        \theta_{t+1} - \theta_t &= \frac{1}{S_2} \sum_{k=1}^{s_t} \sum_{i=1}^{S_2} \left(\nabla_y F(x_{t+1}, y_{t+1,k}; \xi_{k,i}) - \nabla_y F(x_{t+1}, y_{t+1,k-1}; \xi_{k,i})\right) \nonumber \\
&\quad - \left(\nabla_y f(x_{t+1}, y_{t+1,k}) - \nabla_y f(x_{t+1}, y_{t+1,k-1})\right) + \frac{1}{S_2} \sum_{i=1}^{S_2} \nabla_y F(x_{t+1}, y_{t+1}; \xi_i) \nonumber \\
&\quad - \nabla_y F(x_t, y_{t+1}; \xi_i) - \left(\nabla_x f(x_{t+1}, y_{t+1}) - \nabla_y f(x_t, y_{t+1})\right) +\sum\limits_{k=1}^{s_t} \chi_{t,k}.
    \end{aligned}
\end{equation}
% Taking the expectation over the above equation, we have the

By the property of sub-Gaussian variable  and union bound together with the property of sub-Gaussian, for $\forall t$, with probability at least $1 - 2\delta_1$ we have
\begin{equation}\label{14}
\begin{aligned}
    \|\alpha_{t+1}\|^2 \leqslant 4\log(4/\delta_1)&\left(\frac{\sigma ^2 }{S_1}+\sigma _{\omega _{\lfloor t/q\rfloor q }}^2+ \frac{4L^2}{S_2} \sum_{i=[t/q]q}^t \left(\|x_{i+1} - x_i\|^2 + \sum_{k=1}^{s_i} \|y_{i+1,k} - y_{i+1,k-1}\|^2\right)\right)\\
     + \log (1/(2\delta_1))&\left(\sum\limits_{i=\lfloor t/q \rfloor q}^{t} \sum\limits_{k=1}^{s_i} \sigma _{\xi _{i,k}}^2\right)
\end{aligned} 
\end{equation}

\begin{equation}\label{15}
    \begin{aligned}
        \|\theta_{t+1}\|^2 \leqslant 4\log(4/\delta_1)&\left(\frac{\sigma ^2 }{S_1}+\sigma _{\tau _{\lfloor t/q\rfloor q }}^2+ \frac{4L^2}{S_2} \sum_{i=\lfloor t/q \rfloor q}^t \left(\|x_{i+1} - x_i\|^2 + \sum_{k=1}^{s_i} \|y_{i+1,k} - y_{i+1,k-1}\|^2\right)\right) \\
+ \log (1/(2\delta_1))&\left(\sum\limits_{i=\lfloor t/q \rfloor q}^{t}\sum\limits_{k=1}^{s_i} \sigma _{\chi _{i,k}}^2\right)
    \end{aligned}
\end{equation}

% We define
% \begin{equation}\nonumber
%     \begin{aligned}
%         \Delta_{t,k} &= \langle y_{t,k} - y_{t,k-1}, u_{t,k} - u_{t,k-1} - (\nabla_y f(x_t, y_{t,k}) - \nabla_y f(x_t, y_{t,k-1}))\rangle \\
% &= \frac{1}{S_2} \sum_{i=1}^{S_2} \langle y_{t,k} - y_{t,k-1}, \chi_t +\nabla_y F(x_t, y_{t,k}; \xi_{k,i}) - \nabla_y F(x_t, y_{t,k-1}; \xi_{k,i}) \\
% &\quad - (\nabla_y f(x_t, y_{t,k}) - \nabla_y f(x_t, y_{t,k-1}))\rangle 
%     \end{aligned}
% \end{equation}

Then by Lemma \ref{stronglyconvex} we can obtain
\begin{equation}\nonumber
    \begin{aligned}
        \left\| y_{t,k+1}-y_{t,k} \right\| ^2 \leqslant & \left\| y_{t,k}-y_{t,k-1} \right\| ^2 -2 \lambda  \left[\frac{\mu L}{\mu +L}\left\| y_{t,k}-y_{t,k-1} \right\| ^2+ \frac{1}{\mu +L}\left\| \frac{1}{S_2}\sum\limits_{i=1}^{S_2} (\nabla _yF(x_t,y_{t,k})-\nabla _yF(x_t,y_{t,k-1}) )\right\| ^2\right]\\
        & + \frac{\lambda ^2 L^2}{S_2}\left\| y_{t,k}-y_{t,k-1} \right\| ^2+ \lambda ^2 \sigma _{\chi _{t,k}}^2 + 2 \lambda \left\langle y_{t,k}-y_{t,k-1}, \chi _{t,k} \right\rangle .
    \end{aligned}   
\end{equation}
Then summing over $k$, with probability at least $1-\delta _1$, we have 
\begin{equation}\nonumber
    \begin{aligned}
        \sum\limits_{k=1}^{s_t-1} \left\| y_{t,k}-y_{t,k-1} \right\| ^2 \leqslant & \sum\limits_{k=1}^{s_t-1} \left(1-\frac{2 \lambda  \mu L}{\mu + L }+ \frac{\lambda ^2 L^2}{S_2}\right)\left\| y_{t,k}-y_{t,k-1} \right\| ^2+ A_{\sigma }\sum\limits_{k=1}^{s_t-1}  \left\| y_{t,k}-y_{t,k-1} \right\|  + \lambda ^2\log (1/\delta _1)\sum\limits_{k=1}^{s_t-1} \sigma _{\chi _{t,k}}^2,
    \end{aligned}
\end{equation}
where we denote $A_{\sigma } =2\ \sqrt[]{2} \sigma _{\chi _{t,k}} \sqrt[]{\log_{} (1/\delta _1)} $.

A little adjustment gives

\begin{equation}\nonumber
    \begin{aligned}
        \sum\limits_{k=1}^{s_t-1}  \left\| y_{t,k+1}-y_{t,k} \right\| ^2 \leqslant  & \sum\limits_{k=0}^{s_t-1} \left(1-\frac{2 \lambda \mu  L }{\mu + L}+ \frac{\lambda ^2 L^2}{S_2}\right)\left\| y_{t,k+1}-y_{t,k} \right\| ^2 + \lambda ^2\log (1/\delta _1)\sum\limits_{k=1}^{s_t-1} \sigma _{\chi _{t,k}}^2.
    \end{aligned}
\end{equation}

Implement transformation from both sides, then we can get
\begin{equation}\nonumber
    \begin{aligned}
        \sum\limits_{k=1}^{s_t-1} \left(\frac{2 \lambda \mu L}{\mu +L}-\frac{\lambda ^2 L^2}{S_2}\right)\left\| y_{t,k+1}-y_{t,k} \right\| ^2 \leqslant & \left\| y_{t,1}-y_{t,0} \right\| ^2A + \lambda ^2\log (1/\delta _1)\sum\limits_{k=1}^{s_t-1} \sigma _{\chi _{t,k}}^2  + A_{\sigma }\sum\limits_{k=1}^{s_t-1}  \left\| y_{t,k}-y_{t,k-1} \right\|,
    \end{aligned}
\end{equation}
where we define $A := 1-\frac{2 \lambda \mu  L }{\mu + L}+ \frac{\lambda ^2 L^2 \log_{} (1/\delta _1)}{S_2}$. 

From Lemma 12 in (Luo et al. (2020)) we know
\begin{equation}\label{eybound}
    \begin{aligned}
        \left\|\frac{y_{i,1} - y_{i,0}}{\lambda}\right\|^2 &\leqslant 3\|u_{i,0} - \nabla_y f(x_i, y_i)\|^2 + 3L^2\|x_i - x_{i-1}\|^2 + 3\|\gamma_{i-1}\|^2 \\
&\leqslant 9\|\theta_{i-1}\|^2 + 21L^2\|x_i - x_{i-1}\|^2 + 3\|\gamma_{i-1}\|^2 
    \end{aligned}
\end{equation}

Therefore, 
\begin{equation}\nonumber
    \begin{aligned}
    &\sum\limits_{i=\lfloor t/q \rfloor q}^{t} \sum\limits_{k=1}^{s_i} \left\| y_{i+1,k}-y_{i+1,k-1} \right\| ^2 \leqslant  \frac{\lambda ^2 A }{1-A} \sum\limits_{i=\lfloor t/q \rfloor q}^{t} \left\| \frac{y_{i+1,1}-y_{i+1,0}}{\lambda } \right\| ^2+ \sum\limits_{i=\lfloor t/q \rfloor q}^{t} \frac{\chi }{1-A}\\
    \leqslant & \frac{\lambda^2 A}{1-A} \sum\limits_{i=\lfloor t/q \rfloor q}^{t} (3 \left\| \theta _i \right\| ^2+ 7 L^2 \left\| x_{i+1}-x_i \right\| ^2+ \left\| r_i \right\| ^2)+ \sum\limits_{i=\lfloor t/q \rfloor q}^{t}  \frac{\chi }{1-A}+ A_{\sigma} \sum\limits_{i=t_0}^{t} \sum\limits_{k=1}^{s_i-1} \left\| y_{i,k}-y_{i,k-1} \right\|,
    \end{aligned}
\end{equation}
where we denote $\chi  := s_{i-1} \chi _{i,k}^2 \lambda ^2$.

Using the choice of $\lambda \leqslant \frac{1}{6L}$ we can further conclude
\begin{equation}\label{alpha2}
    \begin{aligned}
         \|\alpha_{t+1}\|^2 &\leqslant 4\log(4/\delta_1)\left( \frac{\sigma ^2}{S_1}+\sigma _{\omega _{\lfloor t/q\rfloor q }}^2+\frac{4L^2 \lambda ^2 A}{(1-A)S_2} \sum_{i=\lfloor t/q \rfloor q}^t \left(8L^2\|x_{i+1} - x_i\|^2 + 3\|\theta_i\|^2 + \|\gamma_i\|^2\right)\right)   \\
&  + 8\log (1/\delta _1 )qK \sigma _{\xi _{t,k}}^2+ A_{\sigma} \sum\limits_{m=t_0}^{t} \sum\limits_{k=1}^{s_m-1} \left\| y_{m,k}-y_{m,k-1} \right\| ^2\\
    \end{aligned}
\end{equation}
\begin{equation}\label{theta2}
    \begin{aligned}
        \|\theta_{t+1}\|^2 &\leqslant 4\log(4/\delta_1)\left( \frac{\sigma ^2}{S_1}+\sigma _{\tau _{\lfloor t/q\rfloor q }}^2+\frac{4L^2 \lambda ^2 A}{(1-A)S_2} \sum_{i=\lfloor t/q \rfloor q}^t \left(8L^2\|x_{i+1} - x_i\|^2 + 3\|\theta_i\|^2 + \|\gamma_i\|^2\right)\right)\\
&  + \log (1/\delta _1 )qK \sigma _{\chi _{t,k}}^2+ A_{\sigma} \sum\limits_{m=t_0}^{t} \sum\limits_{k=1}^{s_m-1} \left\| y_{m,k}-y_{m,k-1} \right\| ^2
    \end{aligned}
\end{equation}
Next we will estimate the bound of $\|\gamma_i\|$. Define
\begin{align}
\tilde{y}_{t,k+1} = \Pi_{\mathcal{Y}}(y_{t,k} + \lambda\nabla_y f(x_t, y_{t,k}))  
\end{align}

Then according to the proof of SREDA (Lemma 10 Eq. (9) in \cite{luo2020stochastic}, we have
\begin{equation}\label{e17}
    \begin{aligned}
        f(x_t, y_{t,k}) &\leqslant f(x_t, \tilde{y}_{t,k+1}) - \left(\frac{1}{2\lambda} - \frac{L}{2}\right)\|\tilde{y}_{t,k+1} - y_{t,k}\|^2 - \left(\frac{1}{3\lambda} - L\right)\|y_{t,k+1} - y_{t,k}\|^2 \\
&\quad + \lambda\|u_{t,k} - \nabla_y f(x_t, y_{t,k})\|^2 \\
&\leqslant f(x_t, \tilde{y}_{t,k+1}) - \left(\frac{1}{2\lambda} - \frac{L}{2}\right)\|\tilde{y}_{t,k+1} - y_{t,k}\|^2 - \left(\frac{1}{3\lambda} - L\right)\|y_{t,k+1} - y_{t,k}\|^2 \\
&\quad + 4\lambda \log(4/\delta_1)\left(\|u_{t,0} - \nabla_y f(x_t, y_{t,0})\|^2 + \frac{L^2}{S_2} \sum_{i=0}^{k-1} \|y_{t,i+1} - y_{t,i}\|^2+ k \sigma _{\chi_t}^2\right) 
    \end{aligned}
\end{equation}
where in the second inequality the property of sub-Gaussian is applied to $\|u_{t,k} - \nabla_y f(x_t, y_{t,k})\|^2$ which is similar to Eq. \ref{15} to get
\begin{align}\label{eubound}
\|u_{t,k} - \nabla_y f(x_t, y_{t,k})\|^2 &\leqslant 4\log(4/\delta_1)\left(\|u_{t,0} - \nabla_y f(x_t, y_{t,0})\|^2 + \frac{L^2}{S_2} \sum_{i=0}^{k-1} \|y_{t,i+1} - y_{t,i}\|^2+ \sum\limits_{k=1}^{K}  \sigma _{\chi_{t,k}}^2\right) 
\end{align}

Applying recursion on Eq. \ref{e17}, for any $k \leqslant K$ we have
\begin{equation}\nonumber
    \begin{aligned}
        f(x_t, y_{t,1}) &\leqslant f(x_t, y_{t,k}) - \sum_{i=1}^k \left(\frac{1}{2\lambda} - \frac{L}{2} - \frac{4L^2\lambda \log(4/\delta_1)}{S_2}\right)\|y_{t,i+1} - y_{t,i}\|^2 \\
&\quad + 4k\lambda \log(4/\delta_1)\left(\|u_{t,0} - \nabla_y f(x_t, y_{t,0})\|^2 + \frac{L^2}{S_2}\|y_{t,1} - y_{t,0}\|^2\right) \\
&\leqslant f(x_t, y_{t,k}) - 2L^2 \sum_{i=1}^k \|y_{t,i+1} - y_{t,i}\|^2 - L\lambda^2 \sum_{i=1}^k \|G_\lambda(x_t, y_{t,i})\|^2 \\
&\quad + 4k\lambda \log(4/\delta_1)\left(\|u_{t,0} - \nabla_y f(x_t, y_{t,0})\|^2 + \frac{L^2}{S_2}\|y_{t,1} - y_{t,0}\|^2+  \sigma _{\chi_t}^2\right) 
    \end{aligned}
\end{equation}

where we have used $\lambda \leqslant \frac{1}{6L}$ and the definition of $G_\lambda(x, y)$. Let $k = K$ we achieve
\begin{equation}\nonumber
    \begin{aligned}
        \sum_{k=1}^K \|G_\lambda(x_t, y_{t,k})\|^2 &\leqslant \frac{f(x_t, y^*(x_t)) - f(x_t, y_{t,1})}{L\lambda^2} - \frac{2L}{\lambda^2} \sum_{k=1}^K \|y_{t,k+1} - y_{t,k}\|^2  \\
&+ \frac{4K \log_{} (1/\delta _1)}{L \lambda } ( \left\| u_{t,0}-\nabla _y f(x_t,y_{t,0}) \right\| ^2 + \frac{L^2}{S_2}\left\| y_{t,1}-y_{t,0} \right\| ^2) + \frac{\sqrt[]{2}\log_{} (1/\delta _1)}{L \lambda }\sum\limits_{k=1}^{K} \sum\limits_{i=1}^{k-1} \sigma _{\chi _{t,k}}^2
    \end{aligned}
\end{equation}

Due to the definition of $\tilde{G}_\lambda(y_{t,k})$, we have
\begin{equation}\nonumber
    \begin{aligned}
        \left\| \tilde{G}_\lambda(y_{t,k}) - G_\lambda(x_t, y_{t,k}) \right\|  &= \frac{1}{\lambda^2}\|\Pi_{\mathcal{Y}}(y_{t,k} + \lambda u_{t,k}) - \Pi_{\mathcal{Y}}(y_{t,k} + \lambda\nabla_y f(x_t, y_{t,k}))\|^2 \\
&\leqslant \|u_{t,k} - \nabla_y f(x_t, y_{t,k})\|^2 
    \end{aligned}
\end{equation}

because of the non-expansion property of projection. Recall the selection of $s_t$. Then by Cauchy-Schwarz inequality, Eq. \ref{eubound} and $\lambda = \frac{1}{6L}$ we have
\begin{equation}\label{egbound1}
    \begin{aligned}
        &\|G_\lambda(x_t, y_{t,s_t})\|^2\\
         \leqslant& 2\|\tilde{G}_\lambda(y_{t,s_t})\|^2 + 2\|u_{t,s_t} - \nabla_y f(x_t, y_{t,s_t})\|^2 \\
\leqslant &\frac{2}{K} \sum_{k=1}^K \|\tilde{G}_\lambda(y_{t,k})\|^2 + 2\|u_{t,s_t} - \nabla_y f(x_t, y_{t,s_t})\|^2 \\
\leqslant &\frac{4}{K} \sum_{k=1}^K \left(\|G_\lambda(x_t, y_{t,k})\|^2 + \|u_{t,k} - \nabla_y f(x_t, y_{t,k})\|^2\right) + 2\|u_{t,s_t} - \nabla_y f(x_t, y_{t,s_t})\|^2 \\
\leqslant &\frac{4}{K} \sum_{k=1}^K \|G_\lambda(x_t, y_{t,k})\|^2 + \frac{16 \log(4/\delta_1)}{K}\sum\limits_{k=1}^{K} \left(\|u_{t,0} - \nabla_y f(x_t,  y_{t,0})\|^2 + \frac{L^2}{S_2}\|y_{t,i+1} - y_{t,i}\|^2+\sum\limits_{i=1}^{K-1} \sigma _{\chi _{t,k}}^2\right)\\
\quad & +8 \log (4/\delta _1)\left(\left\| u_{t,0}-\nabla _y f(x_t,y_{t,0}) \right\| ^2 + \frac{L^2}{S_2}\sum\limits_{i=0}^{s_t-1} \left\| y_{t,i+1}-y_{t,i} \right\|^2 +\sum\limits_{i=1}^{s_t-1} \sigma _{\chi _{t,k}}^2\right)\\
 \leqslant &\frac{4}{K}\left[\frac{F_0}{\lambda ^2}- \frac{2L}{\lambda ^2}\sum\limits_{k=1}^{K}\left\| y_{t,k+1}-y_{t,k} \right\| ^2+\frac{4K \log (1/\delta _1)}{L \lambda }(\left\| u_{t,0}- \nabla _y f(x_t,y_t,0) \right\| ^2+ \frac{L^2}{S_2}\left\| y_{t,1}-y_{t,0} \right\| ^2)+ \frac{4 \log (1/\delta _1)}{L \lambda }\sum\limits_{i=1}^{K} \sigma _{\chi _{t,k}}^2 \right]\\
 &+ \frac{16 \log(4/\delta_1)}{K}\sum\limits_{k=1}^{K} \left(\|u_{t,0} - \nabla_y f(x_t,  y_{t,0})\|^2 + \frac{L^2}{S_2}\|y_{t,i+1} - y_{t,i}\|^2+\sum\limits_{i=1}^{K-1} \sigma _{\chi _{t,k}}^2\right)\\
\quad & +8 \log (4/\delta _1)\left(\left\| u_{t,0}-\nabla _y f(x_t,y_{t,0}) \right\| ^2 + \frac{L^2}{S_2}\sum\limits_{i=0}^{s_t-1} \left\| y_{t,i+1}-y_{t,i} \right\|^2 +\sum\limits_{i=1}^{s_t-1} \sigma _{\chi _{t,k}}^2\right)\\
\leqslant &\frac{144\kappa}{K} \|G_\lambda(x_t, y_{t,0})\|^2 + \left(\frac{144\kappa}{K} + 120 \log(4/\delta_1)\right)\|u_{t,0} - \nabla_y f(x_t, y_{t,0})\|^2 \\
 &+ \frac{120 \log(4/\delta_1)L^2}{S_2}\|y_{t,1} - y_{t,0}\|^2 + 16 \log (1/\delta_1)  \sum\limits_{i=1}^{K-1} \sigma _{\chi _{t,k}}^2
    \end{aligned}
\end{equation}

(We need to note that actually $\sigma _{\chi _{t,k}}^2$ has nothing to do with index $k$ )
According to Lemma 8 in (Luo et al. (2020)) and Cauchy-Schwarz inequality we have
\begin{align}\label{egbound2}
\|G_\lambda(x_t, y_{t,0})\|^2 \leqslant 2L^2\|x_t - x_{t-1}\|^2 + 2\|\gamma_{t-1}\|^2  
\end{align}

Therefore, combining Eq. \ref{eybound}, Eq. \ref{egbound1} and Eq. \ref{egbound2}, for $\forall t$ we can conclude
\begin{equation}\label{gamma}
    \begin{aligned}
        \|\gamma_{t+1}\|^2 &\leqslant \left(\frac{288\kappa}{K} + \frac{10 \log(4/\delta_1)}{S_2}\right)\|\gamma_t\|^2 + \left(\frac{432\kappa}{K} + 390 \log(4/\delta_1)\right)\|\theta_t\|^2 
 + \left(\frac{1152\kappa}{K} + 750 \log(4/\delta_1)\right)L^2\|x_{t+1} - x_t\|^2 \\
& \quad +C_\sigma  \log (1/\delta_1)  \left(\frac{4}{K}\sum\limits_{i=1}^{K} 24\sigma _{\chi _{t,k}}^2+\sum\limits_{k=1}^{s_t-1} \sigma _{\chi _{t,k}}^2 +\frac{4}{K}\sum\limits_{i=1}^{K-1} 16\sigma _{\chi _{t,k}}^2\right) 
    \end{aligned}
\end{equation}
% \begin{remark}
%     Now we pause to look back at the noise we add in Eq \ref{alpha2}, Eq \ref{theta2} and Eq \ref{gamma}. It is worth noting that there is a trade-off between the two major terms, i.e., the cumulative sum $K \sigma^2$ and $\frac{1}{K}\sigma ^2$, the idea of trying to find the appropriate order of $K$ stems from the balance of this trade-off. 
% \end{remark}

% \begin{remark}
% This remark is on how to decide the parameter choice on $K$ $D $ and $q$.
%     We assume the order of $q$ by $n_q$. And further, we can assume similarly for the $-n_\theta$, $-n_\gamma $ and $-n_\alpha $, where we take $n_{(\cdot )} \geqslant 0$ even. 
% \end{remark}

% Having this at hand, we can consider calculate the private and non-private order separately. 

% Firstly, for the order of the noise, we have
% Note in \ref{gamma}, we have the order of $\gamma $ by 
% \begin{equation}\nonumber
%     n^{-2n_\gamma } \sim  n^{-2 n_\gamma - n_K}+ n^{-2n_\gamma -n_{S_2}}+ n^{-2 n_\theta -n_K}+ n^{-2 n_\theta }+ n^{-2 n_D-n_K}+ n^{-n_K} A_d^2
% \end{equation}

% Similarly, for \ref{theta2} we have
% \begin{equation}\nonumber
%     n^{-2 n_\theta } \sim n^{-S_2-2n_D} + n^{n_q-S_2-2n_\theta } + n^{n_q-S_2-2 n_\gamma } + n^{n_q+n_K}A_d^2
% \end{equation}

Before diving into the details of convergence analysis, we give a thorough analysis on the parameter setting. Quantified by the number of total samples, we have
\begin{equation}\nonumber
\frac{T}{q} \cdot  S_1+ T \cdot K \cdot S_2 \leqslant n.
\end{equation}
Therefore, we set $T = \mathcal{O}((\frac{n \varepsilon}{\sqrt{d \log(1/\delta)}})^{2/3})$, $S_1 = \mathcal{O}(n^{2/3})$, $ q = S_2 = \mathcal{O} (((\frac{n \varepsilon}{\sqrt{d \log(1/\delta)}})^{1/3}))$ 

Applying union bound, with probability at least $1 - 4\delta_1$, Eq. \ref{alpha2}, Eq. \ref{theta2} and Eq. \ref{gamma} hold for $\forall t$. In the descent phase we have $\|x_{t+1} - x_t\|^2 \leqslant \eta^2$. At the perturbation step we have $\|x_{t+1} - x_t\|^2 \leqslant r^2$. In the escaping phase, on average we have $\|x_{t+1} - x_t\|^2 \leqslant D$. Thus, we have
\begin{align}
\|x_{t+1} - x_t\|^2 \leqslant \max\{\eta^2, r^2, D\} \leqslant \frac{\sqrt{d \log (1/ \delta )}}{\log^2(4/\delta_1)\kappa^2 L^2 C_\Phi^2 n \varepsilon}  
\end{align}

According to the choices that $q = \mathcal{O}((\frac{n \varepsilon}{\sqrt{d \log(1/\delta)}})^{1/3})$, $K = O(\kappa)$, $S_1 =\mathcal{O}(n^{2/3})$ and $S_2 \geqslant \log^2(4/\delta_1)\kappa (\frac{n \varepsilon}{\sqrt{d \log(1/\delta)}})^{1/3}$, by induction we can prove for $\forall t$, the following bounds hold
\begin{align}
\|\alpha_t\|^2 &   \leqslant O( \frac{1}{n^{2/3}}+ (\frac{ d \log (1/\delta _1)}{n^2 \epsilon ^2})^{1/2})\\
%\frac{\alpha^2}{25600 \log(4/\delta_1)\kappa^2 C_\Phi^2} \leqslant \frac{\alpha^2}{25600\kappa^2 C_\Phi^2}  \\
\|\theta_t\|^2 &\leqslant O( \frac{1}{n^{2/3}}+(\frac{ d \log (1/\delta _1)}{n^2 \epsilon ^2})^{1/2})
\\
\|\gamma_t\|^2 &\leqslant O( \frac{1}{n^{2/3}}+(\frac{ d \log (1/\delta _1)}{n^2 \epsilon ^2})^{1/2})
\end{align} 
where the case of $t = 0$ is satisfied by the choice of $S_1$ and the PiSARAH initialization $\left\| \gamma _0 \right\| \leqslant  1/\kappa  ( \frac{\sqrt[]{d \log_{} (1/\delta )}}{n \varepsilon })^{1/2}$. By choosing $K = \mathcal{O}(\kappa)$, $D=\frac{\sqrt{d \log (1/\delta )}}{n \varepsilon }$  and take $d = \max \{d_1,d_2\}$ 
\begin{equation}
\begin{split}
\left\| v_t - \nabla\Phi(x_t) \right\|  &= \left\| v_t - \nabla_x f(x_t, y^*(x_t)) \right\|  \\
&= \underbrace{\left\| v_t - \nabla_x f(x_t, y_{t+1}) \right\| }_{=:\alpha _t} + \left\| \left(\nabla_x f(x_t, y_{t+1}) - \nabla_x f(x_t, y^*(x_t))\right) \right\| \\
& \leqslant \left\| \alpha _t \right\|  + L \left\| y_{t+1}-y^*(x_t) \right\| \\
&\leqslant \left\| \alpha _t \right\| + 2\kappa \left\| \gamma _t \right\| 
\end{split}
\end{equation}
where the last inequality holds by Lemma \ref{y} and we can further obtain $\|v_t - \nabla\Phi(x)\| \leqslant \|\alpha_t\| + 2\kappa\|\gamma_t\| \leqslant O( \frac{1}{n^{1/3}}+ (\frac{\sqrt[]{d \log (1/\delta)}}{n \varepsilon })^{1/2})$.

\end{proof}

Next we will show the result of the decrease of loss function value $\Phi(x)$ in the descent phase.

\begin{lemma}
    Given the parameter setting as follows, stepsize $\eta _H$, escaping threshold $t_{\text{thres}} = 2 \log(\tfrac{\eta_H \alpha_H L_\Phi}{C \rho_\Phi r_0})/\eta_H = \tilde{O}(\tfrac{1}{\eta_H \alpha_H})$, perturbation radius $r \leqslant R $ and average movement $\bar{D} \leqslant R^2/t_{thres}^2$ and $\alpha _H = \sqrt[]{\rho _\Phi \alpha }$ . Then for $\forall s $, if our algorithm does not break the escaping phase, then we have $\lambda_{\min } \geqslant - \alpha_H$ with probability $1-\delta _1-\delta _2$.   
\end{lemma}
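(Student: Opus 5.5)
We argue by contraposition. Assume $\lambda_{\min}(\nabla^2\Phi(x_{m_s})) < -\alpha_H$, and show that with probability at least $1-\delta_1-\delta_2$ the escaping phase is broken before $esc$ reaches $t_{\text{thres}}$. The proof combines three ingredients: a coupling argument showing the ``stuck region'' of perturbations is thin, a volume estimate for that region, and the displacement bound implied by the movement test.

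\textbf{Step 1: coupling.} We reuse the coupling from Lemma~\ref{lem:11}. Let $\mathbf e_1$ be the bottom eigenvector of $\mathcal H=\nabla^2\Phi(x_{m_s})$. Consider two escaping trajectories started from $x_{m_s}+\xi$ and $x_{m_s}+\xi'$, where $\xi-\xi'=r_0\mathbf e_1$ and $r_0=\delta_2 r/\sqrt{d}$.

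Write $w_t=x_t-x_t'$. It obeys $w_{t+1}=(I-\eta_H\mathcal H)w_t-\eta_H(\Delta_t w_t+\nu_t)$, where $\|\Delta_t\|\le\rho_\Phi R$ as long as both iterates stay in $B_{x_{m_s}}(R)$. The gradient-estimation error $\nu_t$ is controlled uniformly in $t$ with probability $1-4\delta_1$: by Lemma~\ref{population1} in the population case, or by the ERM deviation bound.

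An induction then gives $\|q_t\|\le\|p_t\|/2$. Hence $\|w_t\|\ge\tfrac12(1+\eta_H\alpha_H)^{t-m_s-1}r_0$, and by the choice $t_{\text{thres}}=\tilde O(1/(\eta_H\alpha_H))$ this exceeds $2R$ before time $m_s+t_{\text{thres}}$. So at least one of the two coupled sequences leaves $B_{x_{m_s}}(R)$.

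\textbf{Step 2: volume bound.} Let $\mathcal S$ be the set of perturbations $\xi$ whose trajectory stays in the ball for all $t\le m_s+t_{\text{thres}}$. By Step 1, $\mathcal S$ has width at most $r_0$ along $\mathbf e_1$. Since $\xi$ is uniform on $B_0(r)$,
\[
\Pr(\xi\in\mathcal S)\le \frac{r_0V_{d-1}(r)}{V_d(r)}\le \frac{\sqrt d\,r_0}{r}\le\delta_2 .
\]
A union bound over the two failure events yields: with probability at least $1-4\delta_1-\delta_2$, some $t\le m_s+t_{\text{thres}}$ has $\|x_t-x_{m_s}\|\ge R$. The constants in $\delta_1$ are absorbed by rescaling.

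\textbf{Step 3: contradiction with not breaking.} Suppose the algorithm does not break the escaping phase. Then $D\le(t-m_s)\bar D$ at every escape step, with increments $\|x_{j+1}-x_j\|^2=\eta_H^2\|v_j\|^2$. By Cauchy--Schwarz,
\[
\|x_t-x_{m_s+1}\|\le\sqrt{(t-m_s)\sum_j\|x_{j+1}-x_j\|^2}\le (t-m_s)\sqrt{\bar D}\le t_{\text{thres}}\sqrt{\bar D}\le R .
\]
Adding the perturbation length $r\le R$ (taking $r,\bar D$ with a small constant margin) gives $\|x_t-x_{m_s}\|<R$ for all $t\le m_s+t_{\text{thres}}$. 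This contradicts Step 2 on the high-probability event. Therefore $\lambda_{\min}\ge-\alpha_H$ with probability at least $1-\delta_1-\delta_2$.

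\textbf{Main obstacle.} The hard part is the inductive bound on $q_t$ in Step 1. It requires the accumulated error $\eta_H\sum_\tau(1+\eta_H\alpha_H)^{t-\tau}\|\nu_\tau\|$ to stay below $\tfrac14(1+\eta_H\alpha_H)^{t-m_s}r_0$. Since $\nu_t$ does not shrink with $w_t$, this needs $\|\nu_\tau\|\lesssim\alpha_H r_0$.

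Our plan is to invoke the uniform estimator bound, $\|\nu_\tau\|=\tilde O(\alpha)$, and choose $r_0$, $R$, $\eta_H$ so that $\alpha\lesssim\alpha_H r_0$ and $\rho_\Phi R\,t_{\text{thres}}\eta_H\le\tfrac14$. Both the ERM and the population rates feed in only through this single quantity.
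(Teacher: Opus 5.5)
Your three steps match the paper's three steps: the coupling via Lemma~\ref{lem:11}/Lemma~\ref{lem:10}, the thin stuck region with $\Pr(\xi\in\mathcal S)\le\sqrt d\,r_0/r\le\delta_2$, and Cauchy--Schwarz against the movement test. Your constant-margin treatment of $x_{m_s}$ versus $x_{m_s+1}$, and your use of $\rho_\Phi R$ for $\|\Delta_t\|$, are cleaner than the paper's. The gap is the step you flag as the main obstacle: the plan you give for it cannot be carried out.

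Let $\bar\nu:=\max_\tau\|\nu_\tau\|$. The $\nu$-part of $q_{t+1}$ is at most $\eta_H\bar\nu\sum_{j=0}^{t-m_s-1}(1+\eta_H\gamma)^j\le(\bar\nu/\gamma)(1+\eta_H\gamma)^{t-m_s}$. So the induction needs $\bar\nu\le\gamma r_0/4$, with $\gamma$ possibly as small as $\alpha_H$. This is not slack in the estimate: a constant error $\nu_t=\gamma r_0\mathbf{e}_1$ makes $w_t\equiv r_0\mathbf{e}_1$ a fixed point of the recursion. Your other constraints cap $\alpha_H r_0$:
\begin{itemize}
\item $r_0=\delta_2 r/\sqrt d\le\delta_2 R/\sqrt d$;
\item growth from $r_0$ to $2R$ needs $\eta_H t_{\text{thres}}\ge\log(4R/r_0)/\alpha_H$ (the literal $2\log(\cdot)/\eta_H$ in the statement does not give this for small $\alpha_H$);
\item then $\rho_\Phi R\,\eta_H t_{\text{thres}}\le\frac14$ forces $R\le\alpha_H/(4\rho_\Phi\log(4R/r_0))$.
\end{itemize}
Hence
\[
\alpha_H r_0\;\le\;\frac{\delta_2\,\alpha_H^2}{4\sqrt d\,\rho_\Phi\log(4R/r_0)}\;=\;\frac{\delta_2\,\alpha}{4\sqrt d\,\log(4R/r_0)},
\]
so $\|\nu_\tau\|=\tilde O(\alpha)$ and $\alpha\lesssim\alpha_H r_0$ are incompatible. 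With only a uniform bound $e$ on $\|v_t-\nabla\Phi(x_t)\|$, the argument proves the lemma only when $\alpha\gtrsim\sqrt d\,e\,\log(4R/r_0)/\delta_2$. That is a $\tilde\Omega(\sqrt d/\delta_2)$ loss against the rates claimed in Theorems~\ref{thm:erm} and~\ref{thm:pop-main}.

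The paper's own proof has the same hole. It passes from $\|\nu_\tau\|\lesssim\bar\epsilon^{2/3}$ directly to a contribution of $\tfrac14 r_0$, so citing Lemma~\ref{population1} or the ERM deviation bound does not close it. What is missing is an estimate in which $\nu_t$ shrinks with the gap between the two runs, instead of being bounded uniformly.

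One route is to fix all randomness other than $\xi$ and drive both trajectories with the same minibatches and the same Gaussian draws $\omega_t,\tau_t,\zeta_{t,k},\chi_{t,k}$. The privacy noise then cancels in $v_t-v_t'$. The residual becomes a small multiple of the accumulated gaps $\|x_\tau-x_\tau'\|$ and $\|y_{\tau,k}-y_{\tau,k}'\|$: an $L/\sqrt{S_2}$-type factor from sampling, as in variance-reduced saddle-escape analyses such as SSRGD, plus an inner-loop contraction factor for the $y$-tracking bias. It can then be absorbed together with $\Delta_t w_t$, at the price of conditions on $S_2$ and $K$.

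Making this rigorous here would also require handling:
\begin{itemize}
\item clipping;
\item the discontinuous selection $s_t=\arg\min_k\|\tilde G_\lambda(y_{t,k})\|$ (harmless when $K=1$);
\item the trajectory-dependent noise scale $\sigma_{\zeta}\propto\|y_{t,k}-y_{t,k-1}\|$ used in the population privacy calibration.
\end{itemize}
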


\begin{proof}
Let $\{x_t\}, \{x_t'\}$ be two coupled sequences by running \ref{alg:escape} algorithm from 
$x_{m_s+1} = x_{m_s} + \xi$ and $x_{m_s+1}' = x_{m_s}' + \xi'$ with 
$x_{m_s+1} - x_{m_s+1}' = r_0 \mathbf{e}_1$, where $\xi, \xi' \in B_0(r)$, 
$r_0 = \frac{\delta_2 r}{\sqrt{d}}$ and $\mathbf{e}_1$ denotes the smallest eigenvector direction of 
$\nabla^2 \Phi(x_{m_s})$. When $\lambda_{\min}(\nabla^2 \Phi(x_{m_s})) \leqslant -(\frac{\sqrt[]{d \log (1/\delta )}}{n \varepsilon })$, 
by Lemma \ref{lem:10} we have
\[
\max_{m_s < t \leqslant m_s + t_{\text{thres}}} 
\{ \|x_t - x_{m_s}\|, \|x_t' - x_{m_s}\|\} \geqslant 
R
\]
with probability at least $1 - 4\delta_1$.  

Let $\mathcal{S}$ be the set of $x_{m_s+1}$ that will not generate a sequence moving out of the ball with center $x_{m_s}$ and radius $r$. Then the projection of $\mathcal{S}$ onto direction $\mathbf{e}_1$ should not be larger than $r_0$. By integration, we can calculate the volume of the ball and stuck region in $d$-dimension and further check that the probability of $x_{m_s+1} \in \mathcal{S}$ is smaller than $\delta_2$ as $\xi$ is drawn from uniform distribution, which is shown in Eq.~(\ref{42}):
\begin{equation}\label{42}
Pr(x_{m_s+1} \in \mathcal{S}) \leqslant \frac{r_0 V_{d-1}(r)}{V_d(r)} 
\leqslant \frac{\sqrt{d} r_0}{r} \leqslant \delta_2
\end{equation}
where $V_d(r)$ is the volume of $d$-dimension ball with radius $r$.  
Applying union bound, with probability at least $1 - 4\delta_1 - \delta_2$ we have
\begin{equation}
\exists m_s < t \leqslant m_s + t_{\text{thres}}, \quad 
\|x_t - x_{m_s+1}\| \geqslant R.
\end{equation}

If  Algorithm \ref{alg:escape} does not break the escaping phase, then for $\forall m_s < t \leqslant m_s + t_{\text{thres}}$ we have
\begin{equation}
\|x_t - x_{m_s+1}\| < \sqrt{ (t - m_s) 
\sum_{i=m_s+1}^{t-1} \|x_{i+1} - x_i\|^2 }
\leqslant (t - m_s)\sqrt{\bar{D}}
\end{equation}
which is derived by Cauchy-Schwartz inequality. By the choice of parameters 
$t_{\text{thres}}$ and $\bar{D}$, we have
\begin{equation}
\|x_t - x_{m_s+1}\| < t_{\text{thres}} \sqrt{\bar{D}} \leqslant 
R.
\end{equation}

Therefore, when $\lambda_{\min}(\nabla^2 \Phi(x_{m_s})) \leqslant -\alpha _H$, 
with probability at least $1 - 4\delta_1 - \delta_2$ our \ref{alg:escape} algorithm will break the escaping phase.
\end{proof}

% Take $A_d = \frac{\sqrt[]{d \log (1/\delta )}}{n \varepsilon }$, then we have the following guarantee:
% \begin{lemma}
% In the descent phase, let stepsize $\eta  \leqslant \frac{\sqrt[]{d \log (1/\delta )}}{2L_\Phi n \varepsilon }$, then  for $\forall s $ we have
%     \begin{equation}\nonumber
%         \Phi (x_{m_s})- \Phi (x_{t_s}) \geqslant (m_s-t_s) (\frac{\eta }{n^{2/3}}- \eta n^{2/3} A_d^{3/4}).
%     \end{equation}

% \end{lemma}

% \begin{proof}
% Let $\eta _t = \eta / \left\| v_t \right\| $, then we have
% \begin{equation}\nonumber
%     \begin{aligned}
%         \Phi (x_{t+1}) \leqslant & \Phi (x_{t}) + \langle \nabla \Phi (x_t),x_{t+1}-x_t \rangle  + \frac{L_{\Phi}}{2}\left\| x_{t+1}-x_t \right\| ^2\\
%         \leqslant & \Phi (x_t)- (\frac{\eta _t}{2}-\frac{L_\Phi \eta _t^2}{2})\left\| v_t \right\| ^2 + \frac{\eta _t}{2}\left\| v_t-\nabla (x_t) \right\| ^2 \leqslant \Phi (x_t)-\eta (\frac{\sqrt[]{d \log (1/\delta )}}{n \varepsilon })^{2/3},
%     \end{aligned}
% \end{equation}
% where the first inequality holds because of Assumption 1. The second is derived by AM-GM inequality. Lemma 1 and the condition $\left\| v_t \right\| \geqslant \frac{\sqrt[]{d \log (1/\delta )}}{n \varepsilon }$  lead to the last inequality.
% \end{proof}
\begin{lemma}\label{lem:10}
Set stepsize $\eta_H \leqslant \frac{r_0}{2 \left\|v_t \right\| } \leqslant \frac{1}{2}(\frac{\sqrt[]{d}}{n \varepsilon })^{-1}r_0$, and $R = \frac{1}{2 L_{\Phi }\eta _H} = \frac{1}{L_{\Phi }r_0} (\frac{\sqrt[]{d}}{n \varepsilon })$,  
perturbation radius $r \leqslant \tfrac{L_\Phi \eta_H \alpha_H}{C \rho_\Phi}$ and threshold 
$t_{\text{thres}} = 2 \log(\tfrac{\eta_H \alpha_H L_\Phi}{C \rho_\Phi r_0})/\eta_H = \tilde{O}(\tfrac{1}{\eta_H \alpha_H})$, 
where $r_0 \leqslant r$ and $C = \tilde{O}(1)$. Suppose $-\lambda_{\min}(\nabla^2 \Phi(x_{m_s})) \leqslant -\alpha _H$. 

Let $\{x_t\}, \{x_t'\}$ be two coupled sequences by running Algorithm \ref{alg:escape} from 
$x_{m_s+1} = x_{m_s} + \xi$ and $x_{m_s+1}' = x_{m_s} + \xi'$ with 
$x_{m_s+1} - x_{m_s+1}' = r_0 \mathbf{e}_1$, where $\xi, \xi' \in B_0(r)$ and $\mathbf{e}_1$ denotes the smallest eigenvector direction of $\nabla^2 \Phi(x_{m_s})$. 
Then with probability at least $1 - 4\delta_1$ (for $\delta_1$ in Lemma \ref{population1}), we have
\begin{equation}
\max_{m_s < t \leqslant m_s + t_{\text{thres}}} 
\{ \|x_t - x_{m_s}\|, \|x_t' - x_{m_s}\|\} \geqslant 
R.
\end{equation}
\end{lemma}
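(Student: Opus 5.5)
We argue by contradiction, following the coupling argument already used for Lemma~\ref{lem:11} in the ERM case; the only change is that the estimator error is now controlled by Lemma~\ref{population1} instead of Lemma~\ref{dpspider1}. Suppose that for all $m_s<t\leqslant m_s+t_{\text{thres}}$ both $\|x_t-x_{m_s}\|<R$ and $\|x_t'-x_{m_s}\|<R$. Set $w_t=x_t-x_t'$ and $\nu_t=(v_t-\nabla\Phi(x_t))-(v_t'-\nabla\Phi(x_t'))$. In the escape phase the update is $x_{t+1}=x_t-\eta_H v_t$. Writing $\mathcal{H}=\nabla^2\Phi(x_{m_s})$ and $\Delta_t=\int_0^1[\nabla^2\Phi(x_t'+\theta w_t)-\mathcal{H}]\,d\theta$, this gives the recursion
\[
w_{t+1}=(I-\eta_H\mathcal{H})w_t-\eta_H(\Delta_t w_t+\nu_t).
\]
Unrolling it yields $w_{t+1}=p_{t+1}-q_{t+1}$, where $p_{t+1}=(I-\eta_H\mathcal{H})^{t-m_s}w_{m_s+1}$ and $q_{t+1}$ collects the accumulated perturbation terms. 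Since $w_{m_s+1}=r_0\mathbf{e}_1$ and $\mathcal{H}\mathbf{e}_1=-\gamma\mathbf{e}_1$ with $\gamma\geqslant\alpha_H$, we have $\|p_{t+1}\|=(1+\eta_H\gamma)^{t-m_s}r_0$, which grows exponentially.

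The core step is to show by induction that $\|q_t\|\leqslant\|p_t\|/2$ throughout the window. Assuming this holds up to $t$, we get $\|w_\tau\|\leqslant\tfrac32\|p_\tau\|$. Under the contrary hypothesis, Hessian-Lipschitzness gives $\|\Delta_\tau\|\leqslant\rho_\Phi R$. Lemma~\ref{population1} gives, with probability $1-4\delta_1$ and uniformly over all $t$, that $\|\nu_\tau\|\leqslant 2\max_t\|v_t-\nabla\Phi(x_t)\|=\widetilde{\mathcal{O}}(n^{-1/3}+(\sqrt{d\log(1/\delta)}/(n\varepsilon))^{1/2})$. Bounding $\|(I-\eta_H\mathcal{H})^{t-\tau}\|\leqslant(1+\eta_H\gamma)^{t-\tau}$, the $\Delta$-part of $q_{t+1}$ is at most $\eta_H t_{\text{thres}}\rho_\Phi R\cdot\tfrac32(1+\eta_H\gamma)^{t-m_s}r_0$. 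The noise part is at most $\eta_H\sum_\tau(1+\eta_H\gamma)^{t-\tau}\|\nu_\tau\|\leqslant\|\nu\|_{\max}(1+\eta_H\gamma)^{t-m_s}/\gamma$, using the geometric-sum identity $a^{t+1}-1=(a-1)\sum a^s$. Each of these must be at most $\tfrac14(1+\eta_H\gamma)^{t-m_s}r_0$. The first holds by the choices $R=1/(2L_\Phi\eta_H)$, $r\leqslant L_\Phi\eta_H\alpha_H/(C\rho_\Phi)$ and $t_{\text{thres}}=\widetilde O(1/(\eta_H\alpha_H))$, which make $\eta_H t_{\text{thres}}\rho_\Phi R$ small. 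The second needs $\|\nu\|_{\max}\leqslant\alpha_H r_0/4$; I would secure this with the stated constraint $\eta_H\leqslant r_0/(2\|v_t\|)$ and the fact that $\alpha_H=\sqrt{\rho_\Phi\alpha}$ with $\alpha$ of the order given in Lemma~\ref{population1}.

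Given the induction, $\|w_t\|\geqslant\tfrac12(1+\eta_H\gamma)^{t-m_s-1}r_0$. With $t_{\text{thres}}=2\log(\eta_H\alpha_H L_\Phi/(C\rho_\Phi r_0))/\eta_H$ and $\gamma\geqslant\alpha_H$, at $t=m_s+t_{\text{thres}}$ this exceeds $2R$. On the other hand, $\|w_t\|\leqslant\|x_t-x_{m_s}\|+\|x_t'-x_{m_s}\|<2R$, a contradiction. So at least one of the two coupled sequences leaves the ball of radius $R$ within $t_{\text{thres}}$ steps, on the $1-4\delta_1$ event supplied by Lemma~\ref{population1}.

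The main obstacle is the noise term: the perturbation $\|\nu_\tau\|$ does not shrink with $w_\tau$, so it must be dominated by $\alpha_H r_0$ at the very first step, while $r_0=\delta_2 r/\sqrt d$ is tiny. I expect this to hold only because of the chosen scalings: $\alpha_H\asymp\alpha^{1/2}$ dominates $\alpha$, and the $\eta_H$ constraint absorbs the remaining factors. The first thing I would check is that these parameter relations are mutually consistent with $L_\Phi\eta_H R\leqslant 3/4$. A secondary subtlety is that the coupled sequence $x'$ must use the same randomness as $x$, so that the high-probability bound of Lemma~\ref{population1} applies to both simultaneously; this is handled by a union bound, which accounts for the constant $4$ in $1-4\delta_1$.
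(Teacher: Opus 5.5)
You follow the paper's proof step for step: contrary hypothesis, coupling, $w_{t+1}=p_{t+1}-q_{t+1}$, and the induction $\|q_t\|\le\|p_t\|/2$. But you assert the two estimates that close the induction instead of checking them, and neither holds with the stated parameters.

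\textbf{The curvature term.} You need $\tfrac32\eta_H t_{\text{thres}}\rho_\Phi R\le\tfrac14$. With $R=1/(2L_\Phi\eta_H)$ and $t_{\text{thres}}=\widetilde{\Theta}(1/(\eta_H\alpha_H))$, this quantity equals $3\rho_\Phi t_{\text{thres}}/(4L_\Phi)=\widetilde{\Theta}(\rho_\Phi/(L_\Phi\eta_H\alpha_H))$. Since $\eta_H\lesssim 1/L_\Phi$ is needed anyway for $\|(I-\eta_H\mathcal{H})^k\|\le(1+\eta_H\gamma)^k$, it is at least of order $\rho_\Phi/\alpha_H=\sqrt{\rho_\Phi/\alpha}$. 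That blows up as $\alpha\to0$, and the bound on $r$ never enters it.

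In this argument the induction closes only with a localization radius $R\lesssim\alpha_H/(\rho_\Phi\log(R/r_0))$, paired with $t_{\text{thres}}\asymp\log(R/r_0)/(\eta_H\alpha_H)$. A consistent choice is $R=L_\Phi\eta_H\alpha_H/(C\rho_\Phi)$, which is the quantity in the bound on $r$ and inside the logarithm of $t_{\text{thres}}$. That pairing is also what your exit step needs. With $t_{\text{thres}}=2\log(X)/\eta_H$ taken literally, where $X=\eta_H\alpha_H L_\Phi/(C\rho_\Phi r_0)$, you only get $(1+\eta_H\alpha_H)^{t_{\text{thres}}}\le X^{2\alpha_H}$. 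For small $\alpha_H$ this is far below $4R/r_0$. The paper's write-up does not settle this either: it bounds $\tfrac32\eta_H\sum_\tau\|\Delta_\tau\|r_0$ by $\eta_H L_\Phi R r_0$, dropping the factor $t-m_s$.

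\textbf{The noise term.} This is the more serious gap, and it is exactly the obstacle you flagged. Your requirement $\|\nu\|_{\max}\le\alpha_H r_0/4$ does not involve $\eta_H$, because $\eta_H\sum_{\tau=m_s+1}^{t}(1+\eta_H\gamma)^{t-\tau}\le(1+\eta_H\gamma)^{t-m_s}/\gamma$. So the constraint $\eta_H\le r_0/(2\|v_t\|)$ cannot help.

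Moreover, $r_0\le r\le L_\Phi\eta_H\alpha_H/(C\rho_\Phi)$ with $\alpha_H^2=\rho_\Phi\alpha$ and $L_\Phi\eta_H\le1$ gives $\alpha_H r_0\le\alpha/C$ even if $r_0=r$. For the $r_0=\delta_2 r/\sqrt d$ that the volume argument requires, it gives $\alpha_H r_0\le\delta_2\alpha/(C\sqrt d)$. Lemma~\ref{population1} only gives $\|v_t-\nabla\Phi(x_t)\|=\widetilde{\mathcal{O}}(\alpha)$. So a bound of the form $\|\nu_\tau\|\le 2\max_t\|v_t-\nabla\Phi(x_t)\|$ is too weak by a factor of order $\sqrt d/\delta_2$, and no choice within the stated ranges repairs it.

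\textbf{The missing idea.} Couple the two runs through \emph{all} of the algorithm's randomness: the same minibatches and the same Gaussian draws $\omega_t,\tau_t,\zeta_{t,k},\chi_{t,k}$, so the runs differ only by the offset $r_0\mathbf{e}_1$. The privacy noise then cancels in $\nu_t$. What remains is the difference of sampling errors and of $y$-tracking errors. That is controlled by the distance between the two trajectories, through smoothness, the $1$-Lipschitz clipping and the inner-loop contraction. It can then be absorbed alongside $\Delta_\tau w_\tau$, provided the batch sizes and $K$ make its effective constant $\lesssim\alpha_H/\log(R/r_0)$.

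The cancellation also requires both runs to select the same inner index $s_t$. The data-dependent choice $\arg\min_k\|\tilde G_\lambda(y_{t,k})\|$ is not Lipschitz in the iterates, so that step needs separate treatment. The paper provides no alternative here: it asserts $\|\nu_\tau\|\le\sqrt{d\log(1/\delta)}/(n\varepsilon)$, which is not what Lemma~\ref{population1} states, and closes the estimate ``by the definitions of $\eta_H$ and $t_{\text{thres}}$''.
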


\begin{proof}
To prove this lemma, we assume the contrary:
\begin{equation}\label{conflicts}
\forall m_s < t \leqslant m_s + t_{\text{thres}}, \quad 
\|x_t - x_{m_s}\| < R, \quad 
\|x_t' - x_{m_s}\| < R.
\end{equation}

Define $w_t = x_t - x_t'$ and $\nu_t = v_t- \nabla \Phi(x_t) - (v_t' - \nabla \Phi(x_t'))$.  
We have
\begin{align}\label{recursion}
w_{t+1} &= w_t - \eta_H (x_t - x_t') w_t - \eta_H (\nabla \Phi(x_t) - \nabla \Phi(x_t')) - \eta_H \nu_t \notag \\
&= (I - \eta_H \mathcal{H}) w_t - \eta_H (\Delta_t w_t + \nu_t) 
% \tag{91}/
\end{align}
where
\begin{equation}
\mathcal{H} = \nabla^2 \Phi(x_{m_s}), \qquad 
\Delta_t = \int_0^1 \left[\nabla^2 \Phi(x_t' + \theta(x_t - x_t')) - \mathcal{H}\right] d\theta.
% \tag{92}
\end{equation}

Let
\begin{align}
p_{t+1} &= (I - \eta_H \mathcal{H})^{t-m_s} w_{m_s+1}, \qquad 
q_{t+1} = \eta_H \sum_{\tau = m_s+1}^t (I - \eta_H \mathcal{H})^{t-\tau} (\Delta_\tau w_\tau + \nu_\tau) 
% \tag{93}
\end{align}
and apply recursion to Eq.~(\ref{recursion}), we can obtain
\begin{equation}
w_{t+1} = p_{t+1} - q_{t+1}.
% \tag{94}
\end{equation}

Next, we will inductively prove
\begin{equation}
\|q_t\| \leqslant \|p_t\|/2, \quad \forall m_s < t \leqslant m_s + t_{\text{thres}}.
\end{equation}

First, when $t = m_s+1$ the conclusion holds since $\|q_{m_s+1}\| = 0$.  
Suppose the above equation is satisfied for $\tau \leqslant t$. Then we have
\begin{align}
\|w_\tau\| &\leqslant \|p_\tau\| + \|q_\tau\| \leqslant \tfrac{3}{2}\|p_\tau\| 
= \tfrac{3}{2}(1+\eta_H \gamma)^{\tau - m_s -1} r_0.
\end{align}

Then for the case $\tau = t+1$, by the above two equations we have
\begin{align}
\|q_{t+1}\| &\leqslant \eta_H (1+\eta_H \gamma)^{t - m_s} \cdot \frac{3}{2} \sum_{\tau=m_s+1}^t \|\Delta_\tau\| r_0 
+ \eta_H \sum_{\tau=m_s+1}^t (1+\eta_H \gamma)^{t-\tau}\|\nu_\tau\| \notag \\
&\leqslant (1+\eta_H \gamma)^{t-m_s} \Big( \eta _H L_{\Phi } R r_0 + \tfrac{1}{4}r_0 \Big)\\
&\leqslant \tfrac{1}{2}(1+\eta_H \gamma)^{t-m_s} r_0 = \|p_{t+1}\|/2 .
\end{align}
to get the above result, all we need is to assume that $L _{\Phi } \eta_H R  \leqslant \frac{3}{4}$. 

In the second inequality, we use Lipschitz Hessian  to obtain $\|\Delta_\tau\| \leqslant L _{\Phi }R$ and we use Lemma \ref{population1}  and the fact 
\[
a^{t+1} - 1 = (a-1)\sum_{s=0}^t a^s
\]
to obtain $\|\nu_\tau\| \leqslant \frac{\sqrt[]{d \log (1/\delta )}}{n \varepsilon }$ with probability $1 - 4\delta_1$.  
The last inequality can be achieved by the definitions of $\eta_H$ and $t_{\text{thres}}$.  

Now we have
\begin{equation}
\tfrac{1}{2}(1+\eta_H \gamma)^{t-m_s-1} r_0 
\leqslant \|w_t\| 
\leqslant \|x_t - x_{m_s}\| + \|x_t' - x_{m_s}\|
\end{equation}
which conflicts with Eq.~(\ref{conflicts}) due to the choice of $t_{\text{thres}}$.

Note that $\mathrm{e}_1$ is the eigenvector of Hessian H, i.e. $H \mathrm{e}_1 = -r \mathrm{e}_1$. Therefore, $P_{t+1} = (I-\eta _H H)^{t-m_s}$
\end{proof}
\section{Auxiliary Lemmas}

\begin{lemma}\label{stronglyconvex}
    Suppose $f $ is a  $\mu $-strongly convex function and has L-Lipschitz gradient. Then for any $x $ and $x'$ we have 
    \begin{equation}\nonumber
        \left\langle \nabla f(x)-\nabla f(x'),x-x' \right\rangle  \geqslant \frac{\mu L}{\mu +L}\left\| x-x' \right\| ^2+ \frac{1}{\mu +L} \left\| \nabla f(x)-\nabla f(x') \right\| ^2.
    \end{equation}
\end{lemma}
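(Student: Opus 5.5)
We prove the co-coercivity inequality of Lemma~\ref{stronglyconvex} by the standard reduction to a convex function whose gradient is $(L-\mu)$-Lipschitz, and then invoke co-coercivity for that function. First we dispose of the degenerate case $\mu = L$. Then $g(x):=f(x)-\frac{\mu}{2}\|x\|^2$ is convex and has $0$-Lipschitz gradient, so $\nabla f(x)-\nabla f(x') = \mu(x-x')$. Substituting this into the target inequality gives an identity: the left side is $\mu\|x-x'\|^2$, and the right side is $\frac{\mu^2}{2\mu}\|x-x'\|^2+\frac{\mu^2}{2\mu}\|x-x'\|^2=\mu\|x-x'\|^2$.

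Now assume $\mu<L$ and define $g(x)=f(x)-\frac{\mu}{2}\|x\|^2$. By $\mu$-strong convexity of $f$, $g$ is convex. Since $\nabla f$ is $L$-Lipschitz and $f$ is $\mu$-strongly convex, the map $\nabla g=\nabla f-\mu I$ satisfies $0\preceq \nabla^2 g\preceq (L-\mu)I$ in the twice-differentiable case. In general, $\frac{L-\mu}{2}\|x\|^2-g(x)=\frac{L}{2}\|x\|^2-f(x)$ is convex, because $\frac{L}{2}\|\cdot\|^2-f$ is convex whenever $\nabla f$ is $L$-Lipschitz. So $g$ is convex with $(L-\mu)$-smooth upper quadratic bound. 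This is exactly the hypothesis of the standard co-coercivity lemma (Baillon--Haddad / Nesterov Thm.~2.1.5):
\[
\langle \nabla g(x)-\nabla g(x'),x-x'\rangle \geqslant \tfrac{1}{L-\mu}\|\nabla g(x)-\nabla g(x')\|^2 .
\]
We prove this lemma in the usual way. Define $\phi(z)=g(z)-\langle\nabla g(x'),z\rangle$. It is convex and minimized at $x'$, and the smoothness upper bound applied at $x$ with step $-\frac{1}{L-\mu}\nabla\phi(x)$ gives $\phi(x')\leqslant\phi(x)-\frac{1}{2(L-\mu)}\|\nabla\phi(x)\|^2$. Swapping the roles of $x$ and $x'$ and adding the two inequalities yields the bound.

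Finally we substitute $\nabla g=\nabla f-\mu I$. Write $a=x-x'$, $b=\nabla f(x)-\nabla f(x')$, and $c=L-\mu$. The co-coercivity inequality reads $\langle b-\mu a,a\rangle\geqslant \frac1c\|b-\mu a\|^2$. Multiplying by $c$ and expanding gives
\[
c\langle b,a\rangle - c\mu\|a\|^2 \geqslant \|b\|^2-2\mu\langle b,a\rangle+\mu^2\|a\|^2 .
\]
Collecting terms gives $(c+2\mu)\langle a,b\rangle\geqslant \|b\|^2+\mu(c+\mu)\|a\|^2$. Since $c+2\mu=L+\mu$ and $c+\mu=L$, dividing by $L+\mu$ gives exactly the claimed inequality.

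The main obstacle is the middle step: justifying that $g$ is $(L-\mu)$-smooth, and not merely $(L+\mu)$-smooth as naive Lipschitz arithmetic would give. We handle it through the convexity of $\frac{L}{2}\|\cdot\|^2-f$ as above. Once that is in place, everything else is algebra.
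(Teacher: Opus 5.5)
Your proof is correct. The paper gives no proof of this auxiliary lemma; it treats it as the standard textbook fact (Nesterov's \emph{Introductory Lectures}, Thm.~2.1.12). Your argument is that textbook proof: shift by $\frac{\mu}{2}\|\cdot\|^2$, note that $g$ is convex with an $(L-\mu)$ upper quadratic bound, apply co-coercivity, and expand.

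Two points could be tightened. First, in the case $\mu=L$, your claim that $\nabla g$ is constant relies on the convexity of $\frac{L}{2}\|\cdot\|^2-f$, which you only prove later. That argument gives $g$ both convex and concave, hence affine, so you should state it before the degenerate case. Second, the descent step at $x-\frac{1}{L-\mu}\nabla\phi(x)$ uses that $f$ is defined on the whole space. That holds in the paper's setting, since strong concavity is assumed for all $y,y'$.
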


\begin{lemma}\label{y}
For any $y \in \mathcal{Y}$ we have
$$
\frac{\mu}{2} \|y - y^*(x_t)\| \leqslant \|\mathcal{G}_{\lambda}(x_t, y)\|.
$$
\end{lemma}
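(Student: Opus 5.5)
We reduce the claim to a statement about the projected gradient mapping at the fixed anchor $x_t$, writing $G:=\mathcal{G}_\lambda(x_t,y)$ and $y^\star:=y^*(x_t)$. Since $f(x_t,\cdot)$ is $\mu$-strongly concave and $L$-smooth on the convex set $\mathcal{Y}$, the maximizer $y^\star$ satisfies the fixed-point relation $y^\star=\Pi_{\mathcal{Y}}(y^\star+\lambda\nabla_y f(x_t,y^\star))$. Define $y^+:=\Pi_{\mathcal{Y}}(y+\lambda\nabla_y f(x_t,y))$, so that $y^+ = y-\lambda G$. The plan is to bound $\|y^+-y^\star\|$ by a contraction and then use the triangle inequality
\[
\|y-y^\star\|\le \|y-y^+\|+\|y^+-y^\star\| = \lambda\|G\| + \|y^+-y^\star\|.
\]

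For the contraction, we use nonexpansiveness of $\Pi_{\mathcal{Y}}$ together with the fixed-point relation:
\[
\|y^+-y^\star\|\le \|(y-y^\star)+\lambda(\nabla_y f(x_t,y)-\nabla_y f(x_t,y^\star))\|.
\]
We then expand the square and apply Lemma~\ref{stronglyconvex} to $g=-f(x_t,\cdot)$, which is $\mu$-strongly convex with $L$-Lipschitz gradient. This gives
\[
\|y^+-y^\star\|^2\le \Big(1-\tfrac{2\lambda\mu L}{\mu+L}\Big)\|y-y^\star\|^2-\Big(\tfrac{2\lambda}{\mu+L}-\lambda^2\Big)\|\nabla g(y)-\nabla g(y^\star)\|^2 .
\]
For $\lambda\le 2/(\mu+L)$, and in particular for the paper's choice $\lambda=1/(6L)$, the last term is nonpositive. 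Moreover $\frac{2\lambda\mu L}{\mu+L}\ge\lambda\mu$ because $\mu\le L$. Hence
\[
\|y^+-y^\star\|\le\sqrt{1-\lambda\mu}\,\|y-y^\star\|\le (1-\lambda\mu/2)\|y-y^\star\|.
\]

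Substituting into the triangle inequality and rearranging yields $(\lambda\mu/2)\|y-y^\star\|\le\lambda\|G\|$. Dividing by $\lambda$ gives $\frac{\mu}{2}\|y-y^\star\|\le\|\mathcal{G}_\lambda(x_t,y)\|$, which is the claim.

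The only point needing care is the step-size condition that makes the gradient-difference term nonpositive. This is exactly where $\lambda\le 1/(6L)\le 2/(\mu+L)$ is used. Apart from that, the argument is routine; in the unconstrained case it reduces to $\|\nabla_y f(x_t,y)\|\ge\mu\|y-y^\star\|$.
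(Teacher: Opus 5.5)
Your proof is correct. The paper gives no proof of Lemma~\ref{y}: it lists the lemma among the auxiliary results as a standard fact, and leaves the step-size hypothesis implicit (it is only invoked with $\lambda=1/(6L)$).

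The usual argument for this bound goes through function values rather than a contraction. Write $g=-f(x_t,\cdot)$, $G=\mathcal{G}_\lambda(x_t,y)$ and $y^+=y-\lambda G$. Combining the optimality condition of the projection, the descent lemma for $g$ between $y$ and $y^+$, and strong convexity between $y$ and $z\in\mathcal{Y}$ gives, for $\lambda\le 1/L$, the prox-gradient inequality $g(z)\ge g(y^+)+\langle G,z-y\rangle+\frac{\lambda}{2}\|G\|^2+\frac{\mu}{2}\|z-y\|^2$. Taking $z=y^\star$ and using $g(y^+)\ge g(y^\star)$ yields $\langle G,y-y^\star\rangle\ge\frac{\mu}{2}\|y-y^\star\|^2$, and Cauchy--Schwarz finishes. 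There the factor $\frac12$ comes from the strong-convexity term; in your route it comes from $\sqrt{1-\lambda\mu}\le 1-\lambda\mu/2$.

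What each route buys:
\begin{itemize}
\item Your version is shorter given Lemma~\ref{stronglyconvex}. It holds for every $y\in\mathbb{R}^{d_2}$, tolerates the slightly larger range $\lambda\le 2/(\mu+L)$, and gives a reusable contraction estimate for the exact projected ascent step.
\item Its cost is that Lemma~\ref{stronglyconvex} needs strong convexity and smoothness of $g$ on all of $\mathbb{R}^{d_2}$. Assumption~\ref{Assumption:stronglyconvex} and the $L$-Lipschitz gradient assumption do supply this.
\item The function-value route only uses these properties on $\mathcal{Y}$; this is where the hypothesis $y\in\mathcal{Y}$ enters. It also yields the stronger intermediate bound $\langle G,y-y^\star\rangle\ge\frac{\lambda}{2}\|G\|^2+\frac{\mu}{2}\|y-y^\star\|^2$.
\end{itemize}

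You are also right to single out the step-size condition. Some restriction on $\lambda$ is genuinely necessary, and it is missing from the lemma as stated. For $\mathcal{Y}=[0,1]$, $f(x,y)=-y^2/2$ (so $\mu=L=1$, $y^\star=0$) and $y=1$, one has $\mathcal{G}_\lambda(x,1)=1/\lambda$ for every $\lambda\ge 1$. Hence the claimed inequality $\frac{\mu}{2}|y-y^\star|=\frac12\le\frac1\lambda$ fails once $\lambda>2$.
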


\section{Experimental Details}\label{appendix:experiment}
\paragraph{Differential privacy.}
Each data record $(A_i,b_i)$ is treated as sensitive.
For each private gradient access, we clip per-sample gradients to $\ell_2$ norm at most $C=1$, average over a mini-batch, and add Gaussian noise calibrated to sensitivity $2C/B$ (where $B$ is the mini-batch size).
We fix the overall privacy budget to $(\varepsilon,\delta)=(2,10^{-6})$ and allocate per-query noise according to the total number of oracle calls made by each method (using advanced composition in the code). This allocation is necessary because DP-RGDA consumes more private gradient calls due to inner updates and variance-reduction updates.

\paragraph{Hyperparameters.}
We run for $T=400$ outer iterations with inner loop length $K=5$ and SPIDER refresh period $q_{\mathrm{period}}=10$.
We use batch sizes $b_1=200$ (refresh) and $b_2=50$ (incremental updates).
Step sizes are: DP-RGDA $(\eta_x,\eta_y)=(0.2,0.8)$, Sto-SPIDER $\eta_x=0.005$, and Ada-DP-SPIDER $\eta=0.02$.
For the curvature diagnostic, we estimate $\lambda_{\min}$ at every iteration using a finite-difference Hessian--vector product with step size $h=5\times10^{-4}$ and an iterative eigensolver (maxiter $500$, tolerance $10^{-4}$).
\end{document}